\documentclass{article}

\usepackage{microtype}
\usepackage{graphicx}
\usepackage{subcaption}
\usepackage{booktabs} 
\usepackage[numbers,sort&compress]{natbib}
\usepackage{algorithm}
\usepackage{algorithmic}

\usepackage{hyperref}

\usepackage[margin=1in]{geometry}

\usepackage{amsmath}
\usepackage{amssymb}
\usepackage{mathtools}
\usepackage{amsthm}
\usepackage{thm-restate}

\usepackage[capitalize,noabbrev]{cleveref}

\theoremstyle{plain}
\newtheorem{theorem}{Theorem}[section]

\newtheorem{lemma}[theorem]{Lemma}
\newtheorem{corollary}[theorem]{Corollary}
\theoremstyle{definition}
\newtheorem{definition}[theorem]{Definition}
\newtheorem{assumption}[theorem]{Assumption}
\theoremstyle{remark}
\newtheorem{remark}[theorem]{Remark}



\usepackage[disable,textsize=tiny]{todonotes}

\title{Shuffling-Aware Optimization for Private Vector Mean Estimation}
\author{
Shun Takagi\\
LY Corporation, Tokyo, Japan\\
\texttt{shutakag@lycorp.co.jp}
\and
Seng Pei Liew\\
LY Corporation, Tokyo, Japan\\
\texttt{sliew@lycorp.co.jp}
}

\begin{document}

\maketitle

\begin{abstract}
We study $d$-dimensional unbiased mean estimation in the single-message shuffle model, where each user sends a single privatized message and the analyzer only observes the shuffled multiset of reports. 
While minimax-optimal mechanisms are well understood in the local differential privacy setting, the corresponding notion of optimality after shuffling has remained largely unexplored. 
To address this gap, we introduce the recently proposed shuffle index and use it to formulate the post-shuffling mechanism design problem as an explicit optimization problem. 
We then establish a minimax lower bound on the achievable mean squared error in terms of the shuffle index, which implies that mechanisms that are optimal under LDP can become suboptimal once shuffling is applied. 
Finally, we construct an asymptotically minimax optimal mechanism in the high privacy regime, which as a consequence achieves a privacy-utility trade-off nearly identical to that of the central Gaussian mechanism.
\end{abstract}

\section{Introduction}
\label{sec:introduction}

Differential privacy (DP)~\cite{dworkDifferentialPrivacy2006b,dworkAlgorithmicFoundationsDifferential2013a} enables principled privacy-preserving data analysis.
In the central model, a trusted curator collects raw data and releases a
privatized aggregate, achieving strong utility at the cost of a strong trust
assumption.
In the local model (LDP)~\cite{duchiLocalPrivacyStatistical2013}, each user randomizes their data before communication, removing the need for trust but often degrading accuracy significantly.
The shuffle model~\cite{erlingssonAmplificationShufflingLocal2019,cheuDistributedDifferentialPrivacy2019,ballePrivacyBlanketShuffle2019,feldmanHidingClonesSimple2022} offers an appealing intermediate point: users send locally randomized messages through an anonymization layer (the shuffler), which permutes the reports before analysis.
By breaking the linkage between users and messages, shuffling amplifies privacy and can approach central-model utility.

We focus on the single-message randomize-then-shuffle setting~\cite{ballePrivacyBlanketShuffle2019}, where each user sends exactly one privatized message.
This is the simplest and most practical instantiation of the shuffle model, and it also serves as a basic component for understanding more complex multi-message constructions~\cite{asiPrivateVectorMean2024}.
Despite its importance, optimal mechanism design for fundamental statistical tasks in the single-message shuffle model remains poorly understood.

In this work we study $d$-dimensional mean estimation over the unit $\ell_2$ ball
$\mathbb B_2^d$.
Given inputs $x_1,\ldots,x_n\in\mathbb B_2^d$, the goal is to estimate the mean
$\frac1n\sum_{i=1}^n x_i$ from the shuffled multiset of privatized
messages.
While minimax-optimal mechanisms are well understood under LDP~\cite{asiOptimalAlgorithmsMean2022}, the analogous question after shuffling has remained largely unexplored: which local randomizer yields the best accuracy under a shuffled DP constraint?
A key difficulty is that privacy after shuffling is inherently global and depends on $n$ and on privacy amplification, making it hard to pose an explicit optimization problem in terms of the local randomizer alone.
As a result, existing approaches often design mechanisms for LDP and then apply amplification, implicitly assuming that local optimality transfers to the shuffled setting.

We address this gap using the recently proposed shuffle index~\cite{takagiAnalysisShufflingPure2026}, a scalar quantity determined solely by the structure of the local randomizer.
In the high privacy regime, the privacy profile~\cite{ballePrivacyAmplificationSubsampling2018a} of the shuffled mechanism is asymptotically governed by the shuffle index in a monotone way as $n\to\infty$, which allows us to replace a global $(\varepsilon,\delta)$ constraint by a local constraint.
This yields a clean post-shuffling mechanism design problem: minimize the worst-case mean squared error (MSE) among unbiased single-message protocols subject to a shuffle-index constraint.

Our main results show that this formulation leads to a sharp minimax theory.
We prove a minimax lower bound showing that any unbiased protocol with shuffle index $\chi$ must incur worst-case MSE at least $d\chi^2$.
This bound implies that LDP-optimal mechanisms can be suboptimal after shuffling.
In particular, we show that PrivUnit~\cite{bhowmickProtectionReconstructionIts2019}, which is minimax-optimal for mean estimation under LDP, fails to attain the shuffle-index lower bound by a constant factor in high dimensions.

Finally, we construct an explicit mechanism that asymptotically matches the
lower bound in the high privacy regime.
Our blanket-mixed Gaussian local randomizer outputs a Gaussian with mean $0$ with some probability and otherwise outputs a Gaussian centered at the user's input, together with a universal unbiased estimator.
We achieve MSE $d\chi^2(1+o(1))$ as $\chi\to \infty$, establishing asymptotic minimax optimality.
Moreover, we prove a central-limit-type correspondence: in the high privacy regime $\varepsilon_n=O(\sqrt{\log n/n})$, the privacy profile of our shuffled mechanism converges to that of the central Gaussian mechanism with the matching noise level, which sharpens earlier order-level and empirical observations in~\cite{chenPrivacyAmplificationCompression2023} by establishing a precise theoretical correspondence, matching constants.

\subsection{Related Work and Contributions}
\label{sec:related-work}

\paragraph{Lower bounds.}
For single-message shuffled mean estimation, most existing lower bounds are typically stated in a form that is uniform over $\varepsilon$~\cite{ballePrivacyBlanketShuffle2019,asiPrivateVectorMean2024} (i.e., $\Omega\left(d\,n^{-2/(d+2)}\right)$), which is well suited to the moderate-privacy regime but provides limited resolution in the high privacy scaling $\varepsilon_n = o(1)$ as $n\to\infty$.
In the high privacy regime, it is common to compare shuffled protocols against trivial lower bounds obtained by invoking minimax lower bounds for mean estimation of the central model~\cite{caiCostPrivacyOptimal2021}.
However, such central-model lower bounds ignore the shuffling structure entirely and can therefore be loose.
Moreover, central-model lower bounds are rarely suited to resolving fine-grained privacy–utility trade-offs (e.g., matching leading constants).

\paragraph{Upper bounds.}
In the single-message shuffle model, order-optimal protocols are known in the moderate-privacy regime $\varepsilon_n=\Theta(1)$ (see, e.g., \citet{ballePrivacyBlanketShuffle2019,asiPrivateVectorMean2024}).
In the high privacy scaling $\varepsilon_n = o(1)$, there are also constructions achieving the corresponding order-optimal rates up to constant factors~\cite{chenPrivacyAmplificationCompression2023}.
Moreover, the resulting MSE is suggestive of convergence up to constants to that of the central Gaussian mechanism as $\varepsilon_n$ decreases.
What has been missing, however, is a theory establishing whether and in what sense single-message shuffling can match the central Gaussian trade-off in the high privacy regime, and how the approach to the Gaussian limit depends quantitatively on the scaling of $\varepsilon_n$.

\citet{scottAggregationTransformationVectorValued2022} propose a single-message vector aggregation protocol, but their analysis incurs additional error terms stemming from coordinate/dimension sampling, which can make the resulting upper bounds loose in regimes where one aims for sharp constants.
While one can alternatively formulate and optimize protocols under an explicit communication constraint, as explored, for instance, by \citet{chenPrivacyAmplificationCompression2023}, in this work we primarily focus on the privacy-utility trade-off and do not attempt to optimize communication.

\paragraph{Multi-message shuffling.}
Multi-message shuffle protocols appear in (at least) two distinct forms.
In the first, each user decomposes their contribution into multiple messages
(multi-message per user), which are then shuffled jointly.
These additional messages can strictly increase what is achievable: in the
$1$-dimensional mean estimation, there are multi-message constructions whose
accuracy can surpass that of the central Gaussian mechanism at comparable privacy
levels \cite{ghaziDifferentiallyPrivateAggregation2021}.
For $d>1$, however, no comparably efficient multi-message-per-user protocols are
known that provably beat the central Gaussian mechanism.

In the second form, one repeats a single-message shuffled
primitive over multiple rounds and composes the privacy losses
(multi-round shuffling).
This is the dominant approach in $d>1$, where mean estimation is invoked repeatedly and one can exploit composition to convert many high privacy rounds into an overall moderate-privacy guarantee~\cite{asiPrivateVectorMean2024,chenPrivacyAmplificationCompression2023}.
Such multi-round constructions are known to approach the privacy-utility trade-off of the central model \cite{girgisRenyiDifferentialPrivacy2021a,erlingssonEncodeShuffleAnalyze2020}, which in turn highlights the importance of understanding the single-round high privacy single-message primitive sharply, since it governs the per-round error and the quality of the composed guarantee.

\paragraph{Contributions.}
\begin{itemize}
    \item 
    Using the shuffle index, we formulate post-shuffle mechanism design for unbiased single-message mean estimation as an explicit optimization problem, and prove a sharp minimax lower bound $\mathrm{MSE} \ge d\chi^2$ in the high privacy regime.

    \item 
    We show that LDP-minimax-optimal primitives (e.g., PrivUnit) can be strictly suboptimal after shuffling and construct a blanket-mixed Gaussian mechanism that is asymptotically minimax optimal, achieving $\mathrm{MSE} = d\chi^2(1+o(1))$ as $\chi \to \infty$.

    \item 
    In the high privacy scaling, we establish a Gaussian-limit correspondence: the privacy-utility trade-off of our shuffled mechanism converges to that of the central Gaussian mechanism with matching noise level, and we validate the theoretical predictions via numerical experiments.
\end{itemize}

\section{Preliminaries}

\subsection{Notations}
We write $\mathbb{R}$ for the set of real numbers. 
For $d\ge 1$, let $\mathbb{B}_2^d := \{x\in\mathbb{R}^d : \|x\|_2 \le 1\}$ and
$\mathbb{S}^{d-1} := \{u\in\mathbb{R}^d : \|u\|_2 = 1\}$ denote the unit $\ell_2$ ball and the unit sphere, respectively.
Throughout the paper, the input space is denoted by $\mathcal X$; we typically take $\mathcal X=\mathbb{B}_2^d$.
$\mathrm{Var}$ and $\mathbb{E}$ denote variance and expectation, respectively.
We use standard asymptotic notations $O(\cdot)$, $o(\cdot)$, $\Omega(\cdot)$, $\omega(\cdot)$, and $\Theta(\cdot)$.

\subsection{Single-message Shuffling}
We briefly review the single-message randomize-then-shuffle model~\cite{ballePrivacyBlanketShuffle2019}.

\begin{definition}[Local randomizer with a blanket distribution~\cite{ballePrivacyBlanketShuffle2019}]
A local randomizer is a randomized mechanism
$
  \mathcal R : \mathcal X \to \mathcal Y .
$
For each input $x\in\mathcal X$, we write $\mathcal R_x$ for the law of
the random output $\mathcal R(x)$.
We denote by $\mathcal R_x(y)$ the corresponding probability density function.
If there exists a scalar $\gamma\in(0,1]$, a probability density $\mathcal R_{\mathrm{BG}}$ on $\mathcal Y$, and a family of probability densities
$\{Q_x\}_{x\in\mathcal X}$ on $\mathcal Y$ such that, for every
$x\in\mathcal X$ and almost every $y\in\mathcal Y$,
\[
  \mathcal R_x(y)
  =
  \gamma\,\mathcal R_{\mathrm{BG}}(y)
  +
  (1-\gamma)\,Q_x(y),
\]
$\mathcal R_{\mathrm{BG}}$ is called a blanket
distribution of $\mathcal R$~\cite{takagiAnalysisShufflingPure2026}, and the
scalar $\gamma$ is called the blanket mass.
\end{definition}

We assume access to an ideal shuffler as a black box.

\begin{definition}[Single-message shuffling and shuffled mechanism]
\label{def:single-message-shuffle}
Let $\mathcal R$ be a local randomizer. Given a dataset $x_{1:n}\in\mathcal X^n$, each user $i\in[n]$ applies the
local randomizer to obtain a single message
$
  Y_i := \mathcal R(x_i)\in\mathcal Y.
$
The shuffler is a randomized map
$
  \mathcal S:\mathcal Y^n\to\mathcal Y^n
$
which samples a permutation $\pi$ uniformly at random from the symmetric
group on $[n]$ and outputs
$
  \mathcal S(y_1,\dots,y_n)
  := (y_{\pi(1)},\dots,y_{\pi(n)}).
$
The shuffled mechanism is defined as $\mathcal{M}=\mathcal{S} \circ \mathcal{R}^n$, where $\mathcal R^n$ denotes the application of $\mathcal R$ independently to each record.
\end{definition}

\paragraph{Unbiased estimators.}
Let $\mathcal{R}$ be a local randomizer.
A pair $(\mathcal{R}, \mathcal{A})$ is called unbiased if
\[
\mathbb E_{Y\sim \mathcal{R}_x}[{\mathcal{A}} (Y)] = x
\qquad\text{for all }x\in\mathcal X.
\]

In the dataset setting, we say that a pair $(\mathcal{R}^n, \mathcal{A})$ is unbiased if
\[
\mathbb E\!\left[\mathcal A\!\left(\mathcal S\circ\mathcal R^n(x_{1:n})\right)\right]
=
\frac{1}{n}\sum_{i=1}^n x_i
\qquad\text{for all } x_{1:n}\in\mathcal X^n.
\]

\subsection{Differential Privacy (DP)}

We recall the notion of DP and express it in a form based on the hockey-stick divergence.

\begin{definition}[Hockey-stick divergence]
Let $\mathcal Y$ be an output space.
Let $P$ and $Q$ be probability distributions on $\mathcal Y$ that admit densities $p$ and $q$, and fix a
parameter $\alpha \ge 1$.
The hockey-stick divergence of $P$ from $Q$ of order $\alpha$ is 
\[
  \mathcal D_{\alpha}(P\|Q)
  :=
  \int_{\mathcal Y} \bigl[p(y) - \alpha\,q(y)\bigr]_+\,dy,
\]
where $[u]_+ := \max\{u,0\}$.
\end{definition}

\begin{definition}[DP~\cite{dworkAlgorithmicFoundationsDifferential2013a, bartheDifferentialPrivacyComposition2013}]
A mechanism is a randomized map
$
  \mathcal M : \mathcal X^n \to \mathcal Z
$
that takes a dataset $x_{1:n}\in\mathcal X^n$ as input and outputs a
random element of $\mathcal Z$.
For each dataset $x_{1:n}$ we write $\mathcal M(x_{1:n})$ for the
corresponding output distribution on $\mathcal Z$, and we denote its
density by the same symbol when convenient.
For a mechanism $\mathcal M$, $\varepsilon\ge0$, and $\delta\in [0,1]$, $\mathcal M$ is $(\varepsilon,\delta)$-DP if and only if
\[
  \delta_\mathcal{M}(\varepsilon):=\sup_{x_{1:n}\simeq x'_{1:n}}
  \mathcal D_{e^\varepsilon}\bigl(\mathcal M(x_{1:n}) \,\big\|\, \mathcal M(x'_{1:n})\bigr)\leq \delta,
\]
where $x_{1:n}\simeq x'_{1:n}$ indicates that the datasets $x_{1:n}$ and $x'_{1:n}$ are \emph{neighboring}.
The function $\delta_\mathcal{M}(\varepsilon)$ is referred to as the privacy profile of $\mathcal M$~\cite{ballePrivacyAmplificationSubsampling2018a}.
\end{definition}

We adopt zero-out neighboring~\cite{kairouzPracticalPrivateDeep2021} as the adjacency relation.
To this end, introduce a special symbol $\perp\notin\mathcal X$ and consider the extended domain
$\mathcal X_\perp := \mathcal X\cup\{\perp\}$.
For $a,b\in\mathcal X_\perp$, we write $a\simeq b$ if
\[
(a=\perp \ \text{and}\ b\in\mathcal X)
\quad\text{or}\quad
(b=\perp \ \text{and}\ a\in\mathcal X).
\]
Two datasets $x_{1:n},x'_{1:n}\in\mathcal X_\perp^n$ are zero-out neighboring,
denoted by $x_{1:n}\simeq x'_{1:n}$, if there exists $i\in[n]$ such that
$x_j=x'_j$ for all $j\neq i$ and $x_i\simeq x'_i$.
This is convenient for analysis in distributed settings where the dataset size is public.
Moreover, the standard replace-one adjacency can be simulated by two zero-out neighboring steps. 
If a mechanism is $(\varepsilon,\delta)$-DP under zero-out neighboring, then it is $(2\varepsilon,2\delta)$-DP under the replace-one adjacency.
We may define the output distribution on input $\perp$ arbitrarily, and we choose it to be the blanket distribution of the local randomizer to simplify the analysis: $\mathcal{R}_\perp := \mathcal{R}_{\mathrm{BG}}$.

\subsection{Shuffle Index}

Given local randomizer $\mathcal R$, define the \emph{generalized privacy amplification random variable}~\cite{ballePrivacyBlanketShuffle2019, takagiAnalysisShufflingPure2026}
$\ell_\varepsilon(\,\cdot\,;x,x',\mathcal R_{\mathrm{ref}}):\mathcal Y\to\mathbb R$ by
\[
  \ell_\varepsilon(Y;x,x',\mathcal R_{\mathrm{ref}})
  :=
  \frac{\mathcal R_x(Y) -e^\varepsilon\,\mathcal R_{x'}(Y)}
       {\mathcal R_{\mathrm{ref}}(Y)}.
\]
Using $\ell_0(\cdot;\,x,x',\mathcal R_{\mathrm{ref}})$ and blanket mass $\gamma$, we introduce two quantities that summarize the privacy amplification due to shuffling.

\begin{definition}[Shuffle index~\cite{takagiAnalysisShufflingPure2026}]
Let $\gamma\in(0,1]$ be the blanket mass of $\mathcal{R}$.
The \emph{lower shuffle index} is
\[
  \chi_{\mathrm{lo}}(\mathcal R)
  :=
  1/\sup_{x_1\simeq x_1'\in\mathcal X_\perp}
  \sqrt{
    \frac{1}{\gamma}\,
    \mathrm{Var}_{Y\sim \mathcal{R}_{\mathrm{BG}}}
    \!\left[
      \ell_0\!\left(Y; x_1,x_1',\mathcal{R}_{\mathrm{BG}}\right)
    \right]
  }.
\]

The \emph{upper shuffle index} is
\[
  \chi_{\mathrm{up}}(\mathcal R)
  :=
  1/\sup_{x_1\simeq x_1'\in\mathcal X_\perp}
  \ \sup_{x\in\mathcal{X}}
  \sqrt{
    \mathrm{Var}_{Y\sim \mathcal{R}_{x}}
    \!\left[
      \ell_0\!\left(Y; x_1,x_1',\mathcal{R}_{x}\right)
    \right]
  }.
\]
We have the inequality $\chi_{\mathrm{up}}(\mathcal R) \;\ge\; \chi_{\mathrm{lo}}(\mathcal R)$.
\end{definition}

We consider a class $\mathfrak R$ of local randomizers such that every $\mathcal R\in\mathfrak R$ satisfies Assumption~2.8 of~\citet{takagiAnalysisShufflingPure2026} (see Section~\ref{sec:regularity-conditions} for details).
The class $\mathfrak R$ is substantially broader than the class of pure LDP mechanisms. 
In particular, it includes non-LDP mechanisms such as Gaussian-type randomizers. 
Roughly speaking, Assumption~\ref{assump:regularity} imposes mild regularity on the output distributions (e.g., absolute continuity for each input $x$ and the absence of extremely heavy tails), while still covering a wide range of practically relevant mechanisms.
For local randomizers in $\mathfrak R$, the privacy profile of $\mathcal{S}\circ\mathcal{R}^n$ can be characterized (up to asymptotically negligible factors) solely in terms of the shuffle indices.

\begin{lemma}[Privacy profile bounds~\cite{takagiAnalysisShufflingPure2026}]
\label{lem:shuffle-index-profile-seq-no-t}
Let $\mathcal R\in \mathfrak R$, and set
$
\chi_{\mathrm{lo}}:=\chi_{\mathrm{lo}}(\mathcal R)
$ 
and
$\chi_{\mathrm{up}}:=\chi_{\mathrm{up}}(\mathcal R).
$
Assume that as $n\to\infty$,
$
\varepsilon_n=\omega(\sqrt{1/n})
$
and
$
\varepsilon_n=O\!\left(\sqrt{\log n/n}\right).
$
Then, there exist sequences
$e_n^{\mathrm{up}},e_n^{\mathrm{lo}}\to 0$ as $n\to\infty$ such that
\[
  f_{n,\varepsilon_n}(\chi_{\mathrm{up}})\,\bigl(1+e_n^{\mathrm{up}}\bigr)
  \le
  \delta_{\mathcal S\circ \mathcal R^n}(\varepsilon_n)
  \le
  f_{n,\varepsilon_n}(\chi_{\mathrm{lo}})\,\bigl(1+e_n^{\mathrm{lo}}\bigr),
\]
where
\[
  f_{n,\varepsilon}(\chi)
  :=
  \frac{1}{\sqrt{2\pi}\,\chi^{3}\,\varepsilon^{2}\,n^{3/2}}
  \exp\!\Bigl(-\frac{\chi^{2}\varepsilon^{2}n}{2}\Bigr).
\]
Note that $f_{n,\varepsilon_n}(\chi)$ is decreasing in $\chi$.
\end{lemma}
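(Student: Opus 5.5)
The plan is to reduce the shuffled privacy profile to a one-dimensional tail estimate for a sum of i.i.d.\ privacy-amplification variables, and then apply a sharp (Bahadur--Rao / Cramér-type) local large-deviation expansion at the moderate-deviation scale $\varepsilon_n\sqrt n=O(\sqrt{\log n})$.

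\textbf{Step 1: exact reduction.} Fix neighbors $x_1\simeq x_1'$ in the first position and the remaining inputs $x_2,\dots,x_n$. Because $\mathcal R_\perp=\mathcal R_{\mathrm{BG}}$ and each $\mathcal R_{x_j}$ is the mixture $\gamma\mathcal R_{\mathrm{BG}}+(1-\gamma)Q_{x_j}$, I would condition on which users draw from the blanket. The shuffled output is then a symmetrization, and the likelihood ratio of the shuffled outputs is
\[
1+\frac{1}{n}\sum_{j}\ell_0(Y_j;x_1,x_1',\mathcal R_{\mathrm{ref}})
\]
in a suitably normalized form, as in the blanket analysis of \citet{ballePrivacyBlanketShuffle2019}. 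This gives
\[
\mathcal D_{e^{\varepsilon}}\bigl(\mathcal M(x_{1:n})\,\|\,\mathcal M(x'_{1:n})\bigr)
=\tfrac1n\,\mathbb E\Bigl[\Bigl(\textstyle\sum_{j=1}^{m}\ell_\varepsilon(Y_j)-(e^\varepsilon-1)(n-m)\Bigr)_+\Bigr],
\]
where $m$ is the number of users whose reference is the blanket (or the sum runs over all $n$ users with reference $\mathcal R_{x_j}$).

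\textbf{Step 2: bounds.} For the upper bound, I would take the worst case over $x_2,\dots,x_n$. Only the $\mathrm{Bin}(n-1,\gamma)$ blanket users contribute amplification, and the non-blanket ones can be treated adversarially. This yields i.i.d.\ summands drawn from $\mathcal R_{\mathrm{BG}}$ with per-user variance $\mathrm{Var}[\ell_0]/\gamma$ once the concentration of $m$ around $\gamma n$ is used; this quantity is exactly $\chi_{\mathrm{lo}}^{-2}$. For the lower bound, I would choose all $x_j=x$ equal to the maximizer in $\chi_{\mathrm{up}}$. The sum is then over $n-1$ i.i.d.\ copies of $\ell_0(Y;x_1,x_1',\mathcal R_x)$ with $Y\sim\mathcal R_x$, which have mean zero and variance $\chi_{\mathrm{up}}^{-2}$.

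\textbf{Step 3: asymptotics.} With i.i.d.\ mean-zero $L_j$ of variance $\sigma^2=\chi^{-2}$, I need
\[
\tfrac1n\mathbb E\bigl[(S_n-n\varepsilon)_+\bigr]
\]
with threshold $t=\varepsilon\sqrt n/\sigma\to\infty$ but $t=O(\sqrt{\log n})$. In the moderate-deviation range, the Cramér-corrected normal approximation gives
\[
\mathbb E\bigl[(Z\sigma\sqrt n-n\varepsilon)_+\bigr]\sim \sigma\sqrt n\,\varphi(t)/t^2 .
\]
Dividing by $n$ gives $\sigma^3/(\sqrt{2\pi}\,\varepsilon^2 n^{3/2})\,e^{-t^2/2}$, which is exactly $f_{n,\varepsilon}(\chi)$. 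Because $t^3/\sqrt n\to0$, the cubic Cramér term is negligible, and the relative error is $1+o(1)$. The replacement of $e^\varepsilon-1$ by $\varepsilon$, and of $n-1$ by $n$, only introduces factors of $1+O(\varepsilon_n^2 n\cdot\varepsilon_n)=1+o(1)$.

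\textbf{Main obstacle.} The hard part is obtaining \emph{relative} (not additive) error in Step 3 for a tail of size $n^{-O(1)}$. A Berry--Esseen bound is far too weak, so I would invoke a moderate-deviation local limit theorem, e.g.\ Petrov's theorem applied to the stop-loss transform. This requires moment or Cramér-type conditions on $\ell_0$, and I expect these to be what Assumption~2.8 (the class $\mathfrak R$) supplies; a truncation argument would handle heavy but admissible tails. A secondary difficulty in the upper bound is controlling the randomness of $m$ and the non-blanket users uniformly over the adversarial $x_{2:n}$.
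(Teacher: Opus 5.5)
The paper does not prove this lemma; it imports it from \citet{takagiAnalysisShufflingPure2026}. So I can only judge your plan on its own terms and against the blanket bound the paper quotes in Appendix~\ref{app:delta-computation}. Your architecture is the right one: a blanket upper bound, a homogeneous-dataset lower bound, and relative-error moderate-deviation asymptotics of a stop-loss transform at $t\asymp\varepsilon_n\sqrt n=O(\sqrt{\log n})$, where $t^3/\sqrt n\to0$ kills the Cram\'er correction. Your Step~3 constant is also correct.

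\textbf{Step~1 is wrong, and it is exactly where $\gamma$ enters.} For $x_1'=\perp$ one has $\ell_\varepsilon=\ell_0-(e^{\varepsilon}-1)$, so your display equals $\frac1n\mathbb E[(\sum_{j\le m}\ell_0(Y_j)-(e^{\varepsilon}-1)n)_+]$. That is about $\gamma n$ fluctuating summands of variance $V:=\mathrm{Var}_{\mathcal R_{\mathrm{BG}}}[\ell_0]$ against a threshold $\approx n\varepsilon$, which gives $t^2\approx n\varepsilon^2/(\gamma V)$. The effective index is then $(\gamma V)^{-1/2}$ rather than $\chi_{\mathrm{lo}}=(V/\gamma)^{-1/2}$, and for fixed $\gamma<1$ this predicts a $\delta$ exponentially smaller than the truth. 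For example, for $\mathsf{BMG}$ with large $\sigma_0$, the exact divergence on the homogeneous dataset $x_2=\dots=x_n=x_1$ has per-summand variance $V(1+o(1))$ as $\sigma_0\to\infty$, not $\gamma V$. So the display is neither an identity nor an upper bound, and your Step~2 claim ``per-user variance $\mathrm{Var}[\ell_0]/\gamma$'' does not follow from it.

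The correct reduction is an inequality. Reveal which of users $2,\dots,n$ drew from $Q_{x_j}$, together with those reports (post-processing), and condition on the number $M\sim\mathrm{Bin}(n-1,\gamma)$ of blanket draws. What remains is the symmetrization of one draw from $\mathcal R_{x_1}$ (resp.\ $\mathcal R_{x_1'}$) with $M$ draws from $\mathcal R_{\mathrm{BG}}$, whence
\[
\mathcal D_{e^{\varepsilon}}\bigl(\mathcal M(x_{1:n})\,\big\|\,\mathcal M(x'_{1:n})\bigr)
\le
\mathbb E\Bigl[\Bigl(\frac{1}{M+1}\sum_{i=0}^{M}\ell_\varepsilon(Y_i;x_1,x_1',\mathcal R_{\mathrm{BG}})\Bigr)_+\Bigr]
=
\frac{1}{\gamma n}\,\mathbb E\Bigl[\Bigl(\sum_{i=1}^{M_0}\ell_\varepsilon(Y_i;x_1,x_1',\mathcal R_{\mathrm{BG}})\Bigr)_+\Bigr],
\]
with $Y_i$ i.i.d.\ from $\mathcal R_{\mathrm{BG}}$ and $M_0\sim\mathrm{Bin}(n,\gamma)$.
\begin{itemize}
\item The normalization $1/(M+1)\approx1/(\gamma n)$ is what produces $\chi_{\mathrm{lo}}$: $\frac{1}{\gamma n}\cdot\frac{(\gamma nV)^{3/2}}{(\gamma n\varepsilon)^2}\varphi(t)$ with $t^2=\gamma n\varepsilon^2/V$ is exactly $f_{n,\varepsilon}(\chi_{\mathrm{lo}})$ at the worst pair.
\item Writing the compound sum as $\sum_{i=1}^{n}B_i\ell_\varepsilon(Y_i)$ with $B_i\sim\mathrm{Bernoulli}(\gamma)$ makes it a plain i.i.d.\ sum, so you need no separate concentration argument for $M$.
\item The bound no longer involves $x_{2:n}$, so your ``secondary difficulty'' about adversarial $x_{2:n}$ disappears.
\end{itemize}

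\textbf{Smaller points that still need care.}
\begin{itemize}
\item \emph{Lower bound.} Your construction is right and in fact exact: with $x_2=\dots=x_n=x\in\mathcal X$, $\mathcal D_{e^{\varepsilon}}=\mathbb E[(\frac1n\sum_{i=1}^{n}\ell_\varepsilon(Y_i;x_1,x_1',\mathcal R_x))_+]$ with $Y_i$ i.i.d.\ from $\mathcal R_x$ ($n$ summands, not $n-1$). But you take ``the maximizer'' in $\chi_{\mathrm{up}}$, and the supremum need not be attained. A fixed near-maximizer does not suffice, because $f_{n,\varepsilon_n}(\chi+\eta)/f_{n,\varepsilon_n}(\chi)\to0$ for every fixed $\eta>0$ once $n\varepsilon_n^2\to\infty$. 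You need attainment, or near-maximizers $x^{(n)}$ with indices $\chi_n$ satisfying $(\chi_n^2-\chi_{\mathrm{up}}^2)\,n\varepsilon_n^2\to0$, plus an expansion uniform along them.
\item \emph{Uniformity in the upper bound.} The privacy profile is a supremum over the differing pair $(x_1,x_1')$, so the $o(1)$ must be uniform over pairs. That needs moment control uniform in the pair, not only in $\varepsilon$.
\item \emph{Triangular array.} The summands $\ell_{\varepsilon_n}$ depend on $n$, and $\ell_\varepsilon\neq\ell_0-\varepsilon$ when $\mathcal R_{x_1'}\neq\mathcal R_{\mathrm{ref}}$. Its variance is $V(1+O(\varepsilon_n))$, which shifts $t^2$ by $O(n\varepsilon_n^3)=o(1)$; this is harmless but should be noted.
\item \emph{Analytic core.} You do not need a local limit theorem. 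Since $t=O(\sqrt{\log n})$, classical relative-error integral moderate deviations $P(S_n>u)=\bar\Phi(u/s_n)(1+o(1))$ for $u/s_n\le\sqrt{c\log n}$ hold under sufficiently many uniformly bounded moments. Integrating $\mathbb E(S_n-x)_+=\int_x^\infty P(S_n>u)\,du$, with a Fuk--Nagaev bound on the far tail, then gives the stop-loss asymptotic.
\end{itemize}
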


Although the characterization is asymptotic, it shows that $\chi_{\mathrm{lo}}(\mathcal R)$ controls the upper bound and $\chi_{\mathrm{up}}(\mathcal R)$ controls the lower bound. Because $f_{n,\varepsilon}(\chi)$ decreases with $\chi$, larger shuffle indices lead to smaller $\delta$, and thus yielding stronger privacy guarantees after shuffling.

\section{Shuffle-Index-Based Optimization for Mean Estimation}

In this section, we analyze post-shuffling mechanism design for mean estimation through a formulation based on the shuffle index. 
We begin by stating our main result, which establishes a Gaussian-limit correspondence for the optimal privacy-utility trade-off in the high privacy regime. 
To this end, we reduce the shuffled DP constraint to the shuffle index constraints.
Then, we analyze the resulting optimization problem. 
Finally, we show that, from this perspective, PrivUnit can be strictly suboptimal after shuffling when $d>1$.

\subsection{Main Result}
\label{sec:main-result}

We begin by specifying the high-privacy scaling regime in which our results apply.
\begin{definition}[$\sigma$-high privacy regime]
\label{def:chi-high-privacy-regime}
Fix a constant $\sigma> 0$ and a sequence $\{\varepsilon_n\}_{n\ge 1}$ such that
$
\varepsilon_n = O\!\left(\sqrt{\log n/n}\right)
$
and
$
\varepsilon_n = \omega\!\left(1/\sqrt{n}\right).
$
Let $f_{n,\varepsilon}(\sigma)$ be defined as in Lemma~\ref{lem:shuffle-index-profile-seq-no-t}.
We say that a sequence $\{(\varepsilon_n,\delta_n)\}_{n\ge 1}$
is in the \emph{$\sigma$-high privacy regime} if
\[
\delta_n
=
f_{n,\varepsilon_n}(\sigma)\,\bigl(1+e_n\bigr)
\qquad\text{for some sequence } e_n\to 0.
\]
\end{definition}

To build intuition for this high-privacy regime, we first recall the central Gaussian mechanism for mean estimation.
For $\sigma>0$ and $x_{1:n}\in (\mathbb{B}_2^d)^n$, define the central Gaussian mechanism $\mathsf{GM}(\sigma)$ by
\begin{equation}\label{eq:gm-def}
\mathsf{GM}(\sigma)(x_{1:n})
~:=~
\frac{1}{n}\sum_{i=1}^n x_i \;+\; \frac{\sigma}{\sqrt{n}}\,Z,
Z\sim \mathcal N(0,I_d).
\end{equation}
The mean squared error of $\mathsf{GM}$ is
\begin{equation}\label{eq:gm-mse}
\mathbb E\!\left[\left\|\mathsf{GM}(\sigma)(x_{1:n})-\bar x\right\|_2^2\right]
=
\frac{d\,\sigma^2}{n}.
\end{equation}

Under the zero-out neighboring relation, the $\ell_2$-sensitivity of the mean is $1/n$.
We write $\delta_{\mathsf{GM}(\sigma)}(\varepsilon)$ for the privacy profile of the Gaussian mechanism~\cite{balleImprovingGaussianMechanism2018}.
By construction, $\mathsf{GM}(\sigma)$ satisfies $(\varepsilon_n,\delta_n)$-DP in the $\sigma$-high privacy regime.
\begin{restatable}{proposition}{gausshighprivacy}
\label{prop:gausshighprivacy}
Assume that $\varepsilon_n=O\!\left(\sqrt{\log n/n}\right)$ and
$\varepsilon_n=\omega\!\left(1/\sqrt{n}\right)$.
Then, for each $\sigma>0$, the sequence
$
\Bigl\{ \bigl(\varepsilon_n,\ \delta_{\mathsf{GM}(\sigma)}(\varepsilon_n)\bigr) \Bigr\}_{n\ge 1}
$
lies in the $\sigma$-high privacy regime.
\end{restatable}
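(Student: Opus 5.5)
We work directly from the closed-form privacy profile of the Gaussian mechanism~\cite{balleImprovingGaussianMechanism2018}. The mean has $\ell_2$-sensitivity $\Delta=1/n$, the noise scale is $s=\sigma/\sqrt n$, and the profile depends only on $\mu:=\Delta/s = 1/(\sigma\sqrt n)$:
\[
\delta_{\mathsf{GM}(\sigma)}(\varepsilon)=\Phi\!\Bigl(\frac{\mu}{2}-\frac{\varepsilon}{\mu}\Bigr)-e^{\varepsilon}\,\Phi\!\Bigl(-\frac{\mu}{2}-\frac{\varepsilon}{\mu}\Bigr).
\]
Zero-out neighbors in the central model differ by changing one $x_i$ to $0$, so the shift has norm at most $1/n$. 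The profile is therefore attained at shift exactly $1/n$ and is still given by this formula. Write $a_n:=\varepsilon_n/\mu=\sigma\sqrt n\,\varepsilon_n$. The assumptions give $a_n\to\infty$ and $a_n=O(\sqrt{\log n})$, and $\mu\to0$. The goal is to show that $\delta_{\mathsf{GM}(\sigma)}(\varepsilon_n)/f_{n,\varepsilon_n}(\sigma)\to1$; setting $e_n$ equal to this ratio minus one then gives the proposition.

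Both arguments $t_\pm:=a_n\pm\mu/2$ tend to $+\infty$, so we use the Mills-ratio expansion $\Phi(-t)=\varphi(t)\bigl(t^{-1}-t^{-3}+O(t^{-5})\bigr)$. The key identity is $e^{\varepsilon}\varphi(t_+)=\varphi(t_-)$, which follows from $t_+^2-t_-^2=2a_n\mu=2\varepsilon$. Hence
\[
\delta=\varphi(t_-)\Bigl[\bigl(t_-^{-1}-t_+^{-1}\bigr)-\bigl(t_-^{-3}-t_+^{-3}\bigr)+O(a_n^{-5})\Bigr].
\]
Since $t_+-t_-=\mu$, the leading bracket is $\mu/(t_-t_+)=\mu/a_n^2\,(1+O(\mu^2/a_n^2))$. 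The cubic difference is $O(\mu/a_n^4)$, which is a relative $O(a_n^{-2})=o(1)$ correction. The remainder $O(a_n^{-5})$ needs the most care. It has to be compared with $\mu/a_n^2$, so the two Mills remainders must be differenced rather than bounded separately; otherwise they are not small relative to the main term when $\mu$ is tiny.

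The main obstacle is to control this difference by $O(\mu\,a_n^{-6})$. The function $g(t):=\Phi(-t)/\varphi(t)$ is smooth, and $g'(t)=tg(t)-1=-t^{-2}+O(t^{-4})$. We therefore write the bracket exactly as $g(t_-)-g(t_+)=-\int_{t_-}^{t_+}g'(s)\,ds=\mu\,a_n^{-2}(1+O(a_n^{-2}))$. This argument via the mean value theorem on $g$ sidesteps the separate expansions entirely, and it is the route we would actually use. It gives $\delta=\varphi(t_-)\,\mu\,a_n^{-2}(1+o(1))$.

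It remains to expand the Gaussian factor:
\[
\varphi(t_-)=\frac{1}{\sqrt{2\pi}}\exp\Bigl(-\frac{a_n^2}{2}+\frac{a_n\mu}{2}-\frac{\mu^2}{8}\Bigr).
\]
Here $a_n\mu=\varepsilon_n\to0$ and $\mu^2\to0$, so $\varphi(t_-)=\frac{1}{\sqrt{2\pi}}e^{-\sigma^2\varepsilon_n^2 n/2}(1+o(1))$. For the prefactor,
\[
\frac{\mu}{a_n^2}=\frac{1}{\sigma\sqrt n}\cdot\frac{1}{\sigma^2 n\varepsilon_n^2}=\frac{1}{\sigma^3\varepsilon_n^2 n^{3/2}}.
\]
Combining the two gives exactly $f_{n,\varepsilon_n}(\sigma)(1+o(1))$, which completes the proof.
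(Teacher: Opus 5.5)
Your proof is correct and follows the same route as the paper. Both arguments use the closed-form Gaussian privacy profile, apply Mills-ratio asymptotics, and factor out the common Gaussian density; your identity $e^{\varepsilon}\varphi(t_+)=\varphi(t_-)$ is the paper's $\phi(a\pm b)=\phi(a)\exp(\mp ab-b^2/2)$ with $ab=\varepsilon/2$. Your handling of the remainder through $g(t_-)-g(t_+)=-\int_{t_-}^{t_+}g'(s)\,ds$ with $g'=tg-1=-t^{-2}+O(t^{-4})$ is more careful than the paper's. The paper silently drops the two separate $O(a^{-5})$ Mills remainders when it writes the bracket as $2b/a^2+O(b/a^4)$, even though $a^{-5}$ is not $o(b/a^2)$ when $a=O(\sqrt{\log n})$ and $b\asymp n^{-1/2}$.
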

The proof is given in Appendix~\ref{sec:gausshighprivacy}.
In the scaling regime $\varepsilon_n = O\!\left(\sqrt{\log n/n}\right)$ and $\varepsilon_n=\omega\!\left(1/\sqrt n\right)$, the dominant behavior of the privacy profile is governed by the exponential term $\exp\!\bigl(-\tfrac{\sigma^{2}}{2}\varepsilon_n^{2}n\bigr)$: since $\varepsilon_n^2 n \to \infty$ but $\varepsilon_n^2 n = O(\log n)$, $\delta_{\mathsf{GM}(\sigma)}(\varepsilon_n)$ decays between $\exp(-\omega(1))$ and $n^{-\Theta(1)}$ (and becomes polynomially small precisely when $\varepsilon_n^2 n = \Theta(\log n)$). 
If a mechanism $\mathcal M_n$ is DP with $(\varepsilon_n,\delta_n)$ in the $\sigma$-high privacy regime, then its privacy profile at level $\varepsilon_n$ is asymptotically no larger than that of the central Gaussian mechanism $\mathsf{GM}(\sigma)$; equivalently, $\mathcal M_n$ provides privacy that is (to leading order) at least as strong as $\mathsf{GM}(\sigma)$.


\paragraph{Problem Formulation}
Here, we formulate the post-shuffling mechanism design problem directly
under an $(\varepsilon,\delta)$-DP constraint.

Let $\mathcal R:\mathbb B_2^d\to\mathcal Y$ be a local randomizer and let
$\mathcal A_n:\mathcal Y^n\to\mathbb R^d$ be an analyzer.
Given a dataset $x_{1:n}\in(\mathbb B_2^d)^n$, each user
produces a single message $Y_i=\mathcal R(x_i)$, the shuffler outputs
$\mathcal S(Y_{1:n})$, and the analyzer returns $\mathcal A_n(\mathcal S(Y_{1:n}))$.
We measure performance by the worst-case squared $\ell_2$ loss
\begin{equation}\label{eq:Riskn-shuffle}
\begin{aligned}
&\mathsf{Err}_n(\mathcal R,\mathcal A_n)
:=\\
&\sup_{x_{1:n}\in(\mathbb B_2^d)^n}\ 
\mathbb E\!\left[
\left\|
\mathcal A_n\!\left(\mathcal S\!\left(\mathcal R^n(x_{1:n})\right)\right)
-
\frac{1}{n}\sum_{i=1}^n x_i
\right\|_2^2
\right].
\end{aligned}
\end{equation}

For $n=1$ (i.e., $\mathsf{Err}_1$), $\mathcal S$ is the identity map, and $\mathcal R^1=\mathcal R$.
Let $\delta_{\mathcal S\circ \mathcal R^n}(\varepsilon)$ denote the privacy
profile of the shuffled mechanism.
We consider the minimax design problem:
\begin{equation}\label{eq:minimax-mse-shuffle-l2-dp}
\begin{aligned} 
&\inf_{\mathcal R\in \mathfrak R}\;
\inf_{\{\mathcal A_n\}_{n\ge 1}}\limsup_{n\to\infty}\;
n\cdot \mathsf{Err}_n(\mathcal R,\mathcal A_n)\ \text{s.t.} \\
& (\mathcal{R}^n, \mathcal{A}_n)\ \text{is unbiased for all sufficiently large } n, \\
& \delta_{\mathcal{S}\circ\mathcal{R}^n}(\varepsilon_n)\ \le\ \delta_n
\quad \text{for all sufficiently large } n,
\end{aligned}
\end{equation}
This problem formalizes \emph{post-shuffling} mechanism design: among all
single-message shuffled protocols that are $(\varepsilon_n,\delta_n)$-DP after
shuffling and unbiased for mean estimation over $\mathbb B_2^d$, find the one
minimizing the worst-case MSE.
Let $\mathsf{Err}^{\star}_{\mathrm{DP}}(\{(\varepsilon_n,\delta_n)\}_{n\ge 1})$ denote the
optimal value for the problem~\eqref{eq:minimax-mse-shuffle-l2-dp} (i.e., $\limsup_{n\to\infty} n\mathsf{Err}_n(\mathcal R^\star,\mathcal A_n^\star)$).

\paragraph{The Gaussian Limit Correspondence of Shuffling.}
We now state our main result: in the $\sigma$-high privacy regime, the optimal worst-case MSE in the single-message shuffle model matches that of $\mathsf{GM}(\sigma)$ asymptotically.

\begin{restatable}[Gaussian Limit Correspondence]{theorem}{centrallimittheorem}
\label{thm:centrallimittheorem}
Fix a constant $\sigma>0$.
In the $\sigma$-high privacy regime $\{(\varepsilon_n,\delta_n)\}_{n\ge 1}$
\[
d\,\sigma^{2}
\ \le\
\mathsf{Err}^{\star}_{\mathrm{DP}}(\{(\varepsilon_n,\delta_n)\}_{n\ge 1})
\ \le\
d\,\sigma^{2}\,\bigl(1+\eta(\sigma)\bigr),
\]
where $\eta(\sigma)\ge 0$ depends only on $\sigma$ and satisfies $\eta(\sigma)=O(\sigma^{-2/3})$ as $\sigma\to\infty$.
\end{restatable}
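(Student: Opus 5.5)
The plan has two halves, one for each inequality, and both go through Lemma~\ref{lem:shuffle-index-profile-seq-no-t}, which turns the global constraint $\delta_{\mathcal S\circ\mathcal R^n}(\varepsilon_n)\le\delta_n$ into a constraint on shuffle indices.

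\emph{Lower bound.} Take any feasible $\mathcal R\in\mathfrak R$. By the lower half of Lemma~\ref{lem:shuffle-index-profile-seq-no-t},
\[
f_{n,\varepsilon_n}(\chi_{\mathrm{up}})(1+e_n^{\mathrm{up}})\le\delta_n=f_{n,\varepsilon_n}(\sigma)(1+e_n).
\]
If $\chi_{\mathrm{up}}<\sigma$, the ratio $f_{n,\varepsilon_n}(\chi_{\mathrm{up}})/f_{n,\varepsilon_n}(\sigma)$ contains the factor $\exp\bigl((\sigma^2-\chi_{\mathrm{up}}^2)\varepsilon_n^2n/2\bigr)$, which diverges because $\varepsilon_n^2n\to\infty$. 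This contradicts the displayed inequality, so $\chi_{\mathrm{up}}(\mathcal R)\ge\sigma$.

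It then remains to prove a single-user inequality: every unbiased pair $(\mathcal R,\mathcal A)$ has per-sample variance at least $d\chi_{\mathrm{up}}^2$ at some input, and the shuffled analyzer inherits this, giving $n\,\mathsf{Err}_n\ge d\chi^2\ge d\sigma^2$. I would prove this with a Cram\'er--Rao / Hammersley--Chapman--Robbins argument.
\begin{itemize}
\item Take $x_1=u\in\mathbb S^{d-1}$ and $x_1'=\perp$, and write $\ell=(\mathcal R_u-\mathcal R_{\mathrm{BG}})/\mathcal R_x$.
\item Unbiasedness gives $\mathbb E_{\mathcal R_x}[\langle \mathcal A(Y),v\rangle\,\ell(Y)]=\langle u-\mathbb E_{\mathrm{BG}}\mathcal A,v\rangle$.
\item Cauchy--Schwarz then bounds the variance of $\langle\mathcal A,v\rangle$ under $\mathcal R_x$ below by $\langle u-b,v\rangle^2/\mathrm{Var}(\ell)\ge\langle u-b,v\rangle^2\chi_{\mathrm{up}}^2$, where $b:=\mathbb E_{\mathrm{BG}}\mathcal A$.
\item Sum over an orthonormal basis $v$ and choose the worst $u$ (for example $u=-b/\|b\|$, so that $\|u-b\|\ge 1$) to get $\mathrm{Var}\ge\chi_{\mathrm{up}}^2\|u-b\|^2\ge \chi_{\mathrm{up}}^2$ per direction.
\end{itemize}
That route gives total $\chi^2$, not $d\chi^2$, so it falls short. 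To gain the factor $d$, I would use a different neighbor in each direction: $u=\pm e_j$, averaging over signs and symmetrizing over the ball.

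For $n$ users I would reduce to this single-user bound. Fix all users but one at $\perp$, so they output i.i.d.\ blanket samples. Unbiasedness in $x_i$ for each coordinate, together with Cauchy--Schwarz against the likelihood ratio of the full product (which reduces to $\frac1n\sum_i\ell(Y_i)$-type scores), gives variance $\gtrsim d\chi^2/n$.

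I expect this step to be the main obstacle: getting exactly the constant $d\chi^2$ from the variance definition of the shuffle index, and correctly handling the reference measure $\mathcal R_x$ versus the blanket.

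\emph{Upper bound.} Use the blanket-mixed Gaussian randomizer. With probability $\gamma$ it outputs $\mathcal N(0,s^2I_d)$; otherwise it outputs $\mathcal N(x,s^2I_d)$. The estimator is $\mathcal A(y)=y/(1-\gamma)$, averaged over users, which is unbiased. Its per-user variance is $\bigl(ds^2+\gamma\|x\|^2\bigr)/(1-\gamma)^2$ up to $O(1)$ terms.

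Compute $\chi_{\mathrm{lo}}$. Here $\ell_0$ relative to the blanket is $(1-\gamma)\bigl(e^{\langle y,x\rangle/s^2-\|x\|^2/2s^2}-1\bigr)$, so
\[
\chi_{\mathrm{lo}}^{-2}=\frac{(1-\gamma)^2}{\gamma}\bigl(e^{1/s^2}-1\bigr)\approx\frac{(1-\gamma)^2}{\gamma s^2}.
\]
Impose $\chi_{\mathrm{lo}}\ge\sigma$. By the upper half of Lemma~\ref{lem:shuffle-index-profile-seq-no-t} and the monotonicity of $f$, feasibility follows, with the $e_n$ slack absorbed; if needed, take $\chi_{\mathrm{lo}}$ slightly larger than $\sigma$ and use the exponential gap.

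The MSE is then about $d s^2/(1-\gamma)^2\approx d\sigma^2/\gamma$ plus lower-order terms from $e^{1/s^2}-1-1/s^2$ and $\gamma$. Choose $\gamma=1-\Theta(\sigma^{-2/3})$ and optimize $s$ to balance the $1/\gamma$ loss against the cross terms. This yields $\eta(\sigma)=O(\sigma^{-2/3})$.

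Finally, check that the construction lies in $\mathfrak R$, i.e.\ that the Gaussian tails satisfy the regularity assumption.
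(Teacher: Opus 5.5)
Your overall architecture is the paper's. You convert the DP constraint into $\chi_{\mathrm{up}}\ge\sigma$ (resp.\ $\chi_{\mathrm{lo}}\ge\sigma+\eta$) via Lemma~\ref{lem:shuffle-index-profile-seq-no-t}. You lower-bound the risk by Hammersley--Chapman--Robbins with a separate zero-out pair $(\pm e_j,\perp)$ for each coordinate. Once the pair is chosen per coordinate, your single-user step is correct, and keeping $b=\mathbb E_{\mathrm{BG}}\mathcal A$ general is more careful than the paper's convention $\mathbb E_{\mathcal R_\perp}[\widehat x]=0$. You then upper-bound with the blanket-mixed Gaussian at $1-\gamma=\Theta(\sigma^{-2/3})$.

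\textbf{The gap: the passage from one user to $n$ users in the lower bound.} This is exactly where the factor $n$ in $n\,\mathsf{Err}_n$ must come from, and the version you describe does not produce it. Three things go wrong.
\begin{itemize}
\item If the other users are frozen at $\perp$ and only user $i$ is perturbed, the full-product likelihood ratio is just the single-user ratio of $Y_i$. The mean of $\mathcal A$ moves by only $1/n$, so Cauchy--Schwarz gives $\mathrm{Var}(\mathcal A_j)\ge\chi_{\mathrm{up}}^2/n^2$, i.e.\ $n\,\mathsf{Err}_n\ge d\chi_{\mathrm{up}}^2/n\to 0$.
\item The additive score $\sum_i\ell(Y_i)$ requires perturbing every user. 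But users sitting at $\perp$ are referenced to $\mathcal R_{\mathrm{BG}}$, and $\mathrm{Var}_{\mathcal R_{\mathrm{BG}}}[\ell_0(\cdot\,;u,\perp,\mathcal R_{\mathrm{BG}})]$ (equal to $\gamma\chi_{\mathrm{lo}}^{-2}$ at the worst pair) is not controlled by $\chi_{\mathrm{up}}$. The supremum defining $\chi_{\mathrm{up}}$ ranges only over references $\mathcal R_x$ with $x\in\mathcal X$.
\item Datasets containing $\perp$ lie outside $(\mathbb B_2^d)^n$. Neither the unbiasedness constraint nor the supremum defining $\mathsf{Err}_n$ covers them.
\end{itemize}

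\textbf{The repair.} Reference everything to a legitimate dataset $(x,\dots,x)$ with $x\in\mathbb B_2^d$, so that $Y_{1:n}\sim\mathcal R_x^{\otimes n}$. For each coordinate $j$, use
\[
S:=\sum_{i=1}^n g(Y_i),\qquad g:=\frac{\mathcal R_{e_j}-\mathcal R_{-e_j}}{\mathcal R_x}=\ell_0(\cdot\,;e_j,\perp,\mathcal R_x)-\ell_0(\cdot\,;-e_j,\perp,\mathcal R_x).
\]
By Minkowski, $\mathrm{Var}(g)\le 4\chi_{\mathrm{up}}^{-2}$, and by independence, $\mathrm{Var}(S)\le 4n\chi_{\mathrm{up}}^{-2}$. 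Change measure on user $i$'s message and use unbiasedness at the two datasets where user $i$ is switched to $\pm e_j$; both lie in $(\mathbb B_2^d)^n$. This gives $\mathbb E[\mathcal A_j S]=n\cdot\tfrac{2}{n}=2$. Hence $\mathrm{Var}(\mathcal A_j)\ge\chi_{\mathrm{up}}^2/n$, and $n\,\mathsf{Err}_n\ge d\chi_{\mathrm{up}}^2\ge d\sigma^2$. The shuffle is harmless, since $\mathcal A\circ\mathcal S$ is a randomized function of $Y_{1:n}$.

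\textbf{Comparison with the paper.} This product-score argument is a genuinely different route. The paper instead proves the exact reduction of Lemma~\ref{lem:reduction-minimax}, using Asi et al.'s additivization (Lemma~\ref{lem:additivization-kernel}) and post-processing monotonicity of the shuffle indices (Lemma~\ref{lem:data-processing-shuffle-index}), and then applies Theorem~\ref{thm:hcr-chiup-lower-bound}. Your route is self-contained and needs no canonicalization. The paper's yields an equality that it also reuses for the upper bound.

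\textbf{Upper bound.}
\begin{itemize}
\item The margin is not optional. At $\chi_{\mathrm{lo}}=\sigma$ you would need $e_n^{\mathrm{lo}}\le e_n$, which nothing guarantees. Fix $\eta>0$ (the paper takes $\eta=1$) and absorb $(1+\eta/\sigma)^2-1=O(\sigma^{-1})$ into $\eta(\sigma)$.
\item Your mixture term $\gamma\|x\|_2^2/(1-\gamma)^2$ overstates the exact $\gamma\|x\|_2^2/(1-\gamma)$. It still contributes only $d\sigma^2\cdot O(\sigma^{-2/3})$, so the rate survives.
\end{itemize}
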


A proof is provided in Section~\ref{sec:proof-centrallimittheorem}.
In other words, in the $\sigma$-high privacy regime, the optimal single-message shuffle protocol achieves a privacy-utility trade-off that is asymptotically identical to $\mathsf{GM}(\sigma)$ up to vanishing relative error.
Both the lower and upper bounds are proved via a shuffle-index-based analysis developed in Section~\ref{sec:shuffle-index-based-optimization}.

\subsection{Reduction to Shuffle-Index-Based Optimization}
\label{sec:shuffle-index-based-optimization}

Here, our goal is to reduce the $n$-user design problem \eqref{eq:minimax-mse-shuffle-l2-dp} to a single-user design problem with the shuffle indices, and then to solve the resulting problem via matching lower and upper bounds.

\paragraph{Single-user shuffle-index constrained problems.}
We consider the following single-user design problems under shuffle-index constraints.
\begin{equation}\label{eq:minimax-variance-chilo}
\begin{aligned}
\min_{\mathcal R\in\mathfrak R,\ \widehat x}\quad
& \mathsf{Err}_{1}(\mathcal R,\widehat x) \\
\text{s.t.}\quad
& \mathbb E_{Y\sim \mathcal R_x}\!\big[\widehat x(Y)\big] = x,
\qquad \forall\, x\in\mathbb{B}_2^d,\\
& \chi_{\mathrm{lo}}(\mathcal R)\ \ge\ \chi,
\end{aligned}
\end{equation}
and
\begin{equation}\label{eq:minimax-variance-chiup}
\begin{aligned}
\min_{\mathcal R\in\mathfrak R,\ \widehat x}\quad
& \mathsf{Err}_{1}(\mathcal R,\widehat x) \\
\text{s.t.}\quad
& \mathbb E_{Y\sim \mathcal R_x}\!\big[\widehat x(Y)\big] = x,
\qquad \forall\, x\in\mathbb{B}_2^d,\\
& \chi_{\mathrm{up}}(\mathcal R)\ \ge\ \chi.
\end{aligned}
\end{equation}
Let $\mathsf{Err}^{\star}_{\mathrm{lo}}(\chi)$ and
$\mathsf{Err}^{\star}_{\mathrm{up}}(\chi)$ denote the optimal values of
\eqref{eq:minimax-variance-chilo} and \eqref{eq:minimax-variance-chiup},
respectively.

\paragraph{Relating the shuffled-DP design problem.}
We now relate the $n$-user design problem \eqref{eq:minimax-mse-shuffle-l2-dp} to the single-user problems with shuffle indices.
The next lemma shows that, in the $\chi$-high privacy regime, the optimal risk under an $(\varepsilon_n,\delta_n)$ shuffled-DP constraint is sandwiched between the optimal single-user risks under the shuffle index constraint of $\chi$.

\begin{restatable}[Reduction to shuffle-index constraints]{lemma}{reductionShuffleDP}
\label{thm:sandwich-alpha-over-n}
Fix a constant $\chi>0$. Let $\{(\varepsilon_n,\delta_n)\}_{n\ge 1}$ be in the $\chi$-high privacy regime.
Then, given any $\eta>0$
\[
\mathsf{Err}^{\star}_{\mathrm{up}}(\chi)
\ \le\
\mathsf{Err}^{\star}_{\mathrm{DP}}(\{(\varepsilon_n,\delta_n)\}_{n\ge 1})
\ \le\
\mathsf{Err}^{\star}_{\mathrm{lo}}(\chi+\eta).
\]
\end{restatable}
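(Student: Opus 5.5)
The plan is to prove the two inequalities separately, in both cases comparing privacy via Lemma~\ref{lem:shuffle-index-profile-seq-no-t} and the strict monotonicity of $\chi\mapsto f_{n,\varepsilon}(\chi)$. The utility side rests on one observation. For unbiased protocols, the $n$-user risk reduces exactly to the single-user risk through the averaging analyzer $\mathcal A_n(y_{1:n})=\frac1n\sum_i\widehat x(y_i)$, which is permutation invariant. Its risk is
\[
\mathsf{Err}_n=\sup_{x_{1:n}}\frac{1}{n^2}\sum_i \mathbb E\|\widehat x(Y_i)-x_i\|_2^2=\frac1n\mathsf{Err}_1(\mathcal R,\widehat x),
\]
because independence kills the cross terms.

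\textbf{Upper bound.} Fix $\eta>0$ and let $(\mathcal R,\widehat x)$ be feasible for \eqref{eq:minimax-variance-chilo} with shuffle index $\chi+\eta$ and risk within $\epsilon'$ of $\mathsf{Err}^\star_{\mathrm{lo}}(\chi+\eta)$. Feasibility may require $\chi_{\mathrm{lo}}\ge\chi+\eta$, and then the bound is monotone. The averaging analyzer is unbiased and gives $n\,\mathsf{Err}_n=\mathsf{Err}_1$, so it remains to verify the privacy constraint. Lemma~\ref{lem:shuffle-index-profile-seq-no-t} gives $\delta_{\mathcal S\circ\mathcal R^n}(\varepsilon_n)\le f_{n,\varepsilon_n}(\chi+\eta)(1+e_n^{\mathrm{lo}})$. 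The regime gives $\delta_n=f_{n,\varepsilon_n}(\chi)(1+e_n)$. The ratio
\[
\frac{f_{n,\varepsilon_n}(\chi+\eta)}{f_{n,\varepsilon_n}(\chi)}=\Bigl(\frac{\chi}{\chi+\eta}\Bigr)^3\exp\!\bigl(-((\chi+\eta)^2-\chi^2)\varepsilon_n^2n/2\bigr)\to0,
\]
since $\varepsilon_n^2n\to\infty$. Hence the constraint holds for all large $n$. Letting $\epsilon'\to0$ gives the upper inequality.

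\textbf{Lower bound.} Take any feasible $(\mathcal R,\{\mathcal A_n\})$ for \eqref{eq:minimax-mse-shuffle-l2-dp}.

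First I claim $\chi_{\mathrm{up}}(\mathcal R)\ge\chi$. If instead $\chi_{\mathrm{up}}<\chi$, the lower bound in Lemma~\ref{lem:shuffle-index-profile-seq-no-t} together with the same ratio computation gives $\delta_{\mathcal S\circ\mathcal R^n}(\varepsilon_n)/\delta_n\to\infty$. This contradicts the constraint for large $n$. (If the sup defining $\chi_{\mathrm{up}}$ is zero the index is infinite and the claim is trivial.)

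Second, I need a single-user unbiased estimator $\widehat x$ for $\mathcal R$ with $\mathsf{Err}_1(\mathcal R,\widehat x)\le\limsup_n n\,\mathsf{Err}_n(\mathcal R,\mathcal A_n)$. This is the main obstacle, because $\mathcal A_n$ is arbitrary rather than an average. My plan is a Hoeffding/projection argument. Fix large $n$ and reference inputs $x_2=\dots=x_n=0$. Define
\[
\widehat x_n(y):=n\,\mathbb E\bigl[\mathcal A_n(\mathcal S(y,Y_2,\dots,Y_n))\bigr]-c_n,\qquad Y_j\sim\mathcal R_0,
\]
with $c_n$ a constant chosen so that the estimator is centered. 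Unbiasedness of $\mathcal A_n$ in the first coordinate, with the others fixed at $0$, gives $\mathbb E_{\mathcal R_x}\widehat x_n=x$.

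The conditional expectation given $Y_1$ is the first-order Hoeffding projection of $\mathcal A_n-\bar x$. By symmetry of $\mathcal A_n\circ\mathcal S$ there are $n$ orthogonal such projections, one per user, and their summed variances are bounded by the total variance. With all inputs set to $x$ except the symmetric structure, this gives $n\cdot\frac{1}{n^2}\mathrm{Var}(\widehat x_n(Y))\le\mathsf{Err}_n$. Careful handling is needed because the worst case is over heterogeneous inputs: I would take $x_1=x$ and the others fixed, and bound only the first user's projection, using the symmetry of the shuffled analyzer to equate the contributions of users with equal inputs. Hence $\mathsf{Err}_1(\mathcal R,\widehat x_n)\le n\,\mathsf{Err}_n(\mathcal R,\mathcal A_n)$ up to $o(1)$.

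Taking $\liminf$ over $n$ then yields $\mathsf{Err}^\star_{\mathrm{up}}(\chi)\le n\,\mathsf{Err}_n$ asymptotically, and infimizing over feasible designs gives the lower inequality.
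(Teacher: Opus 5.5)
Your upper bound and the first half of your lower bound (forcing $\chi_{\mathrm{up}}(\mathcal R)\ge\chi$ because $f_{n,\varepsilon_n}(\chi_{\mathrm{up}})/f_{n,\varepsilon_n}(\chi)\to\infty$) are the paper's argument. The gap is in the second half, the claim $\mathsf{Err}_1(\mathcal R,\widehat x_n)\le n\,\mathsf{Err}_n(\mathcal R,\mathcal A_n)$. Write $F$ for the analyzer averaged over the shuffle, and $g_0(y)=\mathbb E[F(y,Y_2,\dots,Y_n)]$ with $Y_j\sim\mathcal R_0$; then $\widehat x_n=ng_0$ (your $c_n$ is $0$). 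The inequality $\mathrm{Var}(F)\ge\sum_i\mathrm{Var}(\mathbb E[F\mid Y_i])$ produces $n$ copies of $\mathrm{Var}(g_0)$ only on the dataset $(0,\dots,0)$. So it controls the risk of $\widehat x_n$ only at the input $0$. On the dataset $(x,0,\dots,0)$ you propose, only user $1$'s projection is $g_0$. The other $n-1$ projections are a different function, since they average over $Y_1\sim\mathcal R_x$. Symmetry of $F$ identifies them with one another, not with user $1$'s. What you actually obtain is $\mathbb E_{\mathcal R_x}\|\widehat x_n(Y)-x\|_2^2\le n^2\,\mathsf{Err}_n$, which is useless because $n\,\mathsf{Err}_n=O(1)$. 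This is not mere slack in the bound. An unbiased analyzer may use the other messages to adapt to the data, for example by re-weighting a component of its estimate whose mean vanishes under every $\mathcal R_x$. Freezing the other users at $\mathcal R_0$ freezes that adaptation at the reference input, so $\sup_x$ of the risk of $ng_0$ need not be comparable to $n\,\mathsf{Err}_n$.

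The paper does not build the single-user estimator from a fixed-reference projection. It invokes the canonicalization of Asi et al.\ (Lemma~\ref{lem:additivization-kernel}) to replace $(\mathcal R,\mathcal A_n)$ by $(K\circ\mathcal R,\mathcal A^+)$, with the additive analyzer and no larger risk, so that $n\,\mathsf{Err}_n$ becomes exactly a single-user risk. It then shows that post-processing cannot decrease the shuffle indices (Lemma~\ref{lem:data-processing-shuffle-index}): the post-processed $\ell_0$ is a conditional expectation of the original one, so the law of total variance applies. This keeps $K\circ\mathcal R$ feasible for the $\chi_{\mathrm{up}}$-constrained problem.

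Your projection idea does work on homogeneous datasets. With all users at $x$, every user's projection is the same function $g_x$, so $ng_x-(n-1)x$ is unbiased for every input and has risk at most $n\,\mathsf{Err}_n$ at the point $x$. The HCR bound of Theorem~\ref{thm:hcr-chiup-lower-bound} is pointwise in $x$, so this yields $n\,\mathsf{Err}_n\ge d\chi^2$ directly, using the same convention at $\perp$ that the HCR argument uses. That is all Theorem~\ref{thm:centrallimittheorem} needs. However, it gives a different estimator for each $x$, so it does not prove the lemma as stated: the left side $\mathsf{Err}^{\star}_{\mathrm{up}}(\chi)$ is an infimum over a single estimator of the supremum over $x$.
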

We perform two reductions.
First, by the unbiasedness constraint, the $n$-user design problem reduces to a single-user design problem~\cite{asiOptimalAlgorithmsMean2022}.
Second, Lemma~\ref{lem:shuffle-index-profile-seq-no-t} converts the shuffled-DP constraint at $\{(\varepsilon_n,\delta_n)\}_{n\ge 1}$ in $\chi$-high privacy regime into shuffle-index constraints with $\chi$. 
The full proof is provided in Section~\ref{app:proof-reduction-shuffle-dp}.

Consequently, we can sandwich the $n$-user design problem~\eqref{eq:minimax-mse-shuffle-l2-dp} between single-user problems under shuffle-index constraints.
Thus, analyzing the lower and upper bounds of these problems yields corresponding bounds for~\eqref{eq:minimax-mse-shuffle-l2-dp}.

\paragraph{Lower bound.}
The upper shuffle index $\chi_{\mathrm{up}}$ controls a $\chi^2$-type discrepancy of the worst-case neighboring output distributions $\mathcal{R}_{x_1}$ and $\mathcal{R}_{x_1'}$.
Intuitively, larger $\chi_{\mathrm{up}}$ means that $\mathcal{R}_{x_1}$ and $\mathcal{R}_{x_1'}$ are harder to distinguish.
Therefore, we can employ the Hammersley-Chapman-Robbins-type lower bound~\cite{lehmannTheoryPointEstimation1998}: under unbiasedness, small distinguishability forces a universal variance lower bound.
Formally, we have the following result.

\begin{restatable}{theorem}{hcrChiUpLowerBound}
\label{thm:hcr-chiup-lower-bound}
Let $\mathcal R\in\mathfrak R$ be a local randomizer with $\chi_{\mathrm{up}}(\mathcal R)<\infty$ and $\widehat x$ be any unbiased estimator.
Then,
\[
\mathsf{Err}_1(\mathcal R,\widehat x)
\ \ge\
d\,\chi_{\mathrm{up}}(\mathcal R)^2.
\]
\end{restatable}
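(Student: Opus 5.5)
We would prove the bound with a Hammersley--Chapman--Robbins (HCR) argument in each coordinate direction, using the zero-out pair $(x_1,x_1')=(u,\perp)$ and the reference distribution $\mathcal R_x$ appearing in the definition of $\chi_{\mathrm{up}}$. Write $\chi:=\chi_{\mathrm{up}}(\mathcal R)$. Fix an orthonormal basis $e_1,\dots,e_d$ of $\mathbb R^d$ and a point $x\in\mathbb B_2^d$. By definition of $\chi_{\mathrm{up}}$, for every $u\in\mathbb B_2^d$,
\[
\mathrm{Var}_{Y\sim\mathcal R_x}\!\left[\ell_0(Y;u,\perp,\mathcal R_x)\right]\le \chi^{-2},
\qquad
\ell_0(Y;u,\perp,\mathcal R_x)=\frac{\mathcal R_u(Y)-\mathcal R_{\mathrm{BG}}(Y)}{\mathcal R_x(Y)}.
\]
Note that $\mathbb E_{\mathcal R_x}[\ell_0]=\int(\mathcal R_u-\mathcal R_{\mathrm{BG}})=0$, assuming the absolute continuity $\mathcal R_u,\mathcal R_{\mathrm{BG}}\ll\mathcal R_x$ granted by the regularity in $\mathfrak R$ (otherwise the variance would be infinite, contradicting $\chi<\infty$).

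The key covariance identity comes from unbiasedness. Set $L_u:=\ell_0(\cdot;u,\perp,\mathcal R_x)$ and take any unit vector $v$. Then
\[
\mathbb E_{\mathcal R_x}\!\left[\langle v,\widehat x(Y)\rangle L_u(Y)\right]
=\langle v,u\rangle-\mathbb E_{\mathcal R_{\mathrm{BG}}}\!\left[\langle v,\widehat x\rangle\right].
\]
To eliminate the unknown blanket term, we subtract the same identity for $-u$. This gives
\[
\mathrm{Cov}_{\mathcal R_x}\!\left(\langle v,\widehat x\rangle,\ L_u-L_{-u}\right)=2\langle v,u\rangle,
\]
using that $L_u-L_{-u}$ has mean zero. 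Since $\mathrm{Var}(L_u-L_{-u})\le(\chi^{-1}+\chi^{-1})^2=4\chi^{-2}$, Cauchy--Schwarz with $u=v$ yields
\[
\mathrm{Var}_{\mathcal R_x}\!\left[\langle v,\widehat x\rangle\right]\ge \frac{4}{4\chi^{-2}}=\chi^{2}.
\]
If instead we keep the blanket term, set $u=v$, and take $x=0$, the single-difference version gives $(1-\langle v,m\rangle)^2\chi^2$ with $m:=\mathbb E_{\mathrm{BG}}[\widehat x]$. That is weaker, so the antipodal trick is the one to use.

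Summing over $v=e_1,\dots,e_d$ gives $\mathbb E_{\mathcal R_x}\|\widehat x(Y)-x\|_2^2=\sum_j\mathrm{Var}[\langle e_j,\widehat x\rangle]\ge d\chi^2$ for every $x$, and hence $\mathsf{Err}_1(\mathcal R,\widehat x)\ge d\chi_{\mathrm{up}}(\mathcal R)^2$. The lower bound therefore holds pointwise at every $x$, not only in the worst case.

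The main obstacle is technical justification. The interchange $\mathbb E_{\mathcal R_x}[\widehat x\,\mathcal R_u/\mathcal R_x]=\mathbb E_{\mathcal R_u}[\widehat x]$ needs the support and absolute-continuity conditions, which we take from Assumption~\ref{assump:regularity}. The covariance and Cauchy--Schwarz steps also need $\widehat x$ to have finite second moment under $\mathcal R_x$; if it does not, the error is infinite and the claim is trivial. Finally, we must check that the zero-out pair $(u,\perp)$ is admissible in the supremum defining $\chi_{\mathrm{up}}$, which it is by the choice $\mathcal R_\perp=\mathcal R_{\mathrm{BG}}$.
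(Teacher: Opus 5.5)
Your proof is correct. It differs from the paper's in one substantive respect.

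The paper runs the same Hammersley--Chapman--Robbins/Cauchy--Schwarz step with the single zero-out pair $(e_j,\perp)$. That step needs $\mathbb E_{Y\sim\mathcal R_x}[\widehat x(Y)\,\ell_0(Y;e_j,\perp,\mathcal R_x)]=e_j$. To get it, the paper imposes inside the proof the extra requirement $\mathbb E_{Y\sim\mathcal R_\perp}[\widehat x(Y)]=0$ (``$\perp$ contributes $0$ to the mean''). This requirement is not part of the theorem's hypothesis, which asks only for unbiasedness on $\mathcal X$, and it is not implied by it in general. The blanket decomposition is not unique, and a non-maximal asymmetric blanket typically gives $\mathbb E_{\mathcal R_{\mathrm{BG}}}[\widehat x]\neq 0$.

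Your antipodal difference $L_{e_j}-L_{-e_j}=(\mathcal R_{e_j}-\mathcal R_{-e_j})/\mathcal R_x$ cancels the blanket term. So you prove the statement exactly as written, which is the form needed when it is applied to problem~\eqref{eq:minimax-variance-chiup}, whose constraint is unbiasedness on $\mathbb B_2^d$ only. The factor $2$ you gain in the covariance is exactly offset by the factor $2$ you lose in the Minkowski bound $\sqrt{\mathrm{Var}(L_u-L_{-u})}\le 2\chi^{-1}$, so the constant $d\chi_{\mathrm{up}}^2$ is unchanged. The only price is symmetry of the domain ($-e_j\in\mathcal X$), which holds for $\mathbb B_2^d$.

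Two side remarks in your write-up need correcting, though neither affects the main argument.

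\begin{itemize}
\item You say the single-difference version is weaker; it is not. Take $u=e_j$ or $u=-e_j$, whichever sign is better. This gives $\mathrm{Var}(\widehat x_j)\ge(1+|m_j|)^2\chi^2\ge\chi^2$, with $m_j=\mathbb E_{\mathcal R_{\mathrm{BG}}}[\widehat x_j]$. So a sign choice alone also removes the need for $m=0$, and it is at least as strong as your bound.
\item In the parenthetical on absolute continuity, an infinite variance would mean $\chi_{\mathrm{up}}(\mathcal R)=0$. That makes the claim trivial; it does not contradict $\chi_{\mathrm{up}}<\infty$. Also, a finite $\mathrm{Var}_{\mathcal R_x}$ does not by itself exclude mass of $\mathcal R_u$ outside the support of $\mathcal R_x$. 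Absolute continuity therefore has to come from the regularity assumption, as you say at the end.
\end{itemize}
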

The proof is found in Appendix~\ref{sec:proof-hcr-chiup-lower-bound}.

\paragraph{Upper bound.}
Next, we show that the lower bound is asymptotically attainable under a
$\chi_{\mathrm{lo}}$ constraint.

\begin{corollary}
\label{cor:chi_lo_upperbound}
As $\chi\to \infty$,
\[
\mathsf{Err}_{\mathrm{lo}}^\star(\chi)
\ \le\
d\,\chi^2 \bigl(1+\frac{3}{2}\chi^{-2/3}+O(\chi^{-4/3})\bigr).
\]
\end{corollary}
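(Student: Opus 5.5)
The plan is to exhibit one explicit feasible pair $(\mathcal R,\widehat x)$ for problem~\eqref{eq:minimax-variance-chilo}, the blanket-mixed Gaussian randomizer, and then tune its two parameters as functions of $\chi$. For a blanket mass $\gamma\in(0,1)$ and a noise scale $s>0$, set $\mathcal R_{\mathrm{BG}}=\mathcal N(0,s^2I_d)$ and $Q_x=\mathcal N(x,s^2I_d)$, so that $\mathcal R_x=\gamma\,\mathcal N(0,s^2I_d)+(1-\gamma)\,\mathcal N(x,s^2I_d)$, with $\mathcal R_\perp=\mathcal R_{\mathrm{BG}}$. As the estimator take $\widehat x(Y)=Y/(1-\gamma)$. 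It is unbiased because $\mathbb E[Y]=(1-\gamma)x$. I would also check that this randomizer lies in $\mathfrak R$, which should follow from the Gaussian tails and absolute continuity, as the paper already indicates for Gaussian-type randomizers.

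\textbf{Step 1: compute $\chi_{\mathrm{lo}}$.} Under zero-out neighboring, one element of each pair is $\perp$ and the other is some $x\in\mathbb B_2^d$.
\begin{itemize}
\item For the pair $(x,\perp)$ we have $\ell_0(Y;x,\perp,\mathcal R_{\mathrm{BG}})=(1-\gamma)\bigl(Q_x(Y)/\mathcal R_{\mathrm{BG}}(Y)-1\bigr)$.
\item Its variance under $\mathcal R_{\mathrm{BG}}$ is $(1-\gamma)^2$ times the $\chi^2$-divergence between two Gaussians, namely $(1-\gamma)^2\bigl(e^{\|x\|_2^2/s^2}-1\bigr)$.
\item The reversed pair $(\perp,x)$ gives the same value.
\item The supremum is attained at $\|x\|_2=1$, so
\[
\chi_{\mathrm{lo}}(\mathcal R)^{-2}=\frac{(1-\gamma)^2}{\gamma}\bigl(e^{1/s^2}-1\bigr).
\]
\end{itemize}

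\textbf{Step 2: compute the worst-case error.} Since $\mathbb E\|Y\|_2^2=ds^2+(1-\gamma)\|x\|_2^2$,
\[
\mathsf{Err}_1=\sup_x\Bigl[\frac{ds^2+(1-\gamma)\|x\|_2^2}{(1-\gamma)^2}-\|x\|_2^2\Bigr]\le \frac{ds^2}{(1-\gamma)^2}+\frac{\gamma}{1-\gamma}.
\]

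\textbf{Step 3: choose the parameters.} Write $t=1-\gamma$. Choose $s$ so that the constraint $\chi_{\mathrm{lo}}\ge\chi$ holds with equality, i.e.\ $e^{1/s^2}-1=\gamma/(t^2\chi^2)$. This gives $s^2=\bigl[\log\bigl(1+\gamma/(t^2\chi^2)\bigr)\bigr]^{-1}$. When $t^2\chi^2\to\infty$, this expands as $s^2=\tfrac{t^2\chi^2}{\gamma}\bigl(1+\tfrac{\gamma}{2t^2\chi^2}+O(t^{-4}\chi^{-4})\bigr)$. Substituting into Step 2,
\[
\mathsf{Err}_1\le d\chi^2\,\frac{1}{1-t}\Bigl(1+\frac{1}{2t^2\chi^2}+\cdots\Bigr)+\frac1t
= d\chi^2\Bigl(1+t+\frac{1}{2t^2\chi^2}+O\bigl(t^2+t^{-1}\chi^{-2}+t^{-4}\chi^{-4}\bigr)\Bigr)+\frac1t.
\]
Minimizing $t+1/(2t^2\chi^2)$ gives $t=\chi^{-2/3}$, at which the sum equals $\tfrac32\chi^{-2/3}$. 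With this choice, every remaining term is $O(\chi^{-4/3})$ relative to $d\chi^2$:
\begin{itemize}
\item $t^2=\chi^{-4/3}$;
\item $t^{-1}\chi^{-2}=\chi^{-4/3}$;
\item $t^{-4}\chi^{-4}=\chi^{-4/3}$;
\item the additive $1/t=\chi^{2/3}$, which is $O(\chi^{-4/3})\cdot d\chi^2$.
\end{itemize}
Also $t^2\chi^2=\chi^{2/3}\to\infty$, which justifies the expansion. This yields $\mathsf{Err}^\star_{\mathrm{lo}}(\chi)\le d\chi^2\bigl(1+\tfrac32\chi^{-2/3}+O(\chi^{-4/3})\bigr)$.

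The main obstacle I expect is the bookkeeping in Step 3. The cross terms between $1/(1-t)$ and the logarithmic expansion must be tracked carefully enough to confirm that none of them contributes at order $\chi^{-2/3}$. A secondary point is confirming membership in $\mathfrak R$, that is, the regularity conditions of Assumption~\ref{assump:regularity}, for this Gaussian mixture.
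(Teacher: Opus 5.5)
Your proposal is correct and is essentially the paper's own argument, namely the proof of Theorem~\ref{thm:gaussian-blanket-mixture-minimax}. It uses the same blanket-mixed Gaussian randomizer and estimator $Y/(1-\gamma)$, the same exact formulas for $\chi_{\mathrm{lo}}$ and for the worst-case error $d s^2/(1-\gamma)^2+\gamma/(1-\gamma)$, imposes the constraint with equality, and makes the same choice $1-\gamma=\chi^{-2/3}$. The only differences are that you derive that choice by minimizing $t+1/(2t^2\chi^2)$, whereas the paper simply posits it, and that the paper additionally gives an explicit Taylor remainder bound for $\log(1+u)$.
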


Corollary~\ref{cor:chi_lo_upperbound} is proved constructively in
Section~\ref{sec:blanket-mixed-gaussian-mechanism}, where we present an explicit
local randomizer (the \emph{blanket-mixed Gaussian} mechanism) together with a
universal unbiased estimator achieving the stated risk.

\subsection{Suboptimality of Existing Mechanisms in $d>1$}
PrivUnit~\cite{duchiLocalPrivacyStatistical2013,bhowmickProtectionReconstructionIts2019} is optimal in LDP~\cite{asiOptimalAlgorithmsMean2022}, but once we apply shuffling, it no longer achieves the optimal performance when $d>1$.
\begin{restatable}{proposition}{privunitconstantgap}
\label{prop:privunit-constant-gap}
Consider the input domain
$\mathcal X = \mathbb S^{d-1}\subset\mathbb R^d$ for PrivUnit where $d>1$. 
Let $\mathrm{PrivUnit}(p,\theta)$ be the PrivUnit local randomizer, and let $\chi_{\mathrm{lo}}:=\chi_{\mathrm{lo}}(\mathrm{PrivUnit}(p,\theta))$.
For any choice of $\{\theta_d\}_{d\ge2}$,
let $\widehat x_d$ be an unbiased estimator for $\mathrm{PrivUnit}(p,\theta_d)$.
Then, $\chi_{\mathrm{lo}}\to \infty$,
\[
\mathsf{Err}_1\!\left(\mathrm{PrivUnit}(p,\theta_d),\widehat x_d\right)
=
C(\theta_d,d)d \chi_{\mathrm{lo}}^2\Bigl(1+O(\chi_{\mathrm{lo}}^{-1})\Bigr)
\]
for some quantity $C(\theta_d,d)>0$ depending only on $(\theta_d,d)$.
Moreover, the best achievable leading constant is bounded away from~$1$:
\[
\liminf_{d\to\infty}\ \inf_{\theta\in[-1,1]} C(\theta,d)
\ \ge\ \frac{\pi}{2}.
\]
\end{restatable}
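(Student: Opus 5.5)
The plan is to compute both sides of the claimed identity in closed form for PrivUnit, read off $C(\theta,d)$ explicitly, and then minimize the resulting expression over $\theta$ in the limit $d\to\infty$ via a Gaussian approximation of spherical marginals.

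\textbf{Setup and blanket decomposition.} Recall that $\mathrm{PrivUnit}(p,\theta)$ on input $u\in\mathbb S^{d-1}$ does the following:
\begin{itemize}
\item with probability $p$ it outputs $V$ uniform on the cap $\{v:\langle v,u\rangle\ge\theta\}$;
\item otherwise it outputs $V$ uniform on the complement of that cap.
\end{itemize}
Let $t=\langle V,u\rangle$ for $V$ uniform on $\mathbb S^{d-1}$. Set $q:=\Pr[t\ge\theta]$ and $\mu:=\mathbb E[t\,\mathbf 1\{t\ge\theta\}]$, so that $\mathbb E[t\,\mathbf 1\{t<\theta\}]=-\mu$. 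Relative to the uniform measure, $\mathcal R_u$ has density $a=p/q$ on the cap and $b=(1-p)/(1-q)$ off it. I take $a>b$, which is the useful regime; the other case is symmetric. The blanket is then the uniform distribution with mass $\gamma=b$.

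\textbf{Lower shuffle index.} Under zero-out adjacency the only neighboring pairs are $(u,\perp)$, and $\mathcal R_\perp=\mathcal R_{\mathrm{BG}}$ is uniform. Hence $\ell_0(Y;u,\perp,\mathcal R_{\mathrm{BG}})$ takes the value $a-1$ on the cap and $b-1$ off it. Its variance under the uniform law is $(a-b)^2q(1-q)$, independent of $u$ by rotation invariance, and the pair $(\perp,u)$ gives the same variance. Therefore
\[
\chi_{\mathrm{lo}}^{-2}=\frac{(a-b)^2q(1-q)}{b}.
\]
In particular, $\chi_{\mathrm{lo}}\to\infty$ forces $a-b\to0$, i.e.\ $p\to q$ and $b\to1$.

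\textbf{Error.} A symmetry computation gives $\mathbb E[V]=mu$ with $m=p\,\mathbb E[t\mid\text{cap}]+(1-p)\,\mathbb E[t\mid\text{off}]=(a-b)\mu$. For the standard unbiased estimator $\widehat x(V)=V/m$, the error is
\[
\mathsf{Err}_1=\frac{1}{m^2}-1.
\]
I would argue that any rotation-equivariant unbiased estimator is of this form. If the paper intends arbitrary $\widehat x_d$, I would add a symmetrization step: averaging over rotations does not increase the worst-case MSE and preserves unbiasedness. Combining with the formula for $\chi_{\mathrm{lo}}$,
\[
\frac{\mathsf{Err}_1}{d\chi_{\mathrm{lo}}^2}=\frac{q(1-q)}{d\mu^2}\cdot\frac{1}{b}\,\bigl(1-m^2\bigr).
\]
Since $1/b=1+O(a-b)=1+O(\chi_{\mathrm{lo}}^{-1})$ and $m^2=O(\chi_{\mathrm{lo}}^{-2})$, this identifies
\[
C(\theta,d)=\frac{q(1-q)}{d\,\mu^2}
\]
with the stated $1+O(\chi_{\mathrm{lo}}^{-1})$ correction.

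\textbf{Limit $d\to\infty$ (main obstacle).} Write $s=\sqrt d\,\theta$. Since $\sqrt d\,t\Rightarrow\mathcal N(0,1)$, we get $q\to\bar\Phi(s)$ and $\sqrt d\,\mu\to\phi(s)$, so
\[
C(\theta,d)\to \frac{\Phi(s)\bar\Phi(s)}{\phi(s)^2}.
\]
This function is even in $s$. The inequality $\Phi(s)\bar\Phi(s)\ge\frac{\pi}{2}\phi(s)^2$ holds with equality at $s=0$, where the value is $\frac{1/4}{1/(2\pi)}=\frac{\pi}{2}$. I would prove it by showing that $g(s)=\Phi(s)\bar\Phi(s)/\phi(s)^2$ is nondecreasing for $s\ge0$. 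The derivative reduces to $2s\,\Phi\bar\Phi\ge\phi(\Phi-\bar\Phi)$, which follows from Mills-ratio-type bounds.

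The real difficulty is uniformity of the limit in $\theta$, which is needed to exchange $\liminf_d$ with $\inf_\theta$. I would split into two ranges of $s$:
\begin{itemize}
\item For $|s|\le K$, use the explicit density $\propto(1-t^2)^{(d-3)/2}$ of $t$. This gives uniform local convergence of both $q$ and $\sqrt d\,\mu$ to their Gaussian limits.
\item For $|s|>K$, use direct tail bounds on that density. These show $d\mu^2\lesssim q(1-q)\cdot O(1/s^2)$, so $C(\theta,d)$ is large there; in particular it exceeds $\pi/2$ once $K$ is chosen large enough.
\end{itemize}
Choosing $K$ large first and then letting $d\to\infty$ yields
\[
\liminf_{d\to\infty}\inf_\theta C(\theta,d)\ge\frac{\pi}{2}.
\]
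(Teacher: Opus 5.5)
Your proposal is correct and follows essentially the same route as the paper: the same $\chi_{\mathrm{lo}}^{-2}=(a-b)^2q(1-q)/b$ (the paper's $\Delta^2/(\gamma q(1-q))$ with the roles of $q$ and $1-q$ swapped), the same rotational symmetrization reducing to $\widehat x(V)=V/m$, the same constant $C(\theta,d)=q(1-q)/(d\mu^2)$, and the same Gaussian limit $\Phi(s)\bar\Phi(s)/\phi(s)^2$ minimized at $s=0$. The one real difference is your split into $|s|\le K$ and $|s|>K$ to exchange $\liminf_{d}$ with $\inf_\theta$; the paper only treats sequences with $\theta_d\sqrt d=O(1)$ and then asserts the bound, so your argument fills a step the paper leaves implicit.
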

We defer the proof to Appendix~\ref{sec:proof-privunit-constant-gap}.
Intuitively, PrivUnit is constrained to a two-level cap-vs-complement reweighting of the uniform measure on $\mathbb S^{d-1}$.
Therefore, once the shuffle-index budget fixes the allowable reweighting magnitude, there is essentially no remaining freedom to maximize the input-direction mean shift that controls the risk, which leads to suboptimality.

\paragraph{Optimality in the case of $d=1$.}
In one dimension, $\mathrm{PrivUnit}$ reduces to randomized response $\mathrm{RR}$, which is the important primitive in shuffling (see, e.g., \citet{ballePrivacyBlanketShuffle2019,asiPrivateVectorMean2024}).
$\mathrm{RR}$ asymptotically attains the lower bound.

\begin{restatable}{proposition}{privunitconstantgapdOne}
\label{prop:privunit-constant-gap-d1}
Consider the input domain $\mathcal X=\mathbb S^{0}\subset\mathbb R$ for $\mathrm{RR}(p)$.
Let $\mathrm{RR}(p)$ be the randomized response local randomizer, and let $\chi_{\mathrm{lo}}$ be its lower shuffle index.
Let $\widehat x_1$ be an unbiased estimator for
$\mathrm{RR}(p)$.
Then, $\chi_{\mathrm{lo}}\to \infty$,
\[
\mathsf{Err}_1\!\left(\mathrm{RR}(p),\widehat x_1\right)
=
\chi_{\mathrm{lo}}^2
\left(1+\chi_{\mathrm{lo}}^{-1}+O(\chi_{\mathrm{lo}}^{-2})\right).
\]
\end{restatable}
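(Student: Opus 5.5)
The plan is a direct computation: randomized response has only two outputs, so the blanket decomposition, the unbiased estimator and the lower shuffle index all have closed forms in $p$. I then eliminate $p$ in favour of $\chi_{\mathrm{lo}}$. Throughout, $\mathrm{RR}(p)$ on $\mathcal X=\mathbb S^0=\{-1,+1\}$ outputs $y\in\{-1,+1\}$ with $\Pr[y=x]=p$, where $p\in(1/2,1)$. The regime $\chi_{\mathrm{lo}}\to\infty$ will turn out to be exactly $p\downarrow 1/2$.

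\textbf{Step 1 (blanket decomposition and estimator).} Set $q:=2p-1\in(0,1)$. Then
\[
\mathrm{RR}(p)_x=(1-q)\,\mathrm{Unif}\{\pm1\}+q\,\mathbb 1_{\{x\}},
\]
so the blanket distribution is uniform with mass $\gamma=1-q$, and $\mathcal R_\perp$ is uniform. Since the output space has two points and the inputs are $\pm1$, the unbiasedness constraints $\mathbb E[\widehat x_1(Y)]=x$ for $x=\pm1$ are two linear equations in the two values $\widehat x_1(\pm1)$. Their determinant is $p^2-(1-p)^2=q\neq0$, so the solution is unique: $\widehat x_1(y)=y/q$. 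This uniqueness is why the statement holds for every unbiased $\widehat x_1$. Its MSE does not depend on $x$:
\[
\mathsf{Err}_1=\mathbb E[\widehat x_1(Y)^2]-x^2=q^{-2}-1 .
\]

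\textbf{Step 2 (lower shuffle index).} Under zero-out adjacency, one element of each pair $x_1\simeq x_1'$ is $\perp$. For $x_1=x\in\{\pm1\}$ and $x_1'=\perp$, we have $\ell_0(Y;x,\perp,\mathcal R_{\mathrm{BG}})=2\mathcal R_x(Y)-1=\pm q$, with mean zero under the uniform law. Its variance is therefore $q^2$. The reversed pair only flips the sign, and both choices of $x$ give the same value. Hence
\[
\chi_{\mathrm{lo}}^{-2}=\frac{q^2}{\gamma}=\frac{q^2}{1-q},
\qquad\text{i.e.}\qquad \chi_{\mathrm{lo}}^2q^2=1-q .
\]
The right-hand side of $\chi_{\mathrm{lo}}^2=(1-q)/q^2$ is decreasing in $q$, so $\chi_{\mathrm{lo}}\to\infty$ exactly when $q\to0$.

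\textbf{Step 3 (asymptotic inversion).} This is the only delicate point, since the second-order term has to be tracked. From Step 2, $\mathsf{Err}_1=q^{-2}-1=\chi_{\mathrm{lo}}^2/(1-q)-1$. Solving the quadratic $\chi^2q^2+q-1=0$ gives
\[
q=\frac{\sqrt{1+4\chi^2}-1}{2\chi^2}=\chi^{-1}-\tfrac12\chi^{-2}+O(\chi^{-3}).
\]
Expanding $(1-q)^{-1}=1+q+q^2+O(q^3)$, and using $\chi^2q^2=1-q=1+O(\chi^{-1})$, we get
\[
\frac{\chi^2}{1-q}=\chi^2+\bigl(\chi-\tfrac12\bigr)+1+O(\chi^{-1}).
\]
Subtracting $1$ gives $\mathsf{Err}_1=\chi^2+\chi-\tfrac12+O(\chi^{-1})=\chi_{\mathrm{lo}}^2\bigl(1+\chi_{\mathrm{lo}}^{-1}+O(\chi_{\mathrm{lo}}^{-2})\bigr)$, which is the claim. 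The main thing to get right is the constant-order bookkeeping: the $+1$ from $\chi^2q^2$ cancels the $-1$ in the MSE. These terms only enter the $O(\chi^{-2})$ remainder, so the stated expansion is robust to small slips there.

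Comparing with Theorem~\ref{thm:hcr-chiup-lower-bound} for $d=1$, this shows that $\mathrm{RR}$ attains the lower bound to leading order.
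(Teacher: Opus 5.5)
Your proposal is correct and follows essentially the same route as the paper: the uniform blanket with mass $\gamma=1-q$, the estimator $Y/q$ with $\mathsf{Err}_1=q^{-2}-1$, the relation $\chi_{\mathrm{lo}}^2q^2=1-q$, and inversion through the positive root of the quadratic. Your observation that the two unbiasedness equations (determinant $q\neq 0$) force $\widehat x_1(y)=y/q$ is a small addition: it justifies the claim for an arbitrary unbiased estimator, which the paper assumes only tacitly by fixing $Y/t$.
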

See Appendix~\ref{app:privunit-constant-gap-d1} for the proof.

\begin{remark}[Kashin-based reductions]
Kashin-based reductions~\cite{chenBreakingCommunicationPrivacyAccuracyTrilemma2020a,chenPrivacyAmplificationCompression2023, feldmanStatisticalQueryAlgorithms2021} provide a convenient way to turn $d$-dimensional mean
estimation over $\mathbb{B}_2^d$ into a collection of scalar problems by mapping
$x$ to a coefficient vector $z=z(x)\in\mathbb{R}^N$ in a fixed frame so that
$x = Uz$ and $\|z\|_\infty \le \tfrac{K}{\sqrt{N}}\|x\|_2$.
This allows one to plug in a mechanism that is (near-)optimal in the
one-dimensional setting described above at the level of the
induced scalar subproblem.
However, the reduction itself typically incurs a nontrivial distortion constant $K=\Theta(1)$ (and generally $K>1$) in worst case.
Consequently, the overall protocol can suffer a constant-factor loss in worst-case MSE due solely to the $\ell_2 \to \ell_\infty$ coordinate transformation.
\end{remark}

\section{Blanket-Mixed Gaussian Mechanism}
\label{sec:blanket-mixed-gaussian-mechanism}
The shuffle-index formulation implies that existing mechanisms can be suboptimal after shuffling.
Here, we construct an explicit local randomizer that matches the shuffle-index lower bound asymptotically.

\subsection{Mechanism}
Algorithm~\ref{alg:blanket_mixed_gaussian} defines our blanket-mixed Gaussian local randomizer.
Given input $x$, it outputs a blanket sample $Y\sim\mathcal N(0,\sigma_0^2 I_d)$ with probability $\gamma$, and with probability $1-\gamma$ it outputs an informative sample $Y\sim\mathcal N(x,\sigma_0^2 I_d)$.
The role of $\gamma$ is to explicitly embed a blanket into the mechanism; the mechanism is designed for amplification via shuffling.

\paragraph{Unbiased estimator}
Given a single message $Y\in\mathbb R^d$ produced by
Algorithm~\ref{alg:blanket_mixed_gaussian}, we use the linear estimator
\begin{equation}\label{eq:gaussian-blanket-unbiased-estimator}
\widehat x(Y):=\frac{1}{1-\gamma}\,Y .
\end{equation}
This choice is universal.
Indeed, for any $x\in\mathcal X$, Algorithm~\ref{alg:blanket_mixed_gaussian}
outputs $Y\sim \gamma\,\mathcal N(0,\sigma_0^2 I_d)+(1-\gamma)\,\mathcal N(x,\sigma_0^2 I_d)$,
so $\mathbb E[Y]= (1-\gamma)x$ and hence
\[
\mathbb E_{Y\sim \mathcal R_x}\big[\widehat x(Y)\big]
=
\frac{1}{1-\gamma}\,\mathbb E[Y]
=
x.
\]

\begin{algorithm}[tb]
  \caption{Blanket-Mixed Gaussian Mechanism}
  \label{alg:blanket_mixed_gaussian}
  \begin{algorithmic}[1]
    \STATE {\bfseries Input:} $w\in\mathcal X_\perp$, where $\mathcal X:=\{x\in\mathbb R^d:\|x\|_2\le 1\}$ and $\mathcal X_\perp:=\mathcal X\cup\{\perp\}$
    \STATE {\bfseries Parameters:} dimension $d\ge 1$, blanket mass $\gamma\in(0,1]$, Gaussian scale $\sigma_0>0$
    \STATE {\bfseries Output:} a single message $Y\in\mathbb R^d$

    \IF{$w=\perp$}
      \STATE Sample $Y\sim\mathcal N(0,\sigma_0^2 I_d)$
      \STATE \textbf{return} $Y$
    \ENDIF

    \STATE Sample $B\sim\mathrm{Bernoulli}(\gamma)$
    \IF{$B=1$}
      \STATE \hspace{1em} Sample $Y\sim\mathcal N(0,\sigma_0^2 I_d)$
    \ELSE
      \STATE \hspace{1em} Sample $Y\sim\mathcal N(w,\sigma_0^2 I_d)$
    \ENDIF

    \STATE \textbf{return} $Y$
  \end{algorithmic}
\end{algorithm}

\subsection{Analysis and Parameter Optimization}
\label{sec:analysis}

Let $\mathcal R^{\mathsf{BMG}}$ be the Blanket-Mixed Gaussian Mechanism ($\mathsf{BMG}$) in Algorithm~\ref{alg:blanket_mixed_gaussian} with parameters $(\gamma,\sigma_0)$.
Then its lower shuffle index admits the exact expression
\[
\chi_{\mathrm{lo}}\!\left(\mathcal R^{\mathsf{BMG}}\right)^2
=
\frac{\gamma}{(1-\gamma)^2}\cdot \frac{1}{e^{1/\sigma_0^2}-1}.
\]

We can express $\sigma_0$ in terms of $\gamma$ and $\chi_{\mathrm{lo}}$.
Therefore, for a fixed lower shuffle index budget $\chi>0$, substituting $\sigma_0$ into the expression of the risk yields
\[
\mathsf{Err}_1
=
\frac{d}{(1-\gamma)^2\,\log\!\Bigl(1+\frac{\gamma}{(1-\gamma)^2\chi^2}\Bigr)}
+
\frac{\gamma}{1-\gamma}.
\]

Therefore, we can easily optimize $\gamma$ numerically to minimize
$\mathsf{Err}_1$ under the constraint $\chi_{\mathrm{lo}}(\mathcal
R^{\mathsf{BMG}})\ge \chi$. This optimization yields the following result.
\begin{restatable}{theorem}{mixgauss}
\label{thm:gaussian-blanket-mixture-minimax}
Fix $d\ge 1$.
There exists a choice of parameters $(\gamma,\sigma_0)$ such that
$\chi_{\mathrm{lo}}(\mathcal R^{\mathsf{BMG}})\ge \chi$ and, as $\chi\to\infty$,
\[
\mathsf{Err}_1(\mathcal R^{\mathsf{BMG}},\widehat x)
=
d\,\chi^{2}\left(1+\frac{3}{2}\chi^{-2/3}
+O\!\left(\chi^{-4/3}\right)\right).
\]
\end{restatable}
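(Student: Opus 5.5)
The plan has three parts: verify the stated closed form of $\chi_{\mathrm{lo}}$ and of the risk, reduce the problem to a one-variable minimization over $\gamma$, and then choose $\gamma\to0$ at the right rate as $\chi\to\infty$ and expand asymptotically.

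\emph{Shuffle index.} With $\mathcal R_{\mathrm{BG}}=\mathcal N(0,\sigma_0^2I_d)$ and $\mathcal R_\perp=\mathcal R_{\mathrm{BG}}$, a neighboring pair $(x_1,\perp)$ with $x_1\in\mathcal X$ gives
\[
\ell_0 = \frac{\mathcal R_{x_1}-\mathcal R_{\mathrm{BG}}}{\mathcal R_{\mathrm{BG}}} = (1-\gamma)\Bigl(\frac{\phi_{x_1}}{\phi_0}-1\Bigr),
\]
where $\phi_\mu$ denotes the $\mathcal N(\mu,\sigma_0^2I_d)$ density. Under $\phi_0$ the likelihood ratio has mean $1$ and second moment $e^{\|x_1\|^2/\sigma_0^2}$. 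Hence
\[
\mathrm{Var}[\ell_0]=(1-\gamma)^2\bigl(e^{\|x_1\|^2/\sigma_0^2}-1\bigr),
\]
which is maximized at $\|x_1\|=1$. Dividing by $\gamma$ and inverting gives the stated $\chi_{\mathrm{lo}}^2$. The reversed pair $(\perp,x_1)$ only flips the sign of $\ell_0$, so it yields the same variance.

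\emph{Risk.} For $\widehat x=Y/(1-\gamma)$ we have
\[
\mathbb E\|\widehat x-x\|^2=\frac{\mathbb E\|Y\|^2-(1-\gamma)^2\|x\|^2}{(1-\gamma)^2}.
\]
Since $\mathbb E\|Y\|^2=d\sigma_0^2+(1-\gamma)\|x\|^2$, this equals
\[
\frac{d\sigma_0^2}{(1-\gamma)^2}+\frac{\gamma\|x\|^2}{1-\gamma},
\]
and the supremum is attained at $\|x\|=1$. Setting $\chi_{\mathrm{lo}}=\chi$ exactly gives
\[
\sigma_0^2=\Bigl(\log\bigl(1+\gamma/((1-\gamma)^2\chi^2)\bigr)\Bigr)^{-1},
\]
which reproduces the displayed formula for $\mathsf{Err}_1$. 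This also confirms that the constraint $\chi_{\mathrm{lo}}\ge\chi$ holds. I would briefly check that $\mathcal R^{\mathsf{BMG}}\in\mathfrak R$; this should follow because it is a Gaussian mixture with the blanket as a component.

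\emph{Asymptotic optimization.} Write $u=\gamma/((1-\gamma)^2\chi^2)$, which will tend to $0$. Use $1/\log(1+u)=1/u+1/2+O(u)$. The first term of the risk becomes
\[
\frac{d}{(1-\gamma)^2}\Bigl(\frac{(1-\gamma)^2\chi^2}{\gamma}+\frac12+O(u)\Bigr)=\frac{d\chi^2}{\gamma}+O(d).
\]
The total is therefore $d\chi^2/\gamma+\gamma/(1-\gamma)+O(1)$. Optimizing naively would send $\gamma\to1$, which is wrong. The main obstacle is to redo this expansion correctly, since the leading term $d\chi^2/\gamma$ does not match the target $d\chi^2$ when $\gamma<1$.

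Looking again at the target, I should instead take $\gamma\to1$: put $\gamma=1-s$ with $s\to0$. Then $u=(1-s)/(s^2\chi^2)$, and $u$ need not be small. The risk is
\[
\frac{d}{s^2\log(1+u)}+\frac{1-s}{s}.
\]
If $s\chi\to\infty$, then $u\to0$, and using $1/\log(1+u)=1/u+\tfrac12-\tfrac{u}{12}+O(u^2)$ gives
\[
\frac{d}{s^2}\Bigl(\frac{s^2\chi^2}{1-s}+\frac12+O(u)\Bigr)+\frac1s-1
= d\chi^2(1+s+O(s^2))+\frac{d}{2s^2}+\frac1s+O\bigl(d/(s^4\chi^2)\bigr).
\]
The dominant tradeoff is between $d\chi^2 s$ and $d/(2s^2)$. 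Minimizing $\chi^2 s+1/(2s^2)$ gives $s=\chi^{-2/3}$ and value $\tfrac32\chi^{4/3}$, i.e.\ a relative excess of $\tfrac32\chi^{-2/3}$, which matches the claim.

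To finish, I would set $s=\chi^{-2/3}$ exactly and collect the remaining terms. These are $d\chi^2 O(s^2)=O(d\chi^{2/3})$, then $1/s=\chi^{2/3}$, and then $d/(s^4\chi^2)=d\chi^{2/3}$. Together they contribute $O(\chi^{-4/3})$ relative to $d\chi^2$. The only delicate point is making the $\log(1+u)$ expansion rigorous with explicit $O(u^2)$ remainders, using $u\sim\chi^{-2/3}\to0$.
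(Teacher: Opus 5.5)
Your proposal is correct and follows essentially the same route as the paper. Both use the exact formulas $\chi_{\mathrm{lo}}^2=\gamma/\bigl((1-\gamma)^2(e^{1/\sigma_0^2}-1)\bigr)$ and worst-case risk $d\sigma_0^2/(1-\gamma)^2+\gamma/(1-\gamma)$, the choice $1-\gamma=\chi^{-2/3}$, and the expansion $1/\log(1+u)=1/u+\tfrac12+O(u)$, which yields the excess $d\chi^2 s+d/(2s^2)=\tfrac32 d\chi^{4/3}$. The only difference is that you derive $s=\chi^{-2/3}$ by minimizing $\chi^2 s+1/(2s^2)$, whereas the paper simply sets it; you should also delete the opening remark about choosing $\gamma\to 0$, since your final argument sends $\gamma\to 1$.
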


The detailed derivation of $\chi_{\mathrm{lo}}$ and $\mathsf{Err}_1$ and the
proof are deferred to Appendix~\ref{sec:proof-gaussian-blanket-mixture-minimax}.

\subsection{The Gaussian local randomizer ($\gamma=0$ case)}
\label{sec:gamma0}

When $\chi_{\mathrm{up}}>\chi_{\mathrm{lo}}$, this upper bound can be loose, and in fact $\chi_{\mathrm{up}}>\chi_{\mathrm{lo}}$ holds for most local randomizers, including $\mathsf{BMG}$.
While the relative gap can vanish in the high-privacy regime (i.e., $\chi_{\mathrm{up}}/\chi_{\mathrm{lo}}\to 1$ as $\chi_{\mathrm{lo}}\to\infty$), the indices typically remain distinct for finite parameters, so a nonzero absolute gap may persist.

A particularly interesting special case is the Gaussian local randomizer obtained by setting $\gamma=0$ in $\mathsf{BMG}$.
\begin{restatable}{proposition}{gaussianRiskChiUp}
\label{prop:gaussian-risk-chiup}
Consider the Gaussian local randomizer
$
\mathcal R_x^{\mathsf{GL}}=\mathcal N(x,\sigma_0^2 I_d),x\in\mathbb B_2^d,
$
together with the unbiased estimator $\widehat x(Y)=Y$.
Let $\chi_{\mathrm{chua}}
  :=
  1/\sqrt{
    \mathrm{Var}_{Y\sim \mathcal{R}^\mathsf{GL}_{-e}}
    \!\left[
      \ell_0\!\left(Y; e,0,\mathcal{R}^{\mathsf{GL}}_{-e}\right)
    \right]
  }$
  where $e\in\mathbb{S}^{d-1}$.
Then, as $\sigma_0\to\infty$ (equivalently $\chi_{\mathrm{chua}}\to\infty$),
\[
\mathsf{Err}_1(\mathcal R^{\mathsf{GL}},\widehat x)
=
d\,\chi_{\mathrm{chua}}^{2}
\left(
1+\frac{9}{2}\chi_{\mathrm{chua}}^{-2}
+O(\chi_{\mathrm{chua}}^{-4})
\right).
\]
\end{restatable}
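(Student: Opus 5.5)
The plan is to compute both sides of the claimed identity in closed form as functions of $s:=1/\sigma_0^2$, and then invert the relation between $\chi_{\mathrm{chua}}$ and $\sigma_0$ asymptotically.

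\emph{The risk.} Since $\widehat x(Y)=Y$ and $Y-x\sim\mathcal N(0,\sigma_0^2 I_d)$ for every $x\in\mathbb B_2^d$, we get $\mathbb E\|\widehat x(Y)-x\|_2^2=d\sigma_0^2$ for all $x$. Hence $\mathsf{Err}_1(\mathcal R^{\mathsf{GL}},\widehat x)=d\sigma_0^2$ exactly, with no supremum issue.

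\emph{The index $\chi_{\mathrm{chua}}$.} Write $\varphi_a$ for the density of $\mathcal N(a,\sigma_0^2 I_d)$. Then
\[
\ell_0\bigl(Y;e,0,\mathcal R^{\mathsf{GL}}_{-e}\bigr)=\frac{\varphi_e(Y)-\varphi_0(Y)}{\varphi_{-e}(Y)} .
\]
Its mean under $\varphi_{-e}$ is $\int(\varphi_e-\varphi_0)=0$. So the variance equals the second moment $\int(\varphi_e-\varphi_0)^2/\varphi_{-e}$. The key tool is the Gaussian identity
\[
\int_{\mathbb R^d}\frac{\varphi_a\varphi_b}{\varphi_c}\,dy=\exp\!\Bigl(\frac{\langle a-c,\,b-c\rangle}{\sigma_0^2}\Bigr),
\]
which follows by completing the square in the exponent. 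Applying it with $c=-e$ and $\|e\|_2=1$ gives $e^{4s}$ for $(a,b)=(e,e)$, $e^{2s}$ for $(e,0)$, and $e^{s}$ for $(0,0)$. Therefore
\[
\chi_{\mathrm{chua}}^{-2}=e^{4s}-2e^{2s}+e^{s}.
\]
This expression is independent of $d$ and of the choice of $e\in\mathbb S^{d-1}$, and it decreases to $0$ as $\sigma_0\to\infty$. This justifies the stated equivalence between $\sigma_0\to\infty$ and $\chi_{\mathrm{chua}}\to\infty$.

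\emph{Asymptotic inversion.} Taylor expanding the three exponentials, the constant terms cancel and
\[
\chi_{\mathrm{chua}}^{-2}=s+\tfrac{9}{2}s^2+O(s^3),
\]
since the $s^2$ coefficient is $8-4+\tfrac12=\tfrac92$. Inverting gives $s=\chi_{\mathrm{chua}}^{-2}\bigl(1+O(\chi_{\mathrm{chua}}^{-2})\bigr)$, and then
\[
\sigma_0^2=\frac1s=\chi_{\mathrm{chua}}^2\Bigl(1+\tfrac92 s+O(s^2)\Bigr)=\chi_{\mathrm{chua}}^2\Bigl(1+\tfrac92\chi_{\mathrm{chua}}^{-2}+O(\chi_{\mathrm{chua}}^{-4})\Bigr).
\]
Multiplying by $d$ yields the claim.

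\emph{Main obstacle.} The only real computation is the cross-term Gaussian integral, in particular getting the inner products relative to the reference centre $-e$ right. A sign or centring error there would change the $\tfrac92$ coefficient. The series inversion is routine once the second-order coefficient is pinned down, but I would also record the $s^3$ term of the expansion to confirm that the remainder is genuinely $O(\chi_{\mathrm{chua}}^{-4})$.
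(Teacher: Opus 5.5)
Your proposal is correct and follows essentially the paper's route: the risk is exactly $d\sigma_0^2$, $\chi_{\mathrm{chua}}^{-2}=e^{4s}-2e^{2s}+e^{s}=s+\tfrac92 s^2+O(s^3)$, and inverting this series gives the claim. The only cosmetic difference is how the second moment is computed. You use the completing-the-square identity $\int\varphi_a\varphi_b/\varphi_c\,dy=\exp(\langle a-c,b-c\rangle/\sigma_0^2)$ directly in $\mathbb R^d$, whereas the paper rotates to one dimension and uses the Gaussian MGF.
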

See Appendix~\ref{app:gaussian} for the proof.
Proposition~\ref{prop:gaussian-risk-chiup} shows that the Gaussian local randomizer has a particularly sharp utility with respect to $\chi_{\mathrm{chua}}$: the excess over the leading term decays at rate $O(\chi^{-2}_\mathrm{chua})$, which is faster than $\mathsf{BMG}$, whose excess term is $O(\chi_{\mathrm{lo}}^{-2/3})$.

$\chi_{\mathrm{chua}}$ is induced from the conjecture 3.2 of~\citet{chuaHowPrivateAre2024}, which implicates that, for $\mathcal{R}^\mathsf{GL}$, the privacy profile can be upper bounded in terms of $\chi_{\mathrm{chua}}$, namely as $n\to\infty$,
\[
  \delta_{\mathcal{S}\circ\mathcal{R}^{\mathsf{GL} n}}(\varepsilon_n)\ \le\ f_{n,\varepsilon_n}\!\left(\chi_{\mathrm{chua}}\right)(1+o(1)).
\]
This means that we can use $\chi_{\mathrm{chua}}$ instead of $\chi_{\mathrm{lo}}(\mathcal{R}^{\mathsf{GL}})$ to upper bound the privacy guarantee.
Unfortunately, the conjecture is currently open, so we cannot yet turn this observation into an unconditional privacy–utility guarantee.
While our numerical search did not reveal counterexamples, this does not constitute evidence of the conjecture’s validity.
Thus, proving or refuting this conjecture remains a significant and challenging open problem, and would directly sharpen the privacy-utility trade-off.

\section{Numerical Evaluation}

\begin{figure*}[t]
  \centering
  \begin{minipage}[t]{0.32\textwidth}
    \centering
    \includegraphics[width=\linewidth]{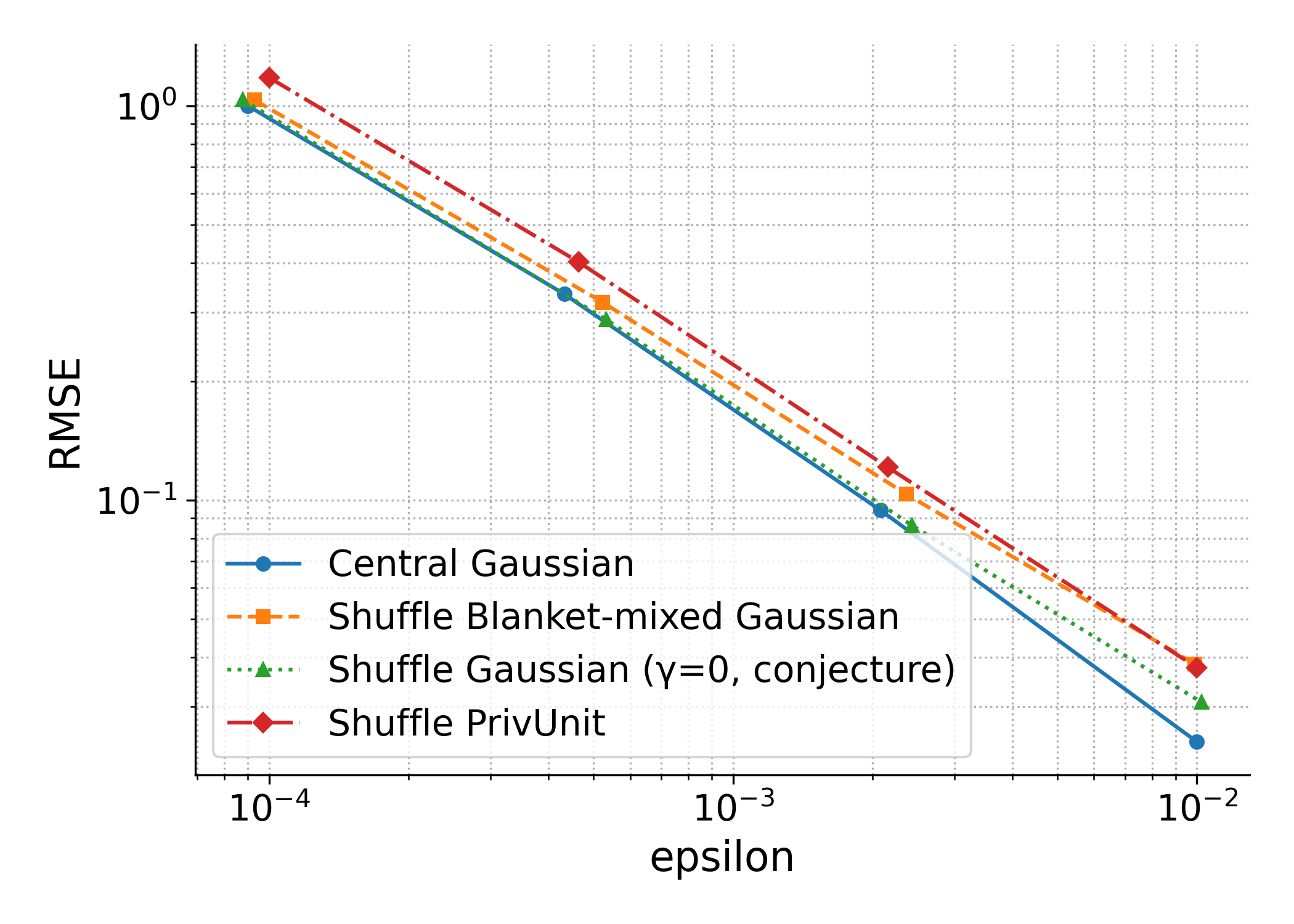}
    \caption{
      Utility comparison at fixed $(n,\delta)=(10^4,10^{-5})$.
    }
    \label{fig:utility}
  \end{minipage}\hfill
  \begin{minipage}[t]{0.32\textwidth}
    \centering
    \includegraphics[width=\linewidth]{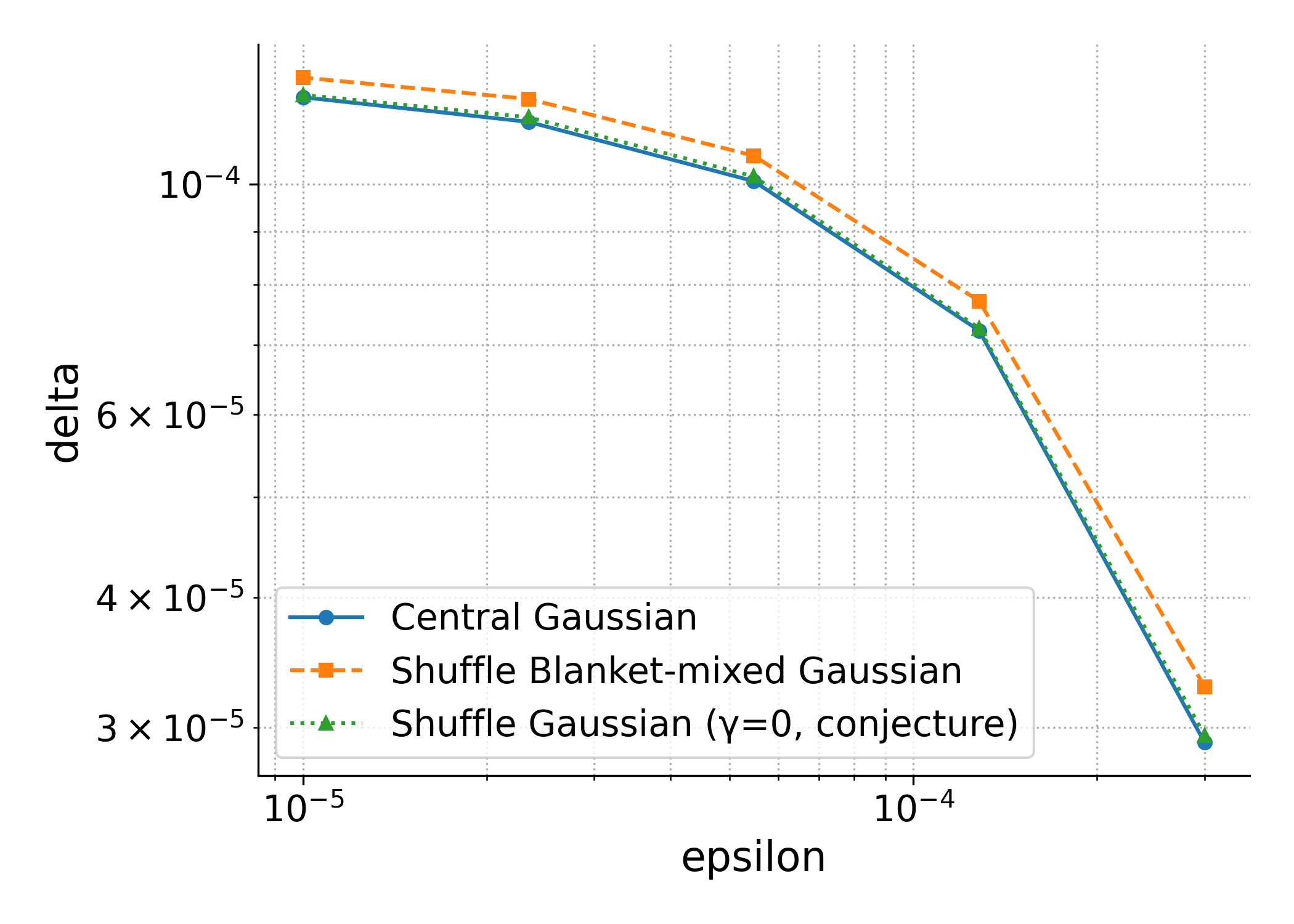}
    \caption{
      Privacy profile (upper bound) comparison at fixed $(n,\mathrm{RMSE})=(10^3,3.16)$.
    }
    \label{fig:privacy}
  \end{minipage}\hfill
  \begin{minipage}[t]{0.32\textwidth}
    \centering
    \includegraphics[width=\linewidth]{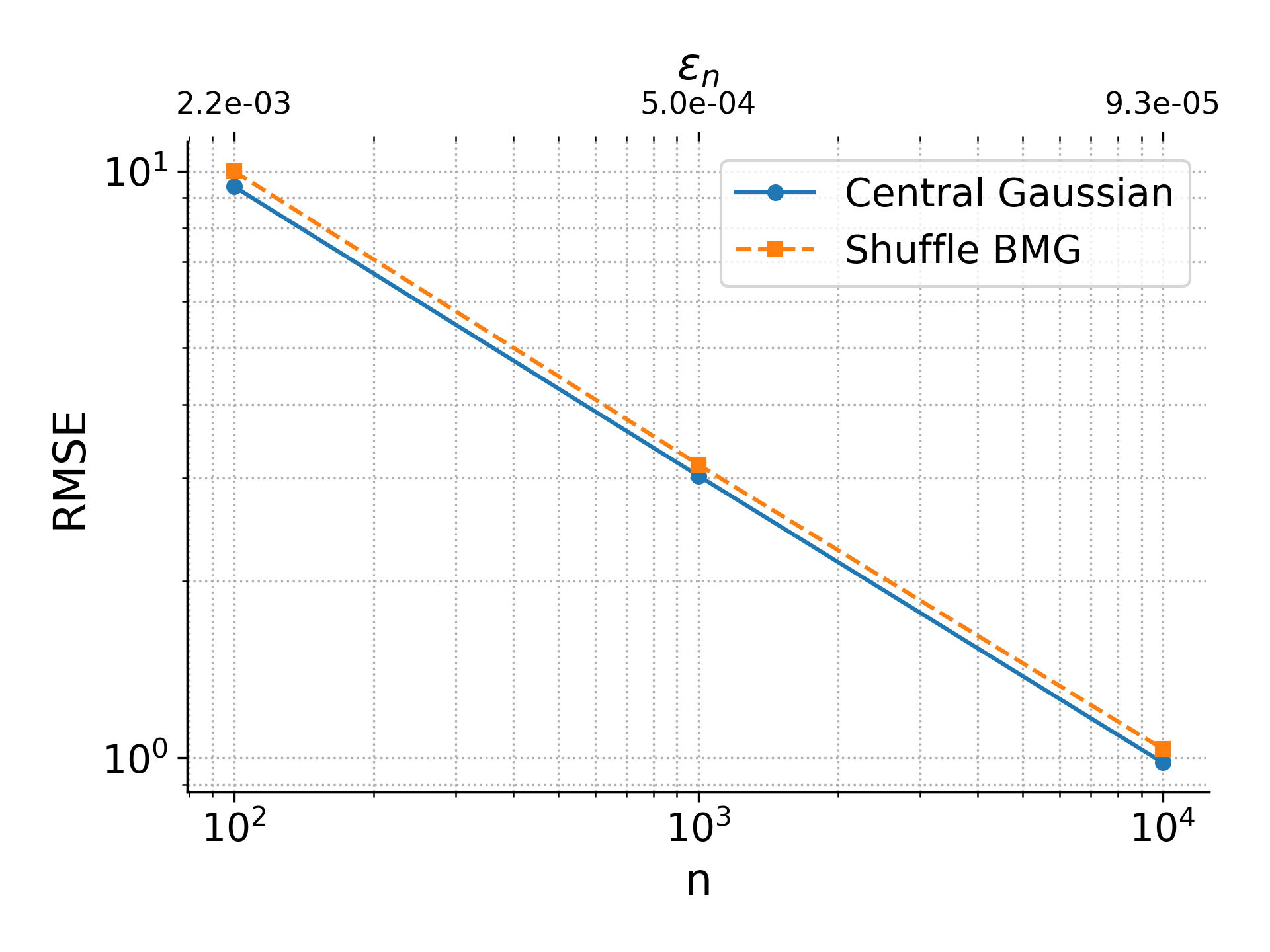}
    \caption{
      Scaling with $n$ with fixed local randomizer with   $(\gamma, \sigma_0)=(0.95, 4.6)$.
    }
    \label{fig:n-scaling}
  \end{minipage}
\end{figure*}

We provide a numerical illustration of the Gaussian correspondence behavior induced by shuffling (Theorem~\ref{thm:centrallimittheorem}).
We plot the theoretical privacy-utility trade-offs implied by our analysis, and compare the proposed shuffled BMG mechanism ($\mathsf{BMG}$) and the case of $\gamma=0$ against the central Gaussian mechanism ($\mathsf{GM}$) with sensitivity $1/n$.

\paragraph{Setup and parameter tuning.}
We consider $d$-dimensional mean estimation, but since both $\mathsf{GM}$ and $\mathsf{BMG}$ have $\mathrm{Err}_n$ error that scales as $d$, we report the per-coordinate error $\mathrm{RMSE}=\sqrt{\mathrm{Err}_n/d}$.
The $\mathsf{BMG}$ parameters $(\gamma,\sigma_0)$ are chosen using the optimization described in Section~\ref{sec:analysis}.
To evaluate the $(\varepsilon,\delta)$-DP guarantee, we compute certified numerical upper bounds using the FFT-based accountant of~\citet{takagiAnalysisShufflingPure2026} (see Appendix~\ref{app:delta-computation} for details).

\paragraph{Utility curves at fixed $(n,\delta)$ (Figure~\ref{fig:utility}).}
We first fix
$
n=10^4,\delta=10^{-5},
$
and plot the resulting RMSE as a function of $\varepsilon$.
Figure~\ref{fig:utility} shows that in the high privacy regime (i.e., small $\varepsilon$), the RMSE curve of $\mathsf{BMG}$ approaches that of $\mathsf{GM}$.
This numerically supports the prediction that shuffling yields a central-limit effect, leading to a privacy-utility trade-off that is nearly $\mathsf{GM}$ in the high privacy regime.

\paragraph{Privacy profiles at fixed RMSE (Figure~\ref{fig:privacy}).}
Next, we fix $n=10^3$ and the target accuracy level to
$
\mathrm{RMSE}=3.16,
$
and plot the upper bounds of privacy profile $\delta(\varepsilon)$ as a function of $\varepsilon$.
As shown in Figure~\ref{fig:privacy}, the numerically evaluated privacy profile of the
shuffled $\mathsf{BMG}$ closely matches that of $\mathsf{GM}$ in the practical range of $\varepsilon$.

\paragraph{Scaling with the number of users (Figure~\ref{fig:n-scaling}).}
We next examine how the correspondence manifests as we vary the number of users $n$.
We fix the local randomizer to a single instance of $\mathsf{BMG}$ same as that of Figure~\ref{fig:privacy} (i.e., calibrated at $n=10^3$ so that the shuffled protocol attains target accuracy $\mathrm{RMSE}=3.16$
at $\delta=10^{-5}$).
We then sweep
$
n \in \{10^2,10^3,10^4\},
$
and for each $n$ compute the privacy level $\varepsilon_n$ such that $\mathsf{BMG}$ is $(\varepsilon_n,10^{-5})$-DP.
To compare against the $\mathsf{GM}$, we calibrate the noise standard deviation so that it satisfies the same privacy constraint $(\varepsilon_n,10^{-5})$.
The resulting curves show that the RMSE of $\mathsf{BMG}$ closely tracks that of $\mathsf{GM}$.
Notably, the agreement is already strong at $n=10^2$, indicating that the correspondence behavior induced by shuffling becomes practically relevant at small sample sizes.

\section{Discussion}

\paragraph{On the unbiasedness constraint.}
We impose unbiasedness throughout in order to pursue sharp constant-level minimax optimality rather than merely order-optimal rates, in the same spirit as \citet{asiOptimalAlgorithmsMean2022}. 
Under this restriction, we are able to 
(i) reduce post-shuffling mechanism design to an explicit single-user optimization problem governed by the shuffle index and
(ii) prove a universal minimax lower bound that reveals that existing mechanisms can become strictly suboptimal after shuffling.
That said, unbiasedness is a substantive structural constraint, and it remains an open question whether allowing biased estimation can strictly improve the privacy-utility trade-off, potentially yielding protocols whose achievable trade-off is strictly better than the central Gaussian mechanism.

\paragraph{On the high privacy regime and composition}
\label{sec:discussion_high_privacy_gdp}
Gaussian DP (GDP)~\cite{dongGaussianDifferentialPrivacy2022} offers a clean view of multi-round composition for mean estimation with the central Gaussian mechanism. 
Consider running $\mathsf{GM}(\sqrt{T}\sigma)$ independently for $T$ rounds on the same dataset and releasing the average of the $T$ outputs. 
Because GDP is closed under composition and the squared GDP parameter adds across rounds, this averaged $T$-round procedure has the same overall GDP guarantee as a single execution of $\mathsf{GM}(\sigma)$, and it also attains the same mean-squared error. 
Since single-message shuffling has a privacy profile approaching that of the central Gaussian mechanism in the high privacy regime, we expect a similar near-lossless composition behavior for multi-round shuffled protocols: repeating a high privacy single-message shuffled primitive should yield, in the moderate-privacy regime, a privacy-utility trade-off close to the central Gaussian baseline. 
A fully rigorous statement would require tighter composition tools, which we leave for future work.

\section{Conclusion}

In this work, we formulated and analyzed post-shuffling mechanism design as an explicit optimization problem via the shuffle index.
In the $\chi$-high privacy regime, we established a sharp minimax lower bound $d\chi^{2}$ on the worst-case MSE for unbiased protocols, implying that existing mechanisms can become strictly suboptimal after shuffling.
We further constructed $\mathsf{BMG}$ that asymptotically achieves this lower bound, and proved a Gaussian-limit correspondence showing that its privacy profile converges to that of the central Gaussian mechanism
under the matching noise level.
This provides a principled explanation of previously observed Gaussian-like behavior under shuffling~\cite{chenPrivacyAmplificationCompression2023}, sharpening earlier order-level and empirical comparisons by establishing a constant-accurate theoretical correspondence.

Several directions remain open.
An important next step is to improve the convergence rate (i.e., tighten the vanishing excess term) $O(\chi^{-2/3})$.
It is also of interest to extend the shuffle-index-based formulation beyond mean estimation to other fundamental estimation and learning tasks.
Finally, developing tighter composition analyses for multi-round shuffled protocols may extend the constant-level correspondence beyond the high privacy regime.


\bibliography{shuffle-gauss}
\bibliographystyle{plainnat}

\newpage
\appendix
\onecolumn

\section{Missing Proofs}

\subsection{Proposition~\ref{prop:gausshighprivacy}}
\label{sec:gausshighprivacy}
\gausshighprivacy*

\begin{proof}
  
We begin by relating the privacy profile of the central Gaussian mechanism to
the asymptotic template $f_{n,\varepsilon}(\cdot)$.
\begin{lemma}[Asymptotic Gaussian privacy profile matches $f_{n,\varepsilon}(\cdot)$]
\label{lem:gauss-profile-asymptotic-f}
Fix any sequence $\{\varepsilon_n\}_{n\ge1}$ with $\varepsilon_n=O\!\left(\sqrt{\frac{\log n}{n}}\right)$ and $\varepsilon_n=\omega\!\left(\frac{1}{\sqrt n}\right)$ and $\sigma>0$.
Then, as $n\to\infty$,
\[
\delta_{\mathsf{GM}(\sigma)}(\varepsilon_n)
=
f_{n,\varepsilon_n}(\sigma)\,\bigl(1+o(1)\bigr).
\]
\end{lemma}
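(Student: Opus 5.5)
We will use the exact privacy profile of the Gaussian mechanism from \citet{balleImprovingGaussianMechanism2018}. Under zero-out neighboring, the mean has $\ell_2$-sensitivity $\Delta=1/n$. The noise standard deviation is $s=\sigma/\sqrt n$, so the relevant ratio is $\mu:=\Delta/s=1/(\sigma\sqrt n)$. The worst-case pair reduces to the one-dimensional Gaussians $\mathcal N(\mu,1)$ and $\mathcal N(0,1)$ (in both orders, which give the same profile by symmetry). This yields the closed form
\[
\delta_{\mathsf{GM}(\sigma)}(\varepsilon)=\Phi\!\Bigl(\frac{\mu}{2}-\frac{\varepsilon}{\mu}\Bigr)-e^{\varepsilon}\,\Phi\!\Bigl(-\frac{\mu}{2}-\frac{\varepsilon}{\mu}\Bigr).
\]
With $\varepsilon=\varepsilon_n$, we set $a_n:=\varepsilon_n/\mu=\sigma\varepsilon_n\sqrt n$ and $b_n:=\mu/2=1/(2\sigma\sqrt n)$. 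The assumptions give $a_n\to\infty$, $a_n=O(\sqrt{\log n})$, and $b_n\to0$. Moreover $a_nb_n=\varepsilon_n/2$.

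The plan is to expand both tail terms with the Mills-ratio expansion $\Phi(-t)=\frac{\varphi(t)}{t}\bigl(1-t^{-2}+O(t^{-4})\bigr)$ as $t\to\infty$. We write
\[
\delta=\Phi(-(a_n-b_n))-e^{\varepsilon_n}\Phi(-(a_n+b_n)).
\]
The key exact identity is
\[
e^{\varepsilon_n}\varphi(a_n+b_n)=\varphi(a_n-b_n),
\]
since $(a+b)^2-(a-b)^2=4ab=2\varepsilon_n$. Hence both terms share the common Gaussian factor $\varphi(a_n-b_n)$, and
\[
\delta=\varphi(a_n-b_n)\Bigl[\,g(a_n-b_n)-g(a_n+b_n)\Bigr],\qquad g(t):=\Phi(-t)/\varphi(t).
\]
The bracket is a difference of Mills ratios at two points separated by $2b_n\to0$. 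By the mean value theorem it equals $-2b_n\,g'(\xi_n)$ for some $\xi_n$ between $a_n\pm b_n$.

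Next we use the standard fact $g'(t)=t\,g(t)-1=-t^{-2}(1+O(t^{-2}))$, which follows from the same Mills expansion. Since $\xi_n/a_n\to1$, this gives
\[
\delta=\varphi(a_n-b_n)\,\frac{2b_n}{a_n^2}\,(1+o(1)).
\]
Now $\varphi(a_n-b_n)=\frac{1}{\sqrt{2\pi}}e^{-a_n^2/2}\,e^{a_nb_n-b_n^2/2}$. The correction factor is $e^{\varepsilon_n/2-b_n^2/2}=1+o(1)$, because $\varepsilon_n\to0$ and $b_n\to0$. Substituting $a_n=\sigma\varepsilon_n\sqrt n$ and $b_n=1/(2\sigma\sqrt n)$ gives
\[
\delta=\frac{1}{\sqrt{2\pi}}\cdot\frac{1}{\sigma^3\varepsilon_n^2n^{3/2}}\,e^{-\sigma^2\varepsilon_n^2n/2}(1+o(1)),
\]
which is exactly $f_{n,\varepsilon_n}(\sigma)(1+o(1))$.

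The main obstacle is the cancellation step. Each tail term individually is of order $\varphi/a_n$, while their difference is of the much smaller order $\varphi\,b_n/a_n^2$. A naive first-order expansion of each term therefore leaves an error of order $\varphi/a_n^3$. Comparing $b_n/a_n^2$ with $1/a_n^3$, the ratio of that error to the target is $1/(a_nb_n)=2/\varepsilon_n\to\infty$, so the naive error would dominate. The identity $e^{\varepsilon}\varphi(a+b)=\varphi(a-b)$ together with the mean-value argument on $g$ is what avoids this, since it only requires the derivative of $g$ to be uniformly controlled on the shrinking interval. I will verify carefully that $g'(t)=-t^{-2}(1+O(t^{-2}))$ holds uniformly for large $t$, for instance via the bounds $\frac{t}{t^2+1}\le g(t)\le\frac{1}{t}$ refined to second order.
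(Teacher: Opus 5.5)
Your proof is correct. It follows the same skeleton as the paper: the Balle--Wang closed form, the same parameters $a=\sigma\varepsilon_n\sqrt n$ and $b=1/(2\sigma\sqrt n)$, and the observation that $ab=\varepsilon_n/2$ makes $\bar\Phi(a-b)$ and $e^{\varepsilon_n}\bar\Phi(a+b)$ share a common Gaussian factor.

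The two arguments differ at the cancellation step, and there your route is the sound one. The paper expands each tail separately as $\phi(t)\bigl(1/t-1/t^{3}+O(t^{-5})\bigr)$. It then expands the explicit parts to $2b/a^{2}+O(b/a^{4})$ and simply drops the two $O(a^{-5})$ remainders. As written, that step is not justified. Indeed $a^{-5}/(b/a^{2})=1/(a^{3}b)=2/(\sigma^{2}n\varepsilon_n^{3})$, and this tends to $\infty$ because $n\varepsilon_n^{3}=O\bigl((\log n)^{3/2}/\sqrt n\bigr)\to 0$. So the discarded remainders are a priori larger than the main term. This is the same obstruction you identified for the first-order expansion, and it persists at every finite order of the Mills expansion.

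Your mean value argument on $g=\bar\Phi/\phi$, using $g'(t)=t\,g(t)-1$, avoids the problem. It only needs $g'$ controlled at a single point $\xi_n\in(a-b,a+b)$, so no uncontrolled difference of remainders ever appears. The paper's version can be repaired the same way: the remainder $r(t):=g(t)-1/t+1/t^{3}$ satisfies $r'(t)=O(t^{-6})$, since $g'=tg-1$. Hence $r(a-b)-r(a+b)=O(b/a^{6})=o(b/a^{2})$.

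For the second-order control you need, the explicit enveloping bounds $1/t-1/t^{3}\le g(t)\le 1/t-1/t^{3}+3/t^{5}$ for $t>0$ give $-1/t^{2}\le g'(t)\le -1/t^{2}+3/t^{4}$. This is uniform in $t$, as your mean value step requires.

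A side benefit of your argument is that it only uses $a_n\to\infty$, $b_n\to 0$ and $\varepsilon_n\to 0$. The upper bound $\varepsilon_n=O(\sqrt{\log n/n})$ therefore enters only through $\varepsilon_n\to 0$.
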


\begin{proof}
For the Gaussian mechanism with $\ell_2$-sensitivity $\Delta$
and noise standard deviation $\sigma_{\mathrm{gm}}$, the analytic privacy
profile admits the closed form (see, e.g., \citet{balleImprovingGaussianMechanism2018})
\begin{equation}
\label{eq:analytic-gauss-delta}
\delta_{\mathrm{GM}}(\varepsilon)
=
\Phi\!\left(\frac{\Delta}{2\sigma_{\mathrm{gm}}}-\frac{\varepsilon\sigma_{\mathrm{gm}}}{\Delta}\right)
-
e^{\varepsilon}\,
\Phi\!\left(-\frac{\Delta}{2\sigma_{\mathrm{gm}}}-\frac{\varepsilon\sigma_{\mathrm{gm}}}{\Delta}\right),
\end{equation}
where $\Phi$ is the standard normal CDF.  In our setting,
$\sigma_{\mathrm{gm}}=\sigma/\sqrt{n}$ and $\Delta=1/n$.  Define
\[
a := \varepsilon\sigma\sqrt{n},
\qquad
b := \frac{1}{2\sigma\sqrt{n}}.
\]
Substituting into \eqref{eq:analytic-gauss-delta} and using
$\bar\Phi(t):=1-\Phi(t)$ gives
\begin{equation}
\label{eq:delta-ab}
\delta_{\mathsf{GM}(\sigma)}(\varepsilon)
=
\bar\Phi(a-b)-e^{\varepsilon}\bar\Phi(a+b).
\end{equation}

We now use the Mills ratio expansion: as $t\to\infty$,
\begin{equation}
\label{eq:mills}
\bar\Phi(t)
=
\phi(t)\left(\frac{1}{t}-\frac{1}{t^{3}}+O(t^{-5})\right),
\qquad
\phi(t):=\frac{1}{\sqrt{2\pi}}e^{-t^{2}/2}.
\end{equation}
Apply \eqref{eq:mills} with $t=a\pm b$.  Since
\[
\phi(a\pm b)=\phi(a)\exp\!\left(\mp ab-\frac{b^{2}}{2}\right),
\qquad
ab = \varepsilon/2,
\]
we obtain
\begin{align}
\delta_{\mathsf{GM}(\sigma)}(\varepsilon)
&=
\phi(a)e^{\varepsilon/2}e^{-b^{2}/2}
\Biggl[
\left(\frac{1}{a-b}-\frac{1}{(a-b)^3}+O(a^{-5})\right)
-
\left(\frac{1}{a+b}-\frac{1}{(a+b)^3}+O(a^{-5})\right)
\Biggr]. \label{eq:delta-expanded}
\end{align}
Next, expand the bracketed term in $b/a$ (with $b=o(a)$, since $a\to\infty$ and
$b\to0$):
\begin{equation}
\label{eq:diff-expand}
\left(\frac{1}{a-b}-\frac{1}{(a-b)^3}\right)
-
\left(\frac{1}{a+b}-\frac{1}{(a+b)^3}\right)
=
\frac{2b}{a^{2}}+O\!\left(\frac{b}{a^{4}}\right).
\end{equation}
Substituting \eqref{eq:diff-expand} into \eqref{eq:delta-expanded} and using
$e^{-b^{2}/2}=1+o(1)$ yields
\begin{equation}
\label{eq:delta-leading}
\delta_{\mathsf{GM}(\sigma)}(\varepsilon)
=
\phi(a)e^{\varepsilon/2}\left(\frac{2b}{a^{2}}\right)\bigl(1+o(1)\bigr).
\end{equation}
Finally, since $\varepsilon\to0$ we have $e^{\varepsilon/2}=1+o(1)$, and
plugging in $a=\varepsilon\sigma\sqrt{n}$ and $2b=1/(\sigma\sqrt{n})$ gives
\[
\delta_{\mathsf{GM}(\sigma)}(\varepsilon)
=
\frac{1}{\sqrt{2\pi}}
\exp\!\left(-\frac{\varepsilon^{2}\sigma^{2}n}{2}\right)
\cdot
\frac{1}{\sigma^{3}\varepsilon^{2}n^{3/2}}
\bigl(1+o(1)\bigr),
\]
which is exactly $f_{n,\varepsilon}(\sigma)\,(1+o(1))$.
Applying this with $(\varepsilon,\sigma)=(\varepsilon_n,\sigma)$ completes the
proof.
\end{proof}

Let $\{\varepsilon_n\}_{n\ge 1}$ satisfy $\varepsilon_n=O\!\left(\sqrt{\frac{\log n}{n}}\right)$ and $\varepsilon_n=\omega\!\left(\frac{1}{\sqrt n}\right)$ and fix any $\sigma>0$.
By Lemma~\ref{lem:gauss-profile-asymptotic-f}, as $n\to\infty$ we have
\[
\delta_{\mathsf{GM}(\sigma)}(\varepsilon_n)
=
f_{n,\varepsilon_n}(\sigma)\,\bigl(1+o(1)\bigr).
\]
Define $e_n := o(1)$ so that $e_n\to 0$ and
\[
\delta_{\mathsf{GM}(\sigma)}(\varepsilon_n)
=
f_{n,\varepsilon_n}(\sigma)\,\bigl(1+e_n\bigr).
\]
Therefore, by Definition~\ref{def:chi-high-privacy-regime} (with $\chi=\sigma$),
the sequence $\{(\varepsilon_n,\delta_{\mathsf{GM}(\sigma)}(\varepsilon_n))\}_{n\ge 1}$ lies in the $\sigma$-high privacy regime if $\varepsilon_n=O\!\left(\sqrt{\frac{\log n}{n}}\right)$ and $\varepsilon_n=\omega\!\left(\frac{1}{\sqrt n}\right)$.
\end{proof}

\subsection{Theorem~\ref{thm:sandwich-alpha-over-n}}
\label{app:proof-reduction-shuffle-dp}

\reductionShuffleDP*

\begin{proof}

We first outline the structure of the argument. 
The key point is that, under the unbiased mean-estimation constraint, the $n$-user shuffled design problem can be reduced to a \emph{single-user} estimation problem in the same spirit as~\cite{asiOptimalAlgorithmsMean2022}. 
After performing this reduction, the remaining task is to relate the shuffled-DP
constraint to shuffle-index constraints via Lemma~\ref{lem:shuffle-index-profile-seq-no-t}.
This yields two single-user optimization problems, one governed by $\chi_{\mathrm{up}}(\mathcal R)$ and the other by $\chi_{\mathrm{lo}}(\mathcal R)$, which lead respectively to the lower and upper bounds in the statement.

Accordingly, we proceed in two steps. 
We begin with the reduction from the $n$-user optimization to single-user estimation, and then we prove the desired upper and lower bounds separately.

\subsubsection{Reduction to single-user estimation}
We first reformulate the $n$-user design problem by replacing the shuffled-DP constraint with an explicit shuffle-index constraint on the local randomizer $\mathcal R$, which leads to \eqref{eq:minimax-variance-chilo-n} (and analogously \eqref{eq:minimax-variance-chiup-n}).

\begin{equation}\label{eq:minimax-variance-chilo-n}
\begin{aligned}
\mathsf{Err}^{\star,\mathrm{uni}}_{\mathrm{lo}}(\chi)
:=\;
\inf_{\mathcal R \in \mathfrak R}\;
\inf_{\{\mathcal A_n\}_{n\ge 1}}
\quad
& \limsup_{n\to\infty}\;
n\cdot \mathsf{Err}_n(\mathcal R,\mathcal A_n) \\
\text{s.t.}\quad
& (\mathcal R^n,\mathcal A_n)\ \text{is unbiased for all sufficiently large } n,\\
& \chi_{\mathrm{lo}}(\mathcal R)\ \ge\ \chi,
\end{aligned}
\end{equation}

\begin{equation}\label{eq:minimax-variance-chiup-n}
\begin{aligned}
\mathsf{Err}^{\star,\mathrm{uni}}_{\mathrm{up}}(\chi)
:=\;
\inf_{\mathcal R \in \mathfrak R}\;
\inf_{\{\mathcal A_n\}_{n\ge 1}}
\quad
& \limsup_{n\to\infty}\;
n\cdot \mathsf{Err}_n(\mathcal R,\mathcal A_n) \\
\text{s.t.}\quad
& (\mathcal R^n,\mathcal A_n)\ \text{is unbiased for all sufficiently large } n,\\
& \chi_{\mathrm{up}}(\mathcal R)\ \ge\ \chi.
\end{aligned}
\end{equation}

Let $\mathsf{Err}_{\mathrm{lo}}^\star(\chi)$ and $\mathsf{Err}_{\mathrm{up}}^\star(\chi)$ denote the optimal values of
\eqref{eq:minimax-variance-chilo} and \eqref{eq:minimax-variance-chiup}, respectively.
Let $\mathsf{Err}^{\star,\mathrm{uni}}_{\mathrm{lo}}(\chi)$ and $\mathsf{Err}^{\star,\mathrm{uni}}_{\mathrm{up}}(\chi)$ denote the optimal values of
\eqref{eq:minimax-variance-chilo-n} and \eqref{eq:minimax-variance-chiup-n}, respectively.

We can reduce these $n$-user optimization problems to single-user estimation problems.
\begin{restatable}[Reduction to single-user estimation]{lemma}{reductionMinimax}
\label{lem:reduction-minimax}
Fix $\chi>0$.
Then
\[
\mathsf{Err}_{\mathrm{lo}}^\star(\chi)
=
\mathsf{Err}^{\star,\mathrm{uni}}_{\mathrm{lo}}(\chi).
\]
\[
\mathsf{Err}_{\mathrm{up}}^\star(\chi)
=
\mathsf{Err}^{\star,\mathrm{uni}}_{\mathrm{up}}(\chi).
\]
\end{restatable}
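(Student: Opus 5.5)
We prove the two equalities by showing both inequalities for a fixed randomizer $\mathcal R\in\mathfrak R$. Since the shuffle-index constraint depends only on $\mathcal R$ and not on $n$ or on the analyzer, it suffices to prove, for each fixed $\mathcal R$,
\[
\inf_{\{\mathcal A_n\}}\limsup_{n\to\infty} n\,\mathsf{Err}_n(\mathcal R,\mathcal A_n)
\;=\;
\inf_{\widehat x\ \text{unbiased}} \mathsf{Err}_1(\mathcal R,\widehat x),
\]
where the left infimum runs over analyzers with $(\mathcal R^n,\mathcal A_n)$ unbiased for all large $n$. Both statements then follow by taking the infimum over $\mathcal R$ subject to $\chi_{\mathrm{lo}}(\mathcal R)\ge\chi$, respectively $\chi_{\mathrm{up}}(\mathcal R)\ge\chi$. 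This follows the approach of \citet{asiOptimalAlgorithmsMean2022}.

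\emph{Upper bound ($\le$).} Given an unbiased single-user $\widehat x$, define the symmetric analyzer $\mathcal A_n(y_{1:n}) := \frac1n\sum_i \widehat x(y_i)$. It is permutation invariant, so it is well defined on the shuffled output, and it is unbiased. By independence of the $Y_i$,
\[
\mathbb E\bigl\|\mathcal A_n-\bar x\bigr\|^2=\frac1{n^2}\sum_i \mathbb E\|\widehat x(Y_i)-x_i\|^2\le \frac1n \mathsf{Err}_1(\mathcal R,\widehat x).
\]
Hence $n\,\mathsf{Err}_n\le \mathsf{Err}_1$ for every $n$, and the inequality passes to the $\limsup$.

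\emph{Lower bound ($\ge$).} Fix an analyzer $\mathcal A_n$ that is unbiased for $\mathcal R^n$ with $n$ large. Since shuffling only loses information, we may view $\mathcal A_n\circ\mathcal S$ as a (randomized) unbiased estimator of $\bar x$ from the unshuffled vector $Y_{1:n}$. Define $g(y_{1:n}):=\mathbb E_\pi[\mathcal A_n(y_\pi)]$. It is symmetric, unbiased, and has no larger risk by Jensen. Now use the Efron–Stein / Hoeffding (ANOVA) decomposition of $g$ under the product law $\bigotimes_i \mathcal R_{x_i}$. We would rather avoid input-dependent projections, so we proceed as follows. Fix a reference input, say $x_j = 0$ for $j\neq i$. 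For any $x\in\mathbb B_2^d$, set $x_i=x$ and define
\[
\widehat x_i(y):= n\,\mathbb E\bigl[g(Y_{1:n})\mid Y_i=y\bigr] \;-\; (\text{constant}),
\]
where the other coordinates are drawn from $\mathcal R_0$. Unbiasedness of $g$ for all inputs gives $\mathbb E_{Y\sim\mathcal R_x}[\widehat x_i(Y)]=x$, so $\widehat x_i$ is an unbiased single-user estimator. Conditional expectation contracts variance, and the first-order Hoeffding terms are orthogonal across $i$. Together these give
\[
\mathrm{Var}(g)\ \ge\ \sum_i \mathrm{Var}\bigl(\mathbb E[g\mid Y_i]\bigr)=\frac1{n^2}\sum_i \mathrm{Var}_{\mathcal R_{x_i}}(\widehat x_i).
\]
Summing over coordinates, this is at least $\frac1n\min_i \mathbb E\|\widehat x_i-x_i\|^2$.

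The main obstacle is the worst case. The sup over datasets in $\mathsf{Err}_n$ must dominate the sup over $x$ in $\mathsf{Err}_1$ for a single estimator. For this, we take the dataset with $x_i=x$ for one user and $x_j=0$ for the others. We must then ensure that the estimator $\widehat x_i$ does not depend on $x$ (it depends only on $\mathcal R_0$ for the other users, so it does not), and that the variance bound applies at each $x$ with the other users' contributions nonnegative. This yields $n\,\mathsf{Err}_n(\mathcal R,\mathcal A_n)\ge \sup_x \mathbb E_{\mathcal R_x}\|\widehat x_i-x\|^2\ge \inf_{\widehat x}\mathsf{Err}_1(\mathcal R,\widehat x)$. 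Symmetry of $g$ makes all $\widehat x_i$ identical, which removes the $\min_i$.

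Integrability is also needed. If $\mathsf{Err}_n<\infty$, the relevant second moments exist. Otherwise the inequality is trivial.
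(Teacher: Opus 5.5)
\textbf{The lower bound has a gap.} Your upper bound ($n\,\mathsf{Err}_n\le\mathsf{Err}_1$ via the additive analyzer) is fine and is the same as the paper's Step~1. The problem is the step where you conclude $n\,\mathsf{Err}_n(\mathcal R,\mathcal A_n)\ge\sup_x\mathbb E_{\mathcal R_x}\|\widehat x_i(Y)-x\|_2^2$.

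The identity $\mathrm{Var}(\mathbb E[g\mid Y_j])=\frac1{n^2}\mathrm{Var}_{\mathcal R_{x_j}}(\widehat x_j)$ holds only when the conditional expectation is taken under the same dataset that defines $\widehat x_j$. On the dataset $(x,0,\dots,0)$ this is true for user $i$ alone, so your displayed sum mixes different datasets.

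\emph{Dropping the other terms loses a factor $n$.} If you discard the other $n-1$ first-order terms as nonnegative, the Hoeffding bound gives $\mathbb E\|g-x/n\|_2^2\ge\frac1{n^2}\mathbb E_{\mathcal R_x}\|\widehat x_i-x\|_2^2$. That is, $n\,\mathsf{Err}_n\ge\frac1n\sup_x\mathbb E_{\mathcal R_x}\|\widehat x_i-x\|_2^2$, which is off by exactly the factor $n$ you need.

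\emph{Keeping them gives only a local bound.} For a user $j\neq i$ at input $0$, the map $y\mapsto n\,\mathbb E[g\mid Y_j=y]-x$ is an unbiased single-user estimator. Its term is therefore at least $\frac1{n^2}V_0$, where $V_0:=\inf_{\widehat x\ \mathrm{unbiased}}\mathbb E_{\mathcal R_0}\|\widehat x(Y)\|_2^2$. The most this route certifies is $\liminf_n n\,\mathsf{Err}_n\ge V_0$, a local variance bound at one input, not $\inf_{\widehat x}\sup_x$.

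\emph{Other datasets do not help.} The all-equal dataset $(x,\dots,x)$ yields only $\sup_x\inf_{\widehat x}$, again the wrong order, because the extracted estimator then depends on $x$ through the other users' inputs. Nothing in your argument controls the worst-case risk of the single estimator $\widehat x_i$ extracted with the others pinned at $0$. That estimator is constrained only by datasets in which the other users sit at $0$.

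\textbf{The missing ingredient is an inf--sup exchange.} One way to close it while keeping $\mathcal R$ fixed:
\begin{enumerate}
\item Condition on the true inputs of the other users. Then every user's first-order term equals $\frac1{n^2}$ times the risk at $x_i$ of the unbiased single-user estimator $y\mapsto n\,\mathbb E[g\mid Y_i=y]-\sum_{j\neq i}x_j$.
\item Draw $x_{1:n}$ i.i.d.\ from a prior $\pi$ on $\mathbb B_2^d$. This gives $n\,\mathsf{Err}_n\ge\inf_{\widehat x}\int\mathbb E_{\mathcal R_x}\|\widehat x(Y)-x\|_2^2\,\pi(dx)$ for every $\pi$ and every $n$.
\item Invoke a minimax theorem to identify $\sup_\pi$ of the right-hand side with $\inf_{\widehat x}\mathsf{Err}_1(\mathcal R,\widehat x)$. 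This needs convexity of the unbiased class and of the risk, priors on the compact ball, and suitable semicontinuity.
\end{enumerate}

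The paper takes a different route. It changes the randomizer by additivizing via a kernel $K$, in the style of Asi et al. It then uses its post-processing lemma (law of total variance), which shows $\chi_{\mathrm{lo}}$ and $\chi_{\mathrm{up}}$ do not decrease under $\mathcal R\mapsto K\circ\mathcal R$, so $K\circ\mathcal R$ stays feasible. Your fixed-$\mathcal R$ formulation would avoid that lemma, and the question of whether $K\circ\mathcal R\in\mathfrak R$, but only once the inf--sup step is supplied.
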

The proof is given in Appendix~\ref{sec:proof-reduction-minimax}.

\subsubsection{Upper bound}

Fix any $\eta>0$.
Let $(\mathcal R^\eta,\widehat x^\eta)$ be an (exact or $\zeta$-approximate)
minimizer for the single-user shuffle-index constrained problem
\eqref{eq:minimax-variance-chilo} at level $\chi+\eta$, i.e.,
\begin{align*}
\mathbb E_{Y\sim \mathcal R^\eta_x}\!\big[\widehat x^\eta(Y)\big] &= x
\quad\text{for all }x\in\mathbb B_2^d,\\
\chi_{\mathrm{lo}}(\mathcal R^\eta)&\ge \chi+\eta,\\
\mathsf{Err}_1(\mathcal R^\eta,\widehat x^\eta)
&\le \mathsf{Err}^{\star}_{\mathrm{lo}}(\chi+\eta)+\zeta,
\end{align*}
where $\zeta>0$ is arbitrary.

\paragraph{Step 1: Privacy feasibility under $(\varepsilon_n,\delta_n)$.}
Apply Lemma~\ref{lem:shuffle-index-profile-seq-no-t} (upper bound direction)
to $\mathcal R^\eta$. Since $\varepsilon_n\to 0$,
$\varepsilon_n=\omega(\sqrt{1/n})$, and
$\varepsilon_n=O(\sqrt{\log n/n})$, there exists a sequence
$e_{n}^{\mathrm{lo}}(\eta)\to 0$ such that for all sufficiently large $n$,
\begin{equation}\label{eq:delta-upper-Reta}
\delta_{\mathcal S\circ (\mathcal R^\eta)^n}(\varepsilon_n)
\ \le\
f_{n,\varepsilon_n}\!\bigl(\chi_{\mathrm{lo}}(\mathcal R^\eta)\bigr)\,
\bigl(1+e_{n}^{\mathrm{lo}}(\eta)\bigr).
\end{equation}
Using $\chi_{\mathrm{lo}}(\mathcal R^\eta)\ge \chi+\eta$ and that
$f_{n,\varepsilon}(\cdot)$ is decreasing, we obtain
\begin{equation}\label{eq:delta-upper-chi-plus-eta}
\delta_{\mathcal S\circ (\mathcal R^\eta)^n}(\varepsilon_n)
\ \le\
f_{n,\varepsilon_n}(\chi+\eta)\,\bigl(1+e_{n}^{\mathrm{lo}}(\eta)\bigr).
\end{equation}

Next, compare $f_{n,\varepsilon_n}(\chi+\eta)$ and $f_{n,\varepsilon_n}(\chi)$:
\begin{equation}\label{eq:ratio-f}
\frac{f_{n,\varepsilon_n}(\chi+\eta)}{f_{n,\varepsilon_n}(\chi)}
=
\left(\frac{\chi}{\chi+\eta}\right)^3
\exp\!\left(
-\frac{(\chi+\eta)^2-\chi^2}{2}\,\varepsilon_n^2\,n
\right).
\end{equation}
Since $\eta>0$ is fixed and $\varepsilon_n^2 n\to\infty$ (because
$\varepsilon_n=\omega(\sqrt{1/n})$), the right-hand side of
\eqref{eq:ratio-f} tends to $0$.

From the definition \ref{def:chi-high-privacy-regime}, we have
$\delta_n = f_{n,\varepsilon_n}(\chi)(1+e_n)$ with $e_n\to 0$.
Therefore, combining \eqref{eq:delta-upper-chi-plus-eta} and \eqref{eq:ratio-f},
we conclude that for all sufficiently large $n$,
\begin{equation}\label{eq:dp-feasible-Reta-final}
\delta_{\mathcal S\circ (\mathcal R^\eta)^n}(\varepsilon_n)
\ \le\
\delta_n .
\end{equation}
Hence the shuffled mechanism induced by $\mathcal R^\eta$ is
$(\varepsilon_n,\delta_n)$-DP for all sufficiently large $n$.

\paragraph{Step 2: Risk bound.}
Define an $n$-user analyzer $\mathcal A^\eta:\mathcal Y^n\to\mathbb R^d$ by
\[
\mathcal A^\eta(y_{1:n}) := \frac{1}{n}\sum_{i=1}^n \widehat x^\eta(y_i),
\]
which is permutation-invariant and hence compatible with shuffling.
Since $(\mathcal R^\eta,\widehat x^\eta)$ is unbiased at the single-user level,
$( (\mathcal R^\eta)^n, \mathcal A^\eta)$ is unbiased for the mean.
Moreover, by Lemma~\ref{lem:reduction-minimax},
\[
\mathsf{Err}_n(\mathcal R^\eta,\mathcal A^\eta)
=
\frac{1}{n}\,\mathsf{Err}_1(\mathcal R^\eta,\widehat x^\eta).
\]
Combining this with \eqref{eq:dp-feasible-Reta-final}, we see that
$(\mathcal R^\eta,\mathcal A^\eta)$ is feasible for
\eqref{eq:minimax-mse-shuffle-l2-dp} (for all sufficiently large $n$), and thus
\[
\mathsf{Err}^{\star}_{\mathrm{DP}}(\{(\varepsilon_n,\delta_n)\}_{n\ge 1})
\ \le\
\limsup_{n\to\infty} n\mathsf{Err}_n(\mathcal R^\eta,\mathcal A^\eta)
=
\mathsf{Err}_1(\mathcal R^\eta,\widehat x^\eta)
\ \le\
\mathsf{Err}^{\star}_{\mathrm{lo}}(\chi+\eta)+\zeta.
\]
Finally, letting $\zeta\downarrow 0$ proves the claim.

\subsubsection{Lower bound}

Assume throughout that $(\varepsilon_n,\delta_n)$ is in the $\chi$-high privacy
regime (Definition~\ref{def:chi-high-privacy-regime}).

We prove the lower bound
\[
\mathsf{Err}^{\star}_{\mathrm{up}}(\chi)
\ \le\
\mathsf{Err}^{\star}_{\mathrm{DP}}(\{(\varepsilon_n,\delta_n)\}_{n\ge 1}).
\]

\paragraph{Step 1: Any shuffled-DP feasible $\mathcal R$ must satisfy
$\chi_{\mathrm{up}}(\mathcal R)\ge \chi$.}
Fix any local randomizer $\mathcal R\in\mathfrak R$ and any analyzer
$\mathcal A$ such that $(\mathcal R^n,\mathcal A)$ is unbiased and the shuffled
mechanism is $(\varepsilon_n,\delta_n)$-DP, i.e.,
\begin{equation}\label{eq:dp-feasible}
\delta_{\mathcal S\circ \mathcal R^n}(\varepsilon_n)\ \le\ \delta_n .
\end{equation}
We claim that necessarily $\chi_{\mathrm{up}}(\mathcal R)\ge \chi$ for all
sufficiently large $n$.

Suppose for contradiction that $\chi_{\mathrm{up}}(\mathcal R)<\chi$.
Since $\varepsilon_n\to 0$, $\varepsilon_n=\omega(\sqrt{1/n})$,
and $\varepsilon_n=O(\sqrt{\log n/n})$ by the definition of the high privacy
regime, we may apply Lemma~\ref{lem:shuffle-index-profile-seq-no-t}
to obtain a sequence $e_n^{\mathrm{up}}\to 0$ such that, for all sufficiently
large $n$,
\begin{equation}\label{eq:profile-lb}
\delta_{\mathcal S\circ \mathcal R^n}(\varepsilon_n)
\ \ge\
f_{n,\varepsilon_n}\!\bigl(\chi_{\mathrm{up}}(\mathcal R)\bigr)\,
\bigl(1+e_n^{\mathrm{up}}\bigr),
\end{equation}
where $f_{n,\varepsilon}(\chi)$ is defined in
Lemma~\ref{lem:shuffle-index-profile-seq-no-t}.

On the other hand, since $(\varepsilon_n,\delta_n)$ is in the $\chi$-high
privacy regime, there exists a sequence $e_n\to 0$ such that, for all
sufficiently large $n$,
\begin{equation}\label{eq:regime-ub}
\delta_n \ = f_{n,\varepsilon_n}(\chi)\,(1+e_n).
\end{equation}
Combining \eqref{eq:dp-feasible}, \eqref{eq:profile-lb}, and
\eqref{eq:regime-ub} yields, for all sufficiently large $n$,
\begin{equation}\label{eq:key-ineq}
f_{n,\varepsilon_n}\!\bigl(\chi_{\mathrm{up}}(\mathcal R)\bigr)\,
\bigl(1+e_n^{\mathrm{up}}\bigr)
\ \le\
f_{n,\varepsilon_n}(\chi)\,(1+e_n).
\end{equation}

Now consider the ratio
\[
\frac{f_{n,\varepsilon_n}(\chi_{\mathrm{up}}(\mathcal R))}{f_{n,\varepsilon_n}(\chi)}
=
\left(\frac{\chi}{\chi_{\mathrm{up}}(\mathcal R)}\right)^3
\exp\!\left(
\frac{\chi^2-\chi_{\mathrm{up}}(\mathcal R)^2}{2}\,\varepsilon_n^2\,n
\right).
\]
Since $\chi_{\mathrm{up}}(\mathcal R)<\chi$, we have
$\chi^2-\chi_{\mathrm{up}}(\mathcal R)^2>0$, and since
$\varepsilon_n=\omega(\sqrt{1/n})$ we have $\varepsilon_n^2 n\to\infty$.
Therefore,
\[
\frac{f_{n,\varepsilon_n}(\chi_{\mathrm{up}}(\mathcal R))}{f_{n,\varepsilon_n}(\chi)}
\ \longrightarrow\ \infty.
\]
Moreover, $(1+e_n^{\mathrm{up}})/(1+e_n)\to 1$. Hence, for all sufficiently
large $n$,
\[
f_{n,\varepsilon_n}\!\bigl(\chi_{\mathrm{up}}(\mathcal R)\bigr)\,
\bigl(1+e_n^{\mathrm{up}}\bigr)
\ >\
f_{n,\varepsilon_n}(\chi)\,(1+e_n),
\]
which contradicts \eqref{eq:key-ineq}. This proves the claim:
\begin{equation}\label{eq:chiup-ge-chi}
\chi_{\mathrm{up}}(\mathcal R)\ \ge\ \chi
\end{equation}

\paragraph{Step 2: Reduction to the $\chi_{\mathrm{up}}$-constrained minimax risk.}
By \eqref{eq:chiup-ge-chi}, every protocol feasible for
\eqref{eq:minimax-mse-shuffle-l2-dp} (i.e., unbiased and shuffled-DP at level
$(\varepsilon_n,\delta_n)$) is also feasible for the shuffled-index constrained
problem defining $\mathsf{Err}^{\star,\mathrm{uni}}_{\mathrm{up}}(\chi)$ (namely, the same
unbiasedness constraint together with $\chi_{\mathrm{up}}(\mathcal R)\ge\chi$).
Therefore the feasible set of \eqref{eq:minimax-mse-shuffle-l2-dp} is contained
in the feasible set of the $\chi_{\mathrm{up}}$-constrained problem, and
consequently the optimal value satisfies
\begin{equation}\label{eq:dp-ge-up}
\mathsf{Err}^{\star}_{\mathrm{DP}}(\{(\varepsilon_n,\delta_n)\}_{n\ge 1})
\ \ge\
\mathsf{Err}^{\star,\mathrm{uni}}_{\mathrm{up}}(\chi).
\end{equation}

Finally, by Lemma~\ref{lem:reduction-minimax} we have
$\mathsf{Err}^{\star,\mathrm{uni}}_{\mathrm{up}}(\chi)=\mathsf{Err}^{\star}_{\mathrm{up}}(\chi)$.
Substituting into \eqref{eq:dp-ge-up} yields
\[
\mathsf{Err}^{\star}_{\mathrm{DP}}(\{(\varepsilon_n,\delta_n)\}_{n\ge 1})
\ \ge\
\mathsf{Err}^{\star}_{\mathrm{up}}(\chi),
\]
as desired.

\end{proof}

\subsubsection{Lemma~\ref{lem:reduction-minimax}}
\label{sec:proof-reduction-minimax}

\reductionMinimax*

\begin{proof}
The reduction is in the same spirit as the canonicalization results of
Asi et al.~\cite{asiOptimalAlgorithmsMean2022} for unbiased local protocols.
We first establish two auxiliary results: Lemma~\ref{lem:data-processing-shuffle-index} and Lemma~\ref{lem:additivization-kernel}, and then use them to prove
Lemma~\ref{lem:reduction-minimax}.

\begin{lemma}[Post-processing for shuffle indices]
\label{lem:data-processing-shuffle-index}
Let $\mathcal R\in \mathfrak R$ be a local randomizer that admits
a blanket distribution $\mathcal R_{\mathrm{BG}}$ with blanket mass
$\gamma\in(0,1]$, i.e.,
for every $x\in\mathcal X_\perp$ and a.e.\ $y\in\mathcal Y$,
\[
  \mathcal R_x(y)
  =
  \gamma\,\mathcal R_{\mathrm{BG}}(y)
  +
  (1-\gamma)\,Q_x(y)
\]
for some family of densities $\{Q_x\}_{x\in\mathcal X_\perp}$.
Let $\mathcal K:\mathcal Y\to\mathcal Z$ be an arbitrary (possibly randomized)
post-processing map, i.e., a Markov kernel from $\mathcal Y$ to $\mathcal Z$.
Define the post-processed randomizer
\[
  \mathcal R' := \mathcal K \circ \mathcal R : \mathcal X_\perp\to\mathcal Z,
\qquad
  \mathcal R'_x := \mathcal K \circ \mathcal R_x.
\]
Then $\mathcal R'$ admits a blanket distribution
$\mathcal R'_{\mathrm{BG}}:=\mathcal K\circ \mathcal R_{\mathrm{BG}}$
with blanket mass at least $\gamma$, and moreover its shuffle indices satisfy
\[
  \chi_{\mathrm{up}}(\mathcal R') \ge \chi_{\mathrm{up}}(\mathcal R),
  \qquad
  \chi_{\mathrm{lo}}(\mathcal R') \ge \chi_{\mathrm{lo}}(\mathcal R).
\]
\end{lemma}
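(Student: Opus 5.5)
The plan has three parts: show the blanket structure survives post-processing, rewrite the privacy-amplification variable as a conditional expectation, and finish both inequalities with conditional Jensen (the conditional-variance inequality).

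\emph{Blanket decomposition.} Markov kernels act linearly on measures, so for every $x\in\mathcal X_\perp$ we have
\[
\mathcal R'_x=\mathcal K\circ\mathcal R_x=\gamma\,(\mathcal K\circ\mathcal R_{\mathrm{BG}})+(1-\gamma)\,(\mathcal K\circ Q_x).
\]
Hence $\mathcal R'_{\mathrm{BG}}:=\mathcal K\circ\mathcal R_{\mathrm{BG}}$ is a blanket for $\mathcal R'$ with mass $\gamma$. If one insists on densities, assume $\mathcal K\circ\mathcal R_x\ll\mathcal K\circ\mathcal R_{\mathrm{BG}}$, which follows from $\mathcal R_x\ll\mathcal R_{\mathrm{BG}}$. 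I will compute $\chi_{\mathrm{lo}}(\mathcal R')$ using this same blanket and the same $\gamma$. The factor $1/\gamma$ in the definition then matches on both sides.

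\emph{Amplification variable as a conditional expectation.} Fix a reference input $r$, either $\mathrm{BG}$ or some $x\in\mathcal X$, and neighbors $x_1\simeq x_1'$. Put $Y\sim\mathcal R_r$ and $Z\sim\mathcal K(\cdot\mid Y)$. The joint density of $(Y,Z)$ under input $u$ factors as $\mathcal R_u(y)k(z\mid y)$. Bayes' rule then gives
\[
\frac{\mathcal R'_u(Z)}{\mathcal R'_r(Z)}=\mathbb E\!\left[\frac{\mathcal R_u(Y)}{\mathcal R_r(Y)}\,\middle|\,Z\right]
\quad\text{a.s.}
\]
Subtracting this identity for $u=x_1'$ from the one for $u=x_1$ (the case $\varepsilon=0$) yields
\[
\ell_0(Z;x_1,x_1',\mathcal R'_r)=\mathbb E\!\left[\ell_0(Y;x_1,x_1',\mathcal R_r)\,\middle|\,Z\right].
\]

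\emph{Jensen and the indices.} Both variables have mean zero, since $\int(\mathcal R_{x_1}-\mathcal R_{x_1'})=0$. Conditional Jensen therefore gives
\[
\mathrm{Var}_{Z\sim\mathcal R'_r}[\ell_0(Z;\cdot)]\le\mathrm{Var}_{Y\sim\mathcal R_r}[\ell_0(Y;\cdot)].
\]
This holds for each pair $x_1\simeq x_1'$ and each reference $r$. Take $r=\mathrm{BG}$, multiply by $1/\gamma$, and take the supremum over pairs. This bounds the supremum defining $\chi_{\mathrm{lo}}(\mathcal R')^{-1}$ by that of $\mathcal R$, so $\chi_{\mathrm{lo}}(\mathcal R')\ge\chi_{\mathrm{lo}}(\mathcal R)$. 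Take instead $r=x\in\mathcal X$, and note $\mathcal R'_x$ is exactly $\mathcal K\circ\mathcal R_x$; taking the supremum over $x$ and over pairs gives $\chi_{\mathrm{up}}(\mathcal R')\ge\chi_{\mathrm{up}}(\mathcal R)$.

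\emph{Main obstacle.} The main technical point is justifying the Bayes identity for a general (possibly non-dominated) kernel. I would handle it with Radon--Nikodym derivatives of the joint laws $\mathcal R_u\otimes\mathcal K$ relative to $\mathcal R_r\otimes\mathcal K$, which avoids needing a density for $\mathcal K$. I would also note that infinite variances are harmless in the inequality.

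If ``blanket mass at least $\gamma$'' is meant as a larger mass $\gamma'$ for $\mathcal R'$, I would read the statement as computing the indices with the induced blanket and mass $\gamma$, as above. With a different blanket of larger mass, the $1/\gamma$ normalization would require a separate argument, which I would not attempt.
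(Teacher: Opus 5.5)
Your proposal is correct and follows essentially the same route as the paper. The paper also pushes the blanket decomposition through $\mathcal K$, identifies the post-processed amplification variable as $\mathbb E[\ell_0(Y)\mid Z]$ via Bayes' rule, and concludes by the law of total variance, using the blanket reference for $\chi_{\mathrm{lo}}$ and each $\mathcal R_x$ reference for $\chi_{\mathrm{up}}$. The one difference is that the paper also invokes a larger mass $\gamma'\ge\gamma$ while keeping the blanket distribution $\mathcal K\circ\mathcal R_{\mathrm{BG}}$ fixed. That step is trivial because $1/\gamma'\le 1/\gamma$, so your caveat about needing a separate argument applies only to a genuinely different blanket distribution.
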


\begin{proof}

We first show that $\mathcal R'$ admits a blanket of mass $\gamma$.
Applying $\mathcal K$ to the blanket decomposition of $\mathcal R_x$ yields
\[
  \mathcal R'_x
  =
  \mathcal K\circ \mathcal R_x
  =
  \gamma\,(\mathcal K\circ \mathcal R_{\mathrm{BG}})
  +
  (1-\gamma)\,(\mathcal K\circ Q_x).
\]
Thus, $\mathcal R'_{\mathrm{BG}}:=\mathcal K\circ \mathcal R_{\mathrm{BG}}$
is a valid blanket distribution of $\mathcal R'$ with blanket mass $\gamma$
(and hence the maximal blanket mass $\gamma'$ of $\mathcal R'$ satisfies
$\gamma'\ge \gamma$).

We next prove the monotonicity of the lower shuffle index.
Fix $x\simeq x'$ and consider the generalized privacy amplification random variable
with reference $\mathcal R_{\mathrm{BG}}$:
\[
  \ell_0(Y;x,x',\mathcal R_{\mathrm{BG}})
  :=
  \frac{\mathcal R_x(Y)-\mathcal R_{x'}(Y)}{\mathcal R_{\mathrm{BG}}(Y)},
  \qquad Y\sim \mathcal R_{\mathrm{BG}}.
\]
Let $Z\sim \mathcal R'_{\mathrm{BG}}$ be obtained by sampling
$Y\sim \mathcal R_{\mathrm{BG}}$ and then applying $Z\mid Y\sim \mathcal K(\cdot\mid Y)$.
Define analogously
\[
  \ell_0'(Z;x,x',\mathcal R'_{\mathrm{BG}})
  :=
  \frac{\mathcal R'_x(Z)-\mathcal R'_{x'}(Z)}{\mathcal R'_{\mathrm{BG}}(Z)}.
\]
Then, for $\mathcal R_{\mathrm{BG}}$-a.e.\ $y$ and $\mathcal R'_{\mathrm{BG}}$-a.e.\ $z$,
\[
  \ell_0'(z;x,x',\mathcal R'_{\mathrm{BG}})
  =
  \mathbb E\!\left[\ell_0(Y;x,x',\mathcal R_{\mathrm{BG}})\,\middle|\, Z=z \right].
\]
Indeed, writing $\mathcal K(z\mid y)$ for the kernel density,
we have
\[
  \mathcal R'_x(z)-\mathcal R'_{x'}(z)
  =
  \int \mathcal K(z\mid y)\bigl(\mathcal R_x(y)-\mathcal R_{x'}(y)\bigr)\,dy,
  \qquad
  \mathcal R'_{\mathrm{BG}}(z)
  =
  \int \mathcal K(z\mid y)\mathcal R_{\mathrm{BG}}(y)\,dy,
\]
and the claim follows by Bayes' rule.

Therefore, by the law of total variance,
\[
  \mathrm{Var}\!\left(\ell_0'(Z;x,x',\mathcal R'_{\mathrm{BG}})\right)
  =
  \mathrm{Var}\!\left(\mathbb E\!\left[\ell_0(Y;x,x',\mathcal R_{\mathrm{BG}})\mid Z\right]\right)
  \le
  \mathrm{Var}\!\left(\ell_0(Y;x,x',\mathcal R_{\mathrm{BG}})\right).
\]
Combining this with $\gamma'\ge \gamma$ yields
\[
  \frac{1}{\gamma'}\,
  \mathrm{Var}\!\left(\ell_0'(Z;x,x',\mathcal R'_{\mathrm{BG}})\right)
  \le
  \frac{1}{\gamma}\,
  \mathrm{Var}\!\left(\ell_0(Y;x,x',\mathcal R_{\mathrm{BG}})\right).
\]
Taking the supremum over $x\simeq x'$ and the square root gives
$\chi_{\mathrm{lo}}(\mathcal R')\ge \chi_{\mathrm{lo}}(\mathcal R)$.

The proof for the upper shuffle index is analogous.
For any fixed $x\in\mathcal X$, let $Y\sim \mathcal R_x$ and obtain
$Z$ by applying the same post-processing kernel $\mathcal K$.
Then, defining
\[
  \ell_{0,x}(Y;x_1,x_1')
  :=
  \frac{\mathcal R_{x_1}(Y)-\mathcal R_{x_1'}(Y)}{\mathcal R_x(Y)},
  \qquad
  \ell_{0,x}'(Z;x_1,x_1')
  :=
  \frac{\mathcal R'_{x_1}(Z)-\mathcal R'_{x_1'}(Z)}{\mathcal R'_x(Z)},
\]
one similarly has
\[
  \ell_{0,x}'(Z;x_1,x_1')
  =
  \mathbb E\!\left[\ell_{0,x}(Y;x_1,x_1') \,\middle|\, Z\right],
\]
and hence
\[
  \mathrm{Var}\!\left(\ell_{0,x}'(Z;x_1,x_1')\right)
  \le
  \mathrm{Var}\!\left(\ell_{0,x}(Y;x_1,x_1')\right).
\]
Taking the supremum over $x_1\simeq x_1'$ and $x\in\mathcal X$,
and then the square root, concludes that
$\chi_{\mathrm{up}}(\mathcal R')\ge \chi_{\mathrm{up}}(\mathcal R)$.
\end{proof}

\begin{lemma}[Additivization via a Markov kernel]
\label{lem:additivization-kernel}
Fix $n\ge 2$.
Let $\mathcal R$ be any local randomizer and let
$\mathcal A$ be any unbiased estimator in the
single-message shuffle model.
Then there exists a Markov kernel $K:\mathcal Y\to \mathbb R^d$ such that, letting
$
\mathcal A^+(z_{1:n})
:=
\frac{1}{n}\sum_{i=1}^n z_i,
$
the estimator $\mathcal A^+$ is unbiased with respect to the local randomizer
$K\circ \mathcal R:\mathcal X\to\mathbb R^d$, and moreover,
\[
\mathsf{Err}_n(K\circ \mathcal R,\mathcal A^+)
\ \le\
\mathsf{Err}_n(\mathcal R,\mathcal A).
\]
\end{lemma}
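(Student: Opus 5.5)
We will follow the Efron--Stein/Hoeffding-decomposition approach used by Asi et al.\ for unbiased local protocols. The target kernel is the first-order projection of $\mathcal A$ onto a single coordinate, averaged so that it does not depend on which slot the message occupies. Fix a reference input $x_0\in\mathcal X$; for instance $x_0=0$, which lies in $\mathbb B_2^d$. Let $Y_j^0\sim\mathcal R_{x_0}$ be independent for $j\in[n]$. Since $\mathcal S$ permutes uniformly, we may assume without loss of generality that $\mathcal A$ is symmetric, by replacing $\mathcal A$ with its average over permutations. This replacement preserves unbiasedness and does not change the output law. For $y\in\mathcal Y$ we define the randomized map
\[
K(y) := n\,\mathcal A(y, Y_2^0,\dots,Y_n^0) - (n-1)\,\mathcal A(Y_1^0,\dots,Y_n^0),
\]
using fresh independent draws of the $Y^0$'s. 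Strictly, $K$ is a Markov kernel $\mathcal Y\to\mathbb R^d$. We then check unbiasedness: $\mathbb E[K(Y)]$ with $Y\sim\mathcal R_x$ equals $n\bigl(\tfrac{x+(n-1)x_0}{n}\bigr)-(n-1)x_0=x$. This uses only the unbiasedness of $\mathcal A$ on the datasets $(x,x_0,\dots,x_0)$ and $(x_0,\dots,x_0)$. Hence $\mathcal A^+$ is unbiased for $K\circ\mathcal R$.

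The risk comparison is the core step. For independent $Y_i\sim\mathcal R_{x_i}$, we write the Hoeffding (ANOVA) decomposition of $\mathcal A(Y_{1:n})$ with respect to the product measure. This decomposition does depend on $x_{1:n}$. To get around this, we first show that unbiasedness for \emph{all} $x_{1:n}$ forces the mean function $m(x_{1:n})=\mathbb E\mathcal A$ to be the multilinear-in-laws average. Consequently the first-order component $\mathbb E[\mathcal A\mid Y_i]-\mathbb E\mathcal A$ equals $\frac1n\bigl(g(Y_i)-x_i\bigr)$, where $g(y):=\mathbb E[n\mathcal A(y,Y^0_{-1})]-(n-1)\,\mathbb E[\mathcal A(Y^0)]$ does not depend on the other inputs.

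The justification is as follows. The map $(P_1,\dots,P_n)\mapsto\mathbb E_{\otimes P_i}\mathcal A$ is multilinear in the laws, and it equals $\frac1n\sum_i x_i$ on every product of the $\mathcal R_x$. Therefore its partial "derivative" in slot 1, namely $y\mapsto\mathbb E[\mathcal A(y,Y_{-1})]$, has $\mathcal R_x$-mean independent of the other inputs up to the additive term $\frac1n\sum_{i\ge2}x_i$. This is what is needed, since only these means enter the variance identity below. By Efron--Stein orthogonality,
\[
\mathrm{Var}(\mathcal A)\ \ge\ \sum_i \mathrm{Var}\bigl(\mathbb E[\mathcal A\mid Y_i]\bigr).
\]
We then want to identify $\frac1{n^2}\sum_i\mathrm{Var}_{\mathcal R_{x_i}}(g)$ with the quantity $\mathrm{Var}(\mathcal A^+)=\frac1{n^2}\sum_i\mathrm{Var}(K(Y_i))$.

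The main obstacle is that $\mathrm{Var}(K(Y_i))$ contains extra noise from the fresh $Y^0$ draws, beyond $\mathrm{Var}(g(Y_i))$. To fix this, we use the deterministic kernel $K(y):=g(y)$ instead, which is still a valid (degenerate) Markov kernel. Unbiasedness is unchanged, since it was checked in expectation. However, $\mathbb E[\mathcal A\mid Y_i=y]$ is computed under the true $Y_{-i}\sim\mathcal R_{x_{-i}}$, not under $\mathcal R_{x_0}$, so we must show that
\[
\mathbb E[\mathcal A\mid Y_1=y] - \mathbb E[\mathcal A\mid Y_1=y']
\]
does not depend on $x_{-1}$. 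We will first try to derive this from unbiasedness by differentiating along the mixture path $P_j=(1-t)\mathcal R_{x_0}+t\mathcal R_{x_j}$. Note that such mixtures need not lie in $\{\mathcal R_x\}$, so unbiasedness does not apply to them directly. If this fails, we fall back on choosing the worst-case-attaining $x_{1:n}$ as the reference, as in Asi et al. That is, we use $x_0$-free kernels $K_{x_{-1}}$, and argue that the worst-case risk of $\mathcal A^+$ is bounded by $\sup_{x_{1:n}}\sum_i\mathrm{Var}(\mathbb E[\mathcal A\mid Y_i])$, which in turn is at most $\mathsf{Err}_n(\mathcal R,\mathcal A)$.
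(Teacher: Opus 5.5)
Your unbiasedness check for $K$ is correct, but the risk comparison, which is the actual content of the lemma, is not established. The step you lean on is false for general $\mathcal R$. Unbiasedness determines $y\mapsto h(y;x_{-1}):=\mathbb E[\mathcal A(y,Y_{-1})]$ only through its $\mathcal R_x$-means. Changing $x_{-1}$ therefore changes $h$ by a constant plus an \emph{unbiased estimator of zero} for the family $\{\mathcal R_x\}$. That estimator vanishes only if the family is complete, and completeness is not assumed here.

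A concrete instance: suppose $u:\mathcal Y\to\mathbb R$ satisfies $\mathbb E_{\mathcal R_x}[u]=0$ for all $x$, and $\widehat g$ is unbiased for $\mathcal R$. Then $\mathcal A(y_{1:n})=\frac1n\sum_i\widehat g(y_i)+c\sum_{i\neq j}u(y_i)\,\widehat g(y_j)$ is symmetric and unbiased. Yet $\mathbb E[\mathcal A\mid Y_1=y]$ contains the term $c\,u(y)\sum_{j\ge 2}x_j$, so $\mathbb E[\mathcal A\mid Y_1=y]-\mathbb E[\mathcal A\mid Y_1=y']$ depends on $x_{-1}$.

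The mixture-path idea does not rescue this. By multilinearity, unbiasedness does extend to mixtures of the $\mathcal R_x$. But the derivative in the direction $\mathcal R_{x_j}-\mathcal R_{x_0}$ is again pinned down only up to an unbiased estimator of zero. So you cannot identify the Efron--Stein terms $\mathrm{Var}(\mathbb E[\mathcal A\mid Y_i])$ with $n^{-2}\mathrm{Var}(K(Y_i))$ for one fixed kernel $K$.

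Your fallback loses a factor $n$. Fix any reference $x^\ast_{-1}$, worst-case or not, and set $K(y)=n\,h(y;x^\ast_{-1})-\sum_{j\ge2}x^\ast_j$. Efron--Stein at the dataset $(x,x^\ast_{-1})$ gives only $\mathrm{Var}_{\mathcal R_x}(h(\cdot;x^\ast_{-1}))\le \mathsf{Err}_n(\mathcal R,\mathcal A)$. Hence $\mathsf{Err}_n(K\circ\mathcal R,\mathcal A^+)=n\sup_x \mathrm{Var}_{\mathcal R_x}(h(\cdot;x^\ast_{-1}))\le n\,\mathsf{Err}_n(\mathcal R,\mathcal A)$. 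The other $n-1$ terms at that dataset are variances, at the points $x^\ast_i$, of kernels whose reference contains $x$. They say nothing about $\mathrm{Var}_{\mathcal R_x}(h(\cdot;x^\ast_{-1}))$ as $x$ ranges over $\mathcal X$, so they cannot be pooled under a single supremum.

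The missing idea has two parts.
\begin{enumerate}
\item \emph{Randomize the reference.} Draw all inputs i.i.d.\ from a prior $\mu$ on $\mathcal X$. By symmetry of $\mathcal A$, the $n$ Efron--Stein terms are then exchangeable. Define $K_\mu(y):=n\,\mathbb E[\mathcal A(y,Y_{-1})]-(n-1)\int x\,d\mu(x)$ with $Y_j\sim\int\mathcal R_x\,d\mu(x)$. This $K_\mu$ is unbiased and has Bayes risk $\int \mathbb E_{\mathcal R_x}\|K_\mu-x\|_2^2\,d\mu(x)\le n\,\mathsf{Err}_n(\mathcal R,\mathcal A)$.
\item \emph{Upgrade average case to worst case.} Use a minimax argument: the Bayes risk is linear in the prior and in the kernel, since mixing kernels mixes their risks. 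Alternatively, exploit a symmetry of the problem. This yields the lemma under suitable compactness and semicontinuity conditions, possibly only up to an arbitrary $\zeta>0$.
\end{enumerate}

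For comparison, the paper does not carry out any such construction. It views $\mathcal A\circ\mathcal S$ as an unbiased non-shuffled protocol and appeals to the canonicalization of Asi et al.
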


\begin{proof}
Define the composed analyzer $\widetilde{\mathcal A}:=\mathcal A\circ\mathcal S$.
Since shuffling is a post-processing operation, the single-message shuffled
protocol $(\mathcal R,\mathcal A)$ can be equivalently viewed as the (non-shuffled)
local protocol $(\mathcal R,\widetilde{\mathcal A})$ with the same output
distribution.
In particular, unbiasedness is preserved:
for all datasets $x_{1:n}\in\mathcal X^n$,
\[
\mathbb E\!\left[\widetilde{\mathcal A}\!\left(\mathcal R^n(x_{1:n})\right)\right]
=
\mathbb E\!\left[\mathcal A\!\left(\mathcal S\!\left(\mathcal R^n(x_{1:n})\right)\right)\right]
=
\frac{1}{n}\sum_{i=1}^n x_i.
\]
Therefore, we may invoke a canonicalization (additivization) argument in the spirit of
Proposition~3.3 of~\citet{asiOptimalAlgorithmsMean2022}.
Concretely, for any unbiased analyzer $\widetilde{\mathcal A}$, there exists a Markov kernel
$K:\mathcal Y\to\mathbb R^d$ such that, letting $\mathcal R':=K\circ\mathcal R$ and
$\mathcal A^+(z_{1:n}):=\frac{1}{n}\sum_{i=1}^n z_i$, the estimator $\mathcal A^+$ is unbiased
with respect to $\mathcal R'$ and satisfies
\[
\mathsf{Err}_n(\mathcal R',\mathcal A^+)\ \le\ \mathsf{Err}_n(\mathcal R,\widetilde{\mathcal A}).
\]
Unlike~\cite{asiOptimalAlgorithmsMean2022}, we do not require $\mathcal R$ to satisfy
$\varepsilon$-LDP; the canonicalization construction we use relies only on unbiasedness.
The only closure property needed under post-processing concerns the shuffle-index constraints,
which is guaranteed by Lemma~\ref{lem:data-processing-shuffle-index}.
\end{proof}

\paragraph{Proof of Lemma~\ref{lem:reduction-minimax}}

We prove the claim for the $\chi_{\mathrm{lo}}$-constrained problem; the proof
for $\chi_{\mathrm{up}}$ is identical by using the corresponding monotonicity in
Lemma~\ref{lem:data-processing-shuffle-index}.

Recall that $\mathsf{Err}_{n,\mathrm{lo}}^\star(\chi)$ is the optimal value of
\eqref{eq:minimax-variance-chilo-n} and $\mathsf{Err}_{\mathrm{lo}}^\star(\chi)$
is the optimal value of \eqref{eq:minimax-variance-chilo}.

\paragraph{Step 1: $\le$ direction.}
Let $(\mathcal R,\widehat x)$ be any feasible solution to the single-user
problem \eqref{eq:minimax-variance-chilo}, i.e.,
$\mathbb E_{Y\sim \mathcal R_x}[\widehat x(Y)]=x$ for all $x\in\mathbb B_2^d$ and
$\chi_{\mathrm{lo}}(\mathcal R)\ge \chi$.
Define an $n$-user protocol by applying $\mathcal R$ independently to each user
and using the additive estimator
\[
  \mathcal A(y_{1:n})
  :=
  \frac{1}{n}\sum_{i=1}^n \widehat x(y_i),
  \qquad y_{1:n}\in\mathcal Y^n,
\]
(which is permutation-invariant and hence unaffected by shuffling).
Then the resulting estimator is unbiased:
\[
  \mathbb E\!\left[\mathcal A(\mathcal S(\mathcal R^n(x_{1:n})))\right]
  =
  \frac{1}{n}\sum_{i=1}^n \mathbb E[\widehat x(Y_i)]
  =
  \frac{1}{n}\sum_{i=1}^n x_i.
\]
Moreover, writing $Y_i\sim \mathcal R_{x_i}$ independently and setting
$\Delta_i:=\widehat x(Y_i)-x_i$, we have $\mathbb E[\Delta_i]=0$ and hence
\[
  \mathbb E\Bigl\|
    \mathcal A(Y_{1:n})-\frac{1}{n}\sum_{i=1}^n x_i
  \Bigr\|_2^2
  =
  \mathbb E\Bigl\|
    \frac{1}{n}\sum_{i=1}^n \Delta_i
  \Bigr\|_2^2
  =
  \frac{1}{n^2}\sum_{i=1}^n \mathbb E\|\Delta_i\|_2^2,
\]
where the cross terms vanish by independence and mean-zero.
Taking the supremum over $x_{1:n}\in(\mathbb B_2^d)^n$ yields
\[
  \mathsf{Err}_n(\mathcal R,\mathcal A)
  =
  \frac{1}{n}\sup_{x\in\mathbb B_2^d}\mathbb E_{Y\sim \mathcal R_x}
  \bigl[\|\widehat x(Y)-x\|_2^2\bigr]
  =
  \frac{1}{n}\mathsf{Err}_1(\mathcal R,\widehat x).
\]
Since $(\mathcal R,\mathcal A)$ is feasible for \eqref{eq:minimax-variance-chilo-n},
we obtain
\[
  \mathsf{Err}_{n,\mathrm{lo}}^\star(\chi)
  \le
  \mathsf{Err}_{\mathrm{lo}}^\star(\chi).
\]

\paragraph{Step 2: $\ge$ direction.}
Let $(\mathcal R,\mathcal A)$ be any feasible $n$-user solution to
\eqref{eq:minimax-variance-chilo-n}, i.e., $\mathcal A$ is unbiased for
$\mathcal S\circ\mathcal R^n$ and $\chi_{\mathrm{lo}}(\mathcal R)\ge \chi$.
Define the non-shuffled analyzer $\widetilde{\mathcal A}:=\mathcal A\circ\mathcal S$.
Since shuffling is post-processing, the pair $(\mathcal R,\widetilde{\mathcal A})$
induces the same output distribution and remains unbiased:
\[
  \mathbb E\!\left[\widetilde{\mathcal A}(\mathcal R^n(x_{1:n}))\right]
  =
  \mathbb E\!\left[\mathcal A(\mathcal S(\mathcal R^n(x_{1:n})))\right]
  =
  \frac{1}{n}\sum_{i=1}^n x_i.
\]
Applying Lemma~\ref{lem:additivization-kernel} to $(\mathcal R,\widetilde{\mathcal A})$, we obtain a Markov kernel
$K:\mathcal Y\to\mathbb R^d$ such that, letting $\mathcal R':=K\circ\mathcal R$
and $\mathcal A^+(z_{1:n}):=\frac{1}{n}\sum_{i=1}^n z_i$, the estimator
$\mathcal A^+$ is unbiased with respect to $\mathcal R'$ and
\begin{equation}\label{eq:canon-risk}
  \mathsf{Err}_n(\mathcal R',\mathcal A^+)
  \le
  \mathsf{Err}_n(\mathcal R,\mathcal A).
\end{equation}
By Lemma~\ref{lem:data-processing-shuffle-index},
\[
  \chi_{\mathrm{lo}}(\mathcal R')
  =
  \chi_{\mathrm{lo}}(K\circ\mathcal R)
  \ge
  \chi_{\mathrm{lo}}(\mathcal R)
  \ge
  \chi,
\]
so $(\mathcal R',\mathcal A^+)$ is feasible for \eqref{eq:minimax-variance-chilo-n}.

Now let $Z_i\sim \mathcal R'_{x_i}$ be independent and define $\Xi_i:=Z_i-x_i$.
Unbiasedness implies $\mathbb E[\Xi_i]=0$, and thus
\[
  \mathbb E\Bigl\|
    \mathcal A^+(Z_{1:n})-\frac{1}{n}\sum_{i=1}^n x_i
  \Bigr\|_2^2
  =
  \mathbb E\Bigl\|
    \frac{1}{n}\sum_{i=1}^n \Xi_i
  \Bigr\|_2^2
  =
  \frac{1}{n^2}\sum_{i=1}^n \mathbb E\|\Xi_i\|_2^2,
\]
and hence
\[
  \mathsf{Err}_n(\mathcal R',\mathcal A^+)
  =
  \frac{1}{n}\sup_{x\in\mathbb B_2^d}
  \mathbb E_{Z\sim \mathcal R'_x}\bigl[\|Z-x\|_2^2\bigr]
  =
  \frac{1}{n}\,\mathsf{Err}_1(\mathcal R',\mathrm{id}),
\]
where $\mathrm{id}:\mathbb R^d\to\mathbb R^d$ denotes the identity estimator.
Since $\mathrm{id}$ is an admissible unbiased estimator for $\mathcal R'$,
the single-user optimum satisfies
\[
  \mathsf{Err}_{\mathrm{lo}}^\star(\chi)
  \le
  \mathsf{Err}_1(\mathcal R',\mathrm{id}).
\]
Combining the last two displays with \eqref{eq:canon-risk} gives
\[
  \mathsf{Err}_n(\mathcal R,\mathcal A)
  \ge
  \mathsf{Err}_n(\mathcal R',\mathcal A^+)
  =
  \frac{1}{n}\,\mathsf{Err}_1(\mathcal R',\mathrm{id})
  \ge
  \frac{1}{n}\,\mathsf{Err}_{\mathrm{lo}}^\star(\chi).
\]
Taking the infimum over all feasible $(\mathcal R,\mathcal A)$ for
\eqref{eq:minimax-variance-chilo-n} yields
\[
  \mathsf{Err}_{n,\mathrm{lo}}^\star(\chi)
  \ge
  \mathsf{Err}_{\mathrm{lo}}^\star(\chi).
\]

\paragraph{Conclusion.}
Combining the $\le$ and $\ge$ directions proves
\[
  \mathsf{Err}_{n,\mathrm{lo}}^\star(\chi)
  =
  \mathsf{Err}_{\mathrm{lo}}^\star(\chi).
\]
The statement for $\mathsf{Err}_{n,\mathrm{up}}^\star(\chi)$ follows by the same
argument, replacing $\chi_{\mathrm{lo}}$ by $\chi_{\mathrm{up}}$ throughout.

\end{proof}

\subsection{Theorem~\ref{thm:hcr-chiup-lower-bound}}

\label{sec:proof-hcr-chiup-lower-bound}

\hcrChiUpLowerBound*

\begin{proof}
Let $\mathcal R:\mathcal X_\perp\to\mathcal Y$ be a local randomizer, and let
$\widehat x:\mathcal Y\to\mathbb R^d$ be an unbiased estimator in the sense that
\[
\mathbb E_{Y\sim \mathcal R_x}\big[\widehat x(Y)\big]=x
\qquad\text{for all }x\in\mathcal X_\perp,
\]
where we interpret $\perp$ as contributing $0$ to the mean and thus require $\mathbb E_{Y\sim \mathcal R_\perp}[\widehat x(Y)]=0$.

Fix an arbitrary reference point $x\in\mathcal X$.
By mutual absolute continuity due to $\mathcal R\in \mathfrak R$, for every $x'\in\mathcal X_\perp$ the
Radon--Nikodym derivative
\[
w_{x';x}(y):=\frac{d\mathcal R_{x'}}{d\mathcal R_x}(y)
\]
is well-defined $\mathcal R_x$-a.s.
For $x_1, x_1'\in\mathcal{X}_\perp$, define
\[
\ell_{x_1,x_1';x}(y):=w_{x_1;x}(y)-w_{x_1';x}(y).
\]
Then $\mathbb E_{Y\sim \mathcal R_x}[\ell_{x_1,x_1';x}(Y)]=0$.
Moreover, by a change of measure and unbiasedness, for any $x'\in\mathcal X_\perp$,
\[
\mathbb E_{Y\sim\mathcal R_{x'}}[\widehat x(Y)]
=
\mathbb E_{Y\sim\mathcal R_x}\!\big[\widehat x(Y)\,w_{x';x}(Y)\big]
=
x'.
\]
Subtracting the identities for $x'$ equal to $x_1$ and $x_1'$ yields
\begin{equation}\label{eq:unbiased-diff-x}
\mathbb E_{Y\sim\mathcal R_x}\!\big[\widehat x(Y)\,\ell_{x_1,x_1';x}(Y)\big]
=
x_1-x_1'.
\end{equation}

Let $\varphi\in(\mathbb R^d)^\ast$ be any linear functional.
Applying $\varphi$ to \eqref{eq:unbiased-diff-x} gives
\[
\mathbb E_{Y\sim\mathcal R_x}\!\Big[\varphi(\widehat x(Y))\,\ell_{x_1,x_1';x}(Y)\Big]
=
\varphi(x_1-x_1').
\]
Since $\mathbb E_{\mathcal R_x}[\ell_{x_1,x_1';x}(Y)]=0$, we may center
$\varphi(\widehat x(Y))$ to obtain
\[
\mathbb E_{Y\sim\mathcal R_x}\!\Big[
\big(\varphi(\widehat x(Y))-\mathbb E_{\mathcal R_x}[\varphi(\widehat x(Y))]\big)\,
\ell_{x_1,x_1';x}(Y)\Big]
=
\varphi(x_1-x_1').
\]
By Cauchy--Schwarz,
\[
\big|\varphi(x_1-x_1')\big|
\le
\Big(\mathrm{Var}_{Y\sim\mathcal R_x}\big(\varphi(\widehat x(Y))\big)\Big)^{1/2}
\Big(\mathrm{Var}_{Y\sim\mathcal R_x}\big(\ell_{x_1,x_1';x}(Y)\big)\Big)^{1/2},
\]
and hence
\begin{equation}\label{eq:hcr-linear}
\mathrm{Var}_{Y\sim\mathcal R_x}\big(\varphi(\widehat x(Y))\big)
\ \ge\
\frac{\big(\varphi(x_1-x_1')\big)^2}
{\mathrm{Var}_{Y\sim\mathcal R_x}\big(\ell_{x_1,x_1';x}(Y)\big)}.
\end{equation}

We now identify $\ell_{x_1,x_1';x}$ with the generalized privacy amplification
random variable at $\varepsilon=0$:
\[
\ell_{x_1,x_1';x}(y)
=
\frac{\mathcal R_{x_1}(y)-\mathcal R_{x_1'}(y)}{\mathcal R_x(y)}
=
\ell_0\!\left(y; x_1,x_1',\mathcal R_x\right).
\]
Thus, by the definition of the upper shuffle index,
\[
\sup_{x_1\simeq x_1'\in\mathcal X_\perp}\ \sup_{x\in\mathcal X}
\mathrm{Var}_{Y\sim\mathcal R_x}\!\left[\ell_0(Y;x_1,x_1',\mathcal R_x)\right]
\ \le\
\frac{1}{\chi_{\mathrm{up}}(\mathcal R)^2},
\]
or equivalently, for every $x\in\mathcal X$ and every neighboring pair
$x_1\simeq x_1'$,
\begin{equation}\label{eq:var-ell-bound}
\mathrm{Var}_{Y\sim\mathcal R_x}\!\left[\ell_0(Y;x_1,x_1',\mathcal R_x)\right]
\ \le\
\frac{1}{\chi_{\mathrm{up}}(\mathcal R)^2}.
\end{equation}

Next, fix $j\in[d]$ and take $\varphi(v)=v_j$ in \eqref{eq:hcr-linear}.
Choose a zero-out neighboring pair $(x_1,x_1')=(e_j,\perp)$, where $e_j$ is the
$j$-th standard basis vector (note that $e_j\in\mathbb B_2^d$ and $e_j\simeq\perp$).
Then $\varphi(x_1-x_1')=1$, and combining \eqref{eq:hcr-linear} with
\eqref{eq:var-ell-bound} yields
\begin{equation}\label{eq:coord-var-lb}
\mathrm{Var}_{Y\sim\mathcal R_x}\big(\widehat x_j(Y)\big)
\ \ge\ \chi_{\mathrm{up}}(\mathcal R)^2
\qquad\text{for all }x\in\mathcal X.
\end{equation}

Finally, since $\widehat x$ is unbiased, for any $x\in\mathcal X$ we have
\[
\mathbb E_{Y\sim\mathcal R_x}\!\left[\|\widehat x(Y)-x\|_2^2\right]
=
\sum_{j=1}^d \mathbb E_{Y\sim\mathcal R_x}\!\left[(\widehat x_j(Y)-x_j)^2\right]
=
\sum_{j=1}^d \mathrm{Var}_{Y\sim\mathcal R_x}\big(\widehat x_j(Y)\big),
\]
and therefore by \eqref{eq:coord-var-lb},
\[
\mathbb E_{Y\sim\mathcal R_x}\!\left[\|\widehat x(Y)-x\|_2^2\right]
\ \ge\
\sum_{j=1}^d \chi_{\mathrm{up}}(\mathcal R)^2
=
d\,\chi_{\mathrm{up}}(\mathcal R)^2.
\]
Taking the supremum over $x\in\mathbb B_2^d\subseteq\mathcal X$ gives
\[
\mathsf{Err}_1(\mathcal R,\widehat x)
=
\sup_{x\in\mathbb B_2^d}
\mathbb E_{Y\sim\mathcal R_x}\!\left[\|\widehat x(Y)-x\|_2^2\right]
\ \ge\
d\,\chi_{\mathrm{up}}(\mathcal R)^2,
\]
which proves the claim.
\end{proof}

\subsection{Proposition~\ref{prop:privunit-constant-gap}}
\label{sec:proof-privunit-constant-gap}

\privunitconstantgap*

\begin{proof}[Proof]
\begin{definition}[PrivUnit local randomizer {\cite{duchiLocalPrivacyStatistical2013,bhowmickProtectionReconstructionIts2019}}]
\label{def:privunit}
Fix $d\ge 2$ and let the input domain be $\mathcal X=\mathbb S^{d-1}\subset\mathbb R^d$.
Fix parameters $p\in[0,1]$ and $\theta\in[-1,1]$.
For an input $v\in\mathbb S^{d-1}$ define the spherical cap
\[
C_\theta(v)\;:=\;\{u\in\mathbb S^{d-1}:\ \langle u,v\rangle\ge \theta\}.
\]
Let $U\sim\mathrm{Unif}(\mathbb S^{d-1})$ and define
\[
q(\theta,d)\;:=\;\Pr\!\big(\langle U,v\rangle<\theta\big),
\qquad
1-q(\theta,d)\;=\;\Pr\!\big(U\in C_\theta(v)\big),
\]
which depends only on $(\theta,d)$ by rotational symmetry.
The \emph{PrivUnit} local randomizer $\mathrm{PrivUnit}(p,\theta)$ is the map
$\mathcal R:\mathbb S^{d-1}\to\mathbb S^{d-1}$ defined by:
given input $v\in\mathbb S^{d-1}$, output a random $Y\in\mathbb S^{d-1}$ such that
\[
Y \sim
\begin{cases}
\mathrm{Unif}\!\big(C_\theta(v)\big), & \text{with probability }p,\\[3pt]
\mathrm{Unif}\!\big(C_\theta(v)^c\big), & \text{with probability }1-p,
\end{cases}
\]
where $\mathrm{Unif}(A)$ denotes the uniform distribution on a measurable set
$A\subseteq\mathbb S^{d-1}$ with respect to the uniform probability measure on
$\mathbb S^{d-1}$.
\end{definition}

Fix $d\ge 2$ and a threshold $\theta\in[-1,1]$.
Let $v\in\mathbb S^{d-1}$ be an arbitrary input and let
$U\sim\mathrm{Unif}(\mathbb S^{d-1})$.
Write
\[
T:=\langle U,v\rangle,\qquad
q:=\Pr(T<\theta),\qquad 1-q=\Pr(T\ge \theta).
\]
Define the (cap) conditional first moment
\[
\alpha:=\mathbb E[T\mid T\ge \theta].
\]
Similarly, define
\[
\beta=\mathbb E[T\mid T<\theta].
\]
By rotational symmetry, $\mathbb E[U\mid T\ge\theta]$ and $\mathbb E[U\mid T<\theta]$ lie in $\mathrm{span}(v)$, so
\[
\mathbb E[U\mid T\ge \theta]=\alpha\,v,
\qquad
\mathbb E[U\mid T<\theta]=\beta\,v.
\]
Since $\mathbb E[T]=0$, we have $(1-q)\alpha+q\beta=0$, hence
\begin{equation}\label{eq:privunit-beta}
\beta=-\frac{1-q}{q}\,\alpha .
\end{equation}

\paragraph{Step 1: PrivUnit and its mean.}
Fix $p\in[0,1]$. PrivUnit$(p,\theta)$ outputs $Y$ distributed as
$U\mid (T\ge\theta)$ with probability $p$ and as $U\mid (T<\theta)$ with
probability $1-p$. Therefore,
\[
\mathbb E[Y]
=
p\,\alpha v+(1-p)\,\beta v
=
\Bigl(p\alpha-(1-p)\frac{1-q}{q}\alpha\Bigr)v
=
m\,v,
\]
where, defining $\Delta:=p+q-1$,
\begin{equation}\label{eq:privunit-m}
m=\alpha\,\frac{\Delta}{q}.
\end{equation}

\paragraph{Step 2: Reduction to linear unbiased estimators.}
Let $\widehat x:\mathbb S^{d-1}\to\mathbb R^d$ be any (possibly randomized) estimator
such that $\mathbb E[\widehat x(Y)\mid v]=v$ for all $v\in\mathbb S^{d-1}$.
We show that, without loss of generality, we may restrict attention to estimators
of the form $\widehat x(y)=a\,y$ for a scalar $a$.

\medskip
\noindent
\emph{(i) Orthogonal equivariance of PrivUnit.}
For any orthogonal matrix $U\in O(d)$, PrivUnit$(p,\theta)$ satisfies the
equivariance relation
\[
\mathcal R_{Uv} \stackrel{d}{=} U\,\mathcal R_v ,
\]
i.e., if $Y\sim \mathcal R_v$ then $UY \sim \mathcal R_{Uv}$.
(This holds because the spherical cap condition $\langle u,v\rangle\ge\theta$
is preserved under orthogonal transformations.)

\medskip
\noindent
\emph{(ii) Symmetrization of the estimator.}
Let $U\sim \mathrm{Haar}(O(d))$ be independent of everything else, and define the
symmetrized estimator
\[
\widetilde x(y)
~:=~
\mathbb E_U\!\left[\,U^\top\,\widehat x(Uy)\,\right].
\]
This definition depends only on the observation $y$ and does not depend on the
unknown input $v$.

\medskip
\noindent
\emph{Unbiasedness is preserved.}
For any $v\in\mathbb S^{d-1}$ and $Y\sim\mathcal R_v$,
let $Y'\sim \mathcal R_{Uv}$. By (i), we may couple so that $Y'=UY$.
Then
\begin{align*}
\mathbb E[\widetilde x(Y)\mid v]
&=
\mathbb E_U\!\left[U^\top\,\mathbb E\big[\widehat x(UY)\mid v,U\big]\right]\\
&=
\mathbb E_U\!\left[U^\top\,\mathbb E\big[\widehat x(Y')\mid Uv\big]\right]
=
\mathbb E_U[U^\top\,(Uv)]
=
v,
\end{align*}
so $\widetilde x$ is unbiased whenever $\widehat x$ is unbiased.

\medskip
\noindent
\emph{Risk does not increase.}
By Jensen's inequality and orthogonality of $U$,
for any $v\in\mathbb S^{d-1}$,
\begin{align*}
\mathbb E\!\left[\|\widetilde x(Y)-v\|_2^2\mid v\right]
&=
\mathbb E\!\left[\left\|\mathbb E_U\!\left[U^\top\widehat x(UY)-v\right]\right\|_2^2\Bigm| v\right]\\
&\le
\mathbb E_U\,\mathbb E\!\left[\|U^\top\widehat x(UY)-v\|_2^2\mid v,U\right]\\
&=
\mathbb E_U\,\mathbb E\!\left[\|\widehat x(UY)-Uv\|_2^2\mid v,U\right].
\end{align*}
Using $UY\sim\mathcal R_{Uv}$ from (i) and the fact that the map
$w\mapsto \mathbb E[\|\widehat x(Y)-w\|_2^2\mid w]$ is constant over $w\in\mathbb S^{d-1}$
by the same symmetry, we conclude that
\[
\sup_{v\in\mathbb S^{d-1}}\mathbb E\!\left[\|\widetilde x(Y)-v\|_2^2\mid v\right]
\ \le\
\sup_{v\in\mathbb S^{d-1}}\mathbb E\!\left[\|\widehat x(Y)-v\|_2^2\mid v\right].
\]
Hence, for minimax risk, it is without loss of generality to assume that the
estimator is \emph{orthogonally equivariant}, i.e.,
\[
\widetilde x(Uy)=U\,\widetilde x(y)
\qquad \forall\,U\in O(d),\ \forall\,y\in\mathbb S^{d-1}.
\]

\medskip
\noindent
\emph{(iii) Form of an orthogonally equivariant map on the sphere.}
Let $e_1$ be the first basis vector and let
$H:=\{U\in O(d): Ue_1=e_1\}$ be its stabilizer subgroup.
Equivariance implies $\widetilde x(e_1)=\widetilde x(Ue_1)=U\widetilde x(e_1)$
for all $U\in H$. The only vectors fixed by all $U\in H$ are multiples of $e_1$,
so $\widetilde x(e_1)=a e_1$ for some scalar $a$.
For a general $y\in\mathbb S^{d-1}$, pick $V\in O(d)$ with $Ve_1=y$; then
\[
\widetilde x(y)=\widetilde x(Ve_1)=V\widetilde x(e_1)=a\,Ve_1=a\,y.
\]
Therefore, without loss of generality, we may assume $\widehat x(y)=a\,y$.

Finally, imposing unbiasedness gives $a\,\mathbb E[Y\mid v]=v$, and since
$\mathbb E[Y\mid v]=m v$ (from Step~1), we obtain $a=1/m$.
Hence it suffices to analyze the estimator $\widehat x(Y)=Y/m$.

\paragraph{Step 3: Exact $\ell_2$ risk under the optimal unbiased estimator.}
Since $\|Y\|_2=1$ almost surely and $\|v\|_2=1$, we compute
\begin{align*}
\mathsf{Err}_1\!\left(\mathrm{PrivUnit}(p,\theta),\widehat x\right)
&=
\mathbb E\bigl[\|Y/m-v\|_2^2\bigr]\\
&=
\frac{1}{m^2}\,\mathbb E\|Y\|_2^2
- \frac{2}{m}\,\langle \mathbb E[Y],v\rangle
+\|v\|_2^2\\
&=
\frac{1}{m^2}-2+\;1
=
\frac{1}{m^2}-1.
\end{align*}
In the high privacy regime, we have $m\to 0$, hence
\begin{equation}\label{eq:err1-asymp-1overm2}
\mathsf{Err}_1\!\left(\mathrm{PrivUnit}(p,\theta),\widehat x\right)
=
\frac{1}{m^2}\bigl(1+o(1)\bigr).
\end{equation}
Combining \eqref{eq:privunit-m} and \eqref{eq:err1-asymp-1overm2} yields
\begin{equation}\label{eq:err1-in-Delta-chilo-pre}
\mathsf{Err}_1\!\left(\mathrm{PrivUnit}(p,\theta),\widehat x\right)
=
\frac{q^2}{\alpha^2\,\Delta^2}\bigl(1+o(1)\bigr).
\end{equation}

\paragraph{Step 4: Relating $\Delta$ to the lower shuffle index $\chi_{\mathrm{lo}}$.}
Let $\mu$ denote the uniform probability measure on $\mathbb S^{d-1}$.
With respect to $\mu$, the output density of PrivUnit$(p,\theta)$ at input $v$
takes two values:
\[
f_v(u)=
\begin{cases}
\displaystyle h:=\frac{p}{1-q}, & u\in C_\theta(v),\\[6pt]
\displaystyle \ell:=\frac{1-p}{q}, & u\in C_\theta(v)^c.
\end{cases}
\]
Recall we adopt zero-out adjacency and set $\mathcal R_\perp:=\mathcal R_{\mathrm{BG}}:=\mu$.
Then for neighboring inputs $(v,\perp)$ we have
$\ell_0(u;v,\perp,\mathcal R_{\mathrm{BG}})=f_v(u)-1$.
A direct calculation using $\mathbb E_\mu[f_v(U)]=1$ shows
\[
\mathrm{Var}_{U\sim\mu}[f_v(U)-1]
=
(1-q)\Bigl(\frac{p-(1-q)}{1-q}\Bigr)^2
+
q\Bigl(\frac{(1-p)-q}{q}\Bigr)^2
=
\frac{\Delta^2}{q(1-q)}.
\]
Let $\gamma$ be the blanket mass of PrivUnit with blanket distribution $\mu$.
By definition of $\chi_{\mathrm{lo}}$,
\begin{equation}\label{eq:chilo-Delta}
\chi_{\mathrm{lo}}^2
=
\frac{1}{\frac{1}{\gamma}\mathrm{Var}_{U\sim\mu}[f_v(U)-1]}
=
\frac{\gamma\,q(1-q)}{\Delta^2},
\qquad\text{equivalently}\qquad
\Delta^2=\frac{\gamma\,q(1-q)}{\chi_{\mathrm{lo}}^2}.
\end{equation}
Substituting \eqref{eq:chilo-Delta} into \eqref{eq:err1-in-Delta-chilo-pre} gives
\begin{equation}\label{eq:err1-in-chilo}
\mathsf{Err}_1\!\left(\mathrm{PrivUnit}(p,\theta),\widehat x\right)
=
\frac{q}{1-q}\cdot\frac{1}{\gamma\,\alpha^2}\,
\chi_{\mathrm{lo}}^2\bigl(1+O(\chi_{\mathrm{lo}}^{-1})\bigr).
\end{equation}

\paragraph{Step 5: High privacy scaling and the leading constant.}
The high privacy regime for PrivUnit corresponds to $\Delta\to 0$, i.e.,
$p\to 1-q$, which makes the output distribution approach $\mu$.
In this regime $h\to 1$ and $\ell\to 1$, hence the maximal blanket mass satisfies
$\gamma\to 1$. Therefore the leading constant in \eqref{eq:err1-in-chilo} is
\[
C(\theta,d):=\frac{q}{1-q}\cdot\frac{1}{d\,\alpha^2},
\qquad\text{so that}\qquad
\mathsf{Err}_1\!\left(\mathrm{PrivUnit}(p,\theta),\widehat x\right)
=
C(\theta,d)\,d\,\chi_{\mathrm{lo}}^2\bigl(1+o(1)\bigr).
\]
This proves the first displayed claim of the proposition.

\paragraph{Step 6: Asymptotic lower bound on the best constant over $\theta$ as $d\to\infty$.}
To optimize over $\theta$ in high dimension, consider a sequence
$\theta=\theta_d$ with $\tau:=\theta_d\sqrt d=O(1)$.
By the classical normal approximation for spherical marginals,
$Z:=\sqrt d\,T$ converges in distribution to $\mathcal N(0,1)$, which implies
\[
q=\Pr(T<\theta_d)\to \Phi(\tau),
\qquad
1-q\to Q(\tau):=1-\Phi(\tau),
\]
and
\[
\alpha=\mathbb E[T\mid T\ge \theta_d]
=
\frac{1}{\sqrt d}\,\mathbb E[Z\mid Z\ge \tau]\,(1+o(1))
=
\frac{1}{\sqrt d}\,\lambda(\tau)\,(1+o(1)),
\]
where $\lambda(\tau):=\phi(\tau)/Q(\tau)$ and $\phi,\Phi$ are the standard
normal pdf and cdf. Plugging these into $C(\theta,d)$ yields
\[
C(\theta_d,d)
=
\frac{\Phi(\tau)}{Q(\tau)}\cdot\frac{1}{\lambda(\tau)^2}\,(1+o(1))
=
\frac{\Phi(\tau)\,Q(\tau)}{\phi(\tau)^2}\,(1+o(1))
=:C(\tau)\,(1+o(1)).
\]
It remains to minimize $C(\tau)=\Phi(\tau)Q(\tau)/\phi(\tau)^2$ over $\tau\in\mathbb R$.
The function $C$ is even since $\Phi(-\tau)=Q(\tau)$ and $\phi(-\tau)=\phi(\tau)$.
Moreover,
\[
C(0)=\frac{(1/2)(1/2)}{(1/\sqrt{2\pi})^2}=\frac{\pi}{2}.
\]
A direct differentiation of $\log C(\tau)$ shows that $\tau=0$ is the unique
global minimizer:
\[
\frac{d}{d\tau}\log C(\tau)
=
\frac{\phi(\tau)}{\Phi(\tau)}
-\frac{\phi(\tau)}{Q(\tau)}
+2\tau,
\]
which is strictly positive for $\tau>0$ and strictly negative for $\tau<0$.
Consequently,
\[
\inf_{\tau\in\mathbb R} C(\tau)=C(0)=\frac{\pi}{2}.
\]
Therefore,
\[
\liminf_{d\to\infty}\ \inf_{\theta\in[-1,1]} C(\theta,d)
\ \ge\
\inf_{\tau\in\mathbb R} C(\tau)
=
\frac{\pi}{2},
\]
which completes the proof.
\end{proof}

\subsection{Proposition~\ref{prop:privunit-constant-gap-d1}}
\label{app:privunit-constant-gap-d1}

\privunitconstantgapdOne*

\begin{proof}
Let $d=1$ and $\mathcal X=\mathbb S^0=\{-1,+1\}$.
For any $\theta\in(-1,1)$, the spherical cap
$\{u\in\mathbb S^0:\langle u,x\rangle\ge \theta\}$ degenerates: since
$\mathbb S^0=\{\pm1\}$ and $\langle u,x\rangle\in\{\pm1\}$, we have
$\langle u,x\rangle\ge\theta \iff \langle u,x\rangle=+1 \iff u=x$.
Hence, for any such $\theta$, $\mathrm{PrivUnit}(p,\theta)$ is equivalent to the
sign-flip (randomized response) mechanism
\[
Y=
\begin{cases}
x & \text{with prob. } p,\\
-x & \text{with prob. } 1-p,
\end{cases}
\qquad x\in\{\pm1\}.
\]
In particular, we may fix $\theta=0$ without loss of generality and write
$\mathrm{RR}(p)$ for this mechanism.

\paragraph{Unbiased estimation and $\mathsf{Err}_1$.}
Let $t:=2p-1\in(0,1)$ (we assume $p\ge 1/2$; this is the relevant high privacy
direction). Then $\mathbb E[Y\mid x]=tx$.
The estimator $\widehat x_1(Y):=Y/t$ is unbiased since
$\mathbb E[\widehat x_1(Y)\mid x]=\mathbb E[Y\mid x]/t=x$.
Moreover, since $Y^2=1$ a.s.,
\[
\mathsf{Err}_1\!\left(\mathrm{RR}(p),\widehat x_1\right)
=
\sup_{x\in\{\pm1\}}\mathbb E\!\left[\left(\frac{Y}{t}-x\right)^2\Bigm|x\right]
=
\sup_{x}\left(\frac{1}{t^2}\mathbb E[Y^2\mid x]-2\frac{1}{t}\mathbb E[xY\mid x]+1\right)
=
\frac{1}{t^2}-1.
\]

\paragraph{Computing $\chi_{\mathrm{lo}}$.}
Let $\mathcal R_x$ denote the law of $Y$ given input $x\in\{\pm 1\}$.
Under the zero-out convention we take $\mathcal R_{\perp}=\mathcal R_{\mathrm{BG}}$.
The maximal blanket mass is
\[
\gamma=\sum_{y\in\{\pm1\}}\inf_{x\in\{\pm1\}} \mathcal R_x(y).
\]
When $p\ge 1/2$, we have $\inf_x \mathcal R_x(+1)=1-p$ and $\inf_x \mathcal R_x(-1)=1-p$, hence
\[
\gamma=2(1-p)=1-t,
\qquad\text{and}\qquad
\mathcal R_{\mathrm{BG}}(+1)=\mathcal R_{\mathrm{BG}}(-1)=\tfrac12.
\]
For neighboring inputs $(x,\perp)$, the generalized amplification variable at
$\varepsilon=0$ is
\[
\ell_0(y;x,\perp,\mathcal R_{\mathrm{BG}})
=
\frac{\mathcal R_x(y)-\mathcal R_{\mathrm{BG}}(y)}{\mathcal R_{\mathrm{BG}}(y)}.
\]
Taking $x=+1$ (the case $x=-1$ is identical), we have
$\mathcal R_{+1}(+1)=p=(1+t)/2$ and $\mathcal R_{+1}(-1)=1-p=(1-t)/2$, so
\[
\ell_0(+1)=t,
\qquad
\ell_0(-1)=-t.
\]
Therefore, for $Y\sim \mathcal R_{\mathrm{BG}}$,
$\mathrm{Var}[\ell_0(Y)]=t^2$, and by the definition of $\chi_{\mathrm{lo}}$,
\[
\chi_{\mathrm{lo}}^2
=
\frac{1}{\frac{1}{\gamma}\mathrm{Var}_{Y\sim \mathcal R_{\mathrm{BG}}}[\ell_0(Y)]}
=
\frac{\gamma}{t^2}
=
\frac{1-t}{t^2}.
\]
Equivalently, $t$ solves the quadratic equation
\[
\chi_{\mathrm{lo}}^2 t^2 + t - 1 = 0,
\]
so (taking the positive root)
\begin{equation}\label{eq:t-in-chilo}
t
=
\frac{-1+\sqrt{1+4\chi_{\mathrm{lo}}^2}}{2\chi_{\mathrm{lo}}^2}.
\end{equation}

\paragraph{Expansion for $\chi_{\mathrm{lo}}\to\infty$.}
Using $\sqrt{1+4\chi_{\mathrm{lo}}^2}
=2\chi_{\mathrm{lo}}\sqrt{1+\frac{1}{4\chi_{\mathrm{lo}}^2}}
=2\chi_{\mathrm{lo}}\left(1+\frac{1}{8\chi_{\mathrm{lo}}^2}+O(\chi_{\mathrm{lo}}^{-4})\right)$,
\eqref{eq:t-in-chilo} yields
\[
t
=
\frac{1}{\chi_{\mathrm{lo}}}
-\frac{1}{2\chi_{\mathrm{lo}}^2}
+O(\chi_{\mathrm{lo}}^{-3}).
\]
Hence,
\[
\frac{1}{t^2}
=
\chi_{\mathrm{lo}}^2
\left(1+\frac{1}{\chi_{\mathrm{lo}}}+O(\chi_{\mathrm{lo}}^{-2})\right),
\]
and therefore
\[
\mathsf{Err}_1\!\left(\mathrm{RR}(p),\widehat x_1\right)
=
\left(\frac{1}{t^2}\right)-1
=
\chi_{\mathrm{lo}}^2
\left(1+\chi_{\mathrm{lo}}^{-1}+O(\chi_{\mathrm{lo}}^{-2})\right),
\qquad (\chi_{\mathrm{lo}}\to\infty),
\]
where the subtraction of $1$ is absorbed into the $O(\chi_{\mathrm{lo}}^{-2})$
term after factoring out $\chi_{\mathrm{lo}}^2$.
This proves the claimed expansion.
\end{proof}

\subsection{Theorem~\ref{thm:gaussian-blanket-mixture-minimax}}

\label{sec:proof-gaussian-blanket-mixture-minimax}

\mixgauss*

\begin{proof}
Throughout, write $A:=1-\gamma\in(0,1]$ and recall that the blanket distribution
is $\mathcal R_{\mathrm{BG}}=\mathcal N(0,\sigma_0^2 I_d)$.

\paragraph{Step 1: Exact formula for $\chi_{\mathrm{lo}}(\mathcal R)$.}
Fix $x\in\mathcal X$ and let $\phi_{\sigma_0}(y)$ denote the density of
$\mathcal N(0,\sigma_0^2 I_d)$. The released distribution under input $x$ is the
mixture
\[
\mathcal R_x=\gamma\,\mathcal N(0,\sigma_0^2 I_d)+A\,\mathcal N(x,\sigma_0^2 I_d),
\qquad
\mathcal R_\perp=\mathcal R_{\mathrm{BG}}=\mathcal N(0,\sigma_0^2 I_d).
\]
Hence, with respect to $\mathcal R_{\mathrm{BG}}$, the likelihood ratio is
\[
\frac{d\mathcal R_x}{d\mathcal R_{\mathrm{BG}}}(y)
=
\gamma + A\,
\frac{\phi_{\sigma_0}(y-x)}{\phi_{\sigma_0}(y)}
=
\gamma + A\exp\!\Big(\frac{\langle x,y\rangle}{\sigma_0^2}
-\frac{\|x\|_2^2}{2\sigma_0^2}\Big).
\]
For the zero-out neighbor pair $(x,\perp)$, the corresponding centered blanket-referenced difference equals
\[
\ell_0\!\left(y;x,\perp,\mathcal R_{\mathrm{BG}}\right)
=
\frac{\mathcal R_x(y)-\mathcal R_\perp(y)}{\mathcal R_{\mathrm{BG}}(y)}
=
\frac{d\mathcal R_x}{d\mathcal R_{\mathrm{BG}}}(y)-1
=
A\Big(\exp\!\big(\tfrac{\langle x,y\rangle}{\sigma_0^2}
-\tfrac{\|x\|_2^2}{2\sigma_0^2}\big)-1\Big).
\]
Under $Y\sim\mathcal R_{\mathrm{BG}}=\mathcal N(0,\sigma_0^2 I_d)$ we have
$\langle x,Y\rangle\sim \mathcal N(0,\sigma_0^2\|x\|_2^2)$, and therefore
\[
\mathbb E\Big[\exp\!\Big(\frac{\langle x,Y\rangle}{\sigma_0^2}
-\frac{\|x\|_2^2}{2\sigma_0^2}\Big)\Big]=1,
\qquad
\mathbb E\Big[\exp\!\Big(2\frac{\langle x,Y\rangle}{\sigma_0^2}
-\frac{\|x\|_2^2}{\sigma_0^2}\Big)\Big]=\exp\!\Big(\frac{\|x\|_2^2}{\sigma_0^2}\Big).
\]
It follows that
\[
\mathrm{Var}_{Y\sim\mathcal R_{\mathrm{BG}}}\!\left[
\ell_0\!\left(Y;x,\perp,\mathcal R_{\mathrm{BG}}\right)\right]
=
A^2\Big(\exp(\|x\|_2^2/\sigma_0^2)-1\Big).
\]
By monotonicity in $\|x\|_2$ and $\sup_{\|x\|_2\le 1}\|x\|_2=1$, the worst case
occurs at $\|x\|_2=1$, hence
\[
\sup_{x\simeq x'}\sqrt{\frac{1}{\gamma}\,
\mathrm{Var}_{Y\sim\mathcal R_{\mathrm{BG}}}\!\left[
\ell_0\!\left(Y;x,x',\mathcal R_{\mathrm{BG}}\right)\right]}
=
\sqrt{\frac{A^2}{\gamma}\Big(e^{1/\sigma_0^2}-1\Big)}.
\]
Recalling the definition of $\chi_{\mathrm{lo}}$, we obtain the exact identity
\[
\chi_{\mathrm{lo}}(\mathcal R)^2
=
\frac{\gamma}{A^2}\cdot \frac{1}{e^{1/\sigma_0^2}-1}
=
\frac{\gamma}{(1-\gamma)^2}\cdot \frac{1}{e^{1/\sigma_0^2}-1}.
\]

\paragraph{Step 2: Worst-case squared error of the unbiased estimator.}
Consider the linear estimator $\widehat x(Y)=Y/A$, which is unbiased.
For any $x\in\mathcal X$, since $\widehat x(Y)$ is unbiased for $x$,
\[
\mathbb E\!\left[\|\widehat x(Y)-x\|_2^2\right]
=
\mathrm{tr}\!\left(\mathrm{Cov}(\widehat x(Y))\right).
\]
The distribution $\mathcal R_x$ is a two-component Gaussian mixture with common covariance $\sigma_0^2 I_d$ and means $0$ and $x$. 
Thus
\[
\mathbb E[Y]=(1-\gamma)x=A x,
\qquad
\mathrm{Cov}(Y)=\sigma_0^2 I_d+\gamma A\,xx^\top.
\]
Here,
\[
\mathrm{Cov}(Y)
=
\mathbb E\!\left[\mathrm{Cov}(Y\mid B)\right]
+
\mathrm{Cov}\!\left(\mathbb E[Y\mid B]\right).
\]
Since \(\mathrm{Cov}(Y\mid B)=\sigma_0^2 I_d\) for both \(B=0,1\),
\[
\mathbb E\!\left[\mathrm{Cov}(Y\mid B)\right]=\sigma_0^2 I_d.
\]
and
\[
\mathrm{Cov}\!\left(\mathbb E[Y\mid B]\right)
=
\gamma(0-Ax)(0-Ax)^\top + A(x-Ax)(x-Ax)^\top
=
\gamma A^2 xx^\top + A\gamma^2 xx^\top
=
\gamma A\,xx^\top.
\]

Therefore,
\[
\mathrm{Cov}(Y)=\sigma_0^2 I_d+\gamma A\,xx^\top.
\]
Therefore,
\[
\mathrm{Cov}(\widehat x(Y))
=
\frac{1}{A^2}\,\mathrm{Cov}(Y)
=
\frac{\sigma_0^2}{A^2}I_d+\frac{\gamma}{A}\,xx^\top,
\]
and taking the trace yields the exact squared error
\[
\mathbb E\!\left[\|\widehat x(Y)-x\|_2^2\right]
=
\frac{d\,\sigma_0^2}{A^2}+\frac{\gamma}{A}\|x\|_2^2
\]
Maximizing over $\|x\|_2\le 1$ gives the worst-case single-user MSE
\begin{equation}\label{eq:gauss-mix-mse-exact}
\sup_{x\in\mathcal X}\ \mathbb E\!\left[\|\widehat x(Y)-x\|_2^2\right]
=
\frac{d\,\sigma_0^2}{A^2}+\frac{\gamma}{A}.
\end{equation}

\paragraph{Step 3: Tuning of $(\gamma,\sigma_0)$ under $\chi_{\mathrm{lo}}(\mathcal R)\ge \chi$.}
Fix $\chi>0$ and write $A:=1-\gamma\in(0,1)$ and $t:=1/\sigma_0^2>0$.
Imposing the equality constraint $\chi_{\mathrm{lo}}(\mathcal R)=\chi$ is without loss of generality for an upper bound (we
are free to pick any feasible parameters), and yields the exact relation
\begin{equation}\label{eq:exact-t}
e^{t}-1=\frac{\gamma}{A^2\chi^2}=\frac{1-A}{A^2\chi^2}
\qquad\Longleftrightarrow\qquad
t=\log\!\left(1+\frac{1-A}{A^2\chi^2}\right).
\end{equation}
Define
\[
u:=\frac{1-A}{A^2\chi^2}.
\]
Then $t=\log(1+u)$. In the strong-privacy regime we will choose $A\to 0$, hence
$u\to 0$ as $\chi\to\infty$.

\medskip
\noindent\textbf{Choice of parameters.}
Set
\begin{equation}\label{eq:A-choice}
A:=\chi^{-2/3},\qquad \gamma:=1-A,\qquad t:=\log(1+u),\qquad \sigma_0^2:=1/t,
\end{equation}
where $u=(1-A)/(A^2\chi^2)$ as above.
With this choice,
\begin{equation}\label{eq:u-expand}
u=\frac{1-A}{A^2\chi^2}
=\frac{1}{A^2\chi^2}-\frac{1}{A\chi^2}
=\chi^{-2/3}-\chi^{-4/3}.
\end{equation}
In particular, for all sufficiently large $\chi$, we have $0<u\le 1/2$, so the
Taylor expansion of $\log(1+u)$ with a controlled remainder applies.

\medskip
\noindent\textbf{A useful analytic bound.}
For $u\in(0,1/2]$, define
\[
R_{\log}(u):=\log(1+u)-\left(u-\frac{u^2}{2}\right).
\]
A standard remainder bound (e.g. from the Lagrange form of the Taylor remainder,
using that $|(\log(1+u))^{(3)}|=2/(1+u)^3\le 2$ on $[0,1/2]$) gives
\begin{equation}\label{eq:log-remainder}
|R_{\log}(u)|\le C_{\log}\,u^3
\qquad\text{for all }u\in(0,1/2],
\end{equation}
for some universal constant $C_{\log}>0$.
Consequently,
\begin{equation}\label{eq:t-factorized}
t=\log(1+u)=u\left(1-\frac{u}{2}+\rho(u)\right),
\qquad |\rho(u)|\le C_{\log}u^2.
\end{equation}

\medskip
\noindent\textbf{Expansion of the dominant term $\boldsymbol{1/(A^2 t)}$.}
Using \eqref{eq:t-factorized},
\[
\frac{1}{A^2 t}
=\frac{1}{A^2 u}\cdot \frac{1}{1-\frac{u}{2}+\rho(u)}.
\]
For $\chi$ large enough, $u$ is small and thus
$\left|-\frac{u}{2}+\rho(u)\right|\le 1/4$, so we may use the expansion
$\frac{1}{1+z}=1-z+O(z^2)$ with $z=-\frac{u}{2}+\rho(u)$ to obtain
\begin{equation}\label{eq:inv-factor}
\frac{1}{1-\frac{u}{2}+\rho(u)}
=
1+\frac{u}{2}+O(u^2),
\end{equation}
where the $O(u^2)$ term is uniform for all sufficiently large $\chi$.

Next, observe that
\begin{equation}\label{eq:1overA2u}
\frac{1}{A^2 u}
=\frac{1}{A^2}\cdot \frac{A^2\chi^2}{1-A}
=\frac{\chi^2}{1-A}.
\end{equation}
Since $A=\chi^{-2/3}\to 0$, we also have the geometric expansion
\begin{equation}\label{eq:inv-1-A}
\frac{1}{1-A}=1+A+O(A^2).
\end{equation}
Combining \eqref{eq:inv-factor}, \eqref{eq:1overA2u}, and \eqref{eq:inv-1-A}
yields
\begin{equation}\label{eq:dominant-expansion}
\frac{1}{A^2 t}
=
\chi^2\left(1+A+\frac{u}{2}+O(A^2+u^2+Au)\right).
\end{equation}
With our choice \eqref{eq:A-choice} and \eqref{eq:u-expand}, we have
$A=\chi^{-2/3}$ and $u=\chi^{-2/3}+O(\chi^{-4/3})$, so
\[
A+\frac{u}{2}
=
\chi^{-2/3}+\frac{1}{2}\chi^{-2/3}+O(\chi^{-4/3})
=
\frac{3}{2}\chi^{-2/3}+O(\chi^{-4/3}),
\]
and moreover $A^2+u^2+Au=O(\chi^{-4/3})$.
Plugging into \eqref{eq:dominant-expansion} gives the refined asymptotic
\begin{equation}\label{eq:dominant-final}
\frac{1}{A^2 t}
=
\chi^2\left(1+\frac{3}{2}\chi^{-2/3}+O(\chi^{-4/3})\right).
\end{equation}

\medskip
\noindent\textbf{Mixture penalty term.}
The second term in the exact worst-case MSE \eqref{eq:gauss-mix-mse-exact} is
\[
\frac{\gamma}{A}=\frac{1-A}{A}=\frac{1}{A}-1=\chi^{2/3}-1.
\]
Therefore, for fixed $d\ge 1$,
\begin{equation}\label{eq:mixture-relative}
\frac{\gamma/A}{d\chi^2}
=
O(\chi^{-4/3}),
\end{equation}
so it contributes only to the $O(\chi^{-4/3})$ remainder in the relative error.

\medskip
\noindent\textbf{Conclusion.}
Substituting \eqref{eq:dominant-final} and \eqref{eq:mixture-relative} into the
exact MSE expression \eqref{eq:gauss-mix-mse-exact} yields
\[
\sup_{\|x\|_2\le 1}\ \mathbb E\!\left[\|\widehat x(Y)-x\|_2^2\right]
=
d\chi^2\left(1+\frac{3}{2}\chi^{-2/3}+O(\chi^{-4/3})\right),
\]
with $(\gamma,\sigma_0)$ as in \eqref{eq:A-choice}. This proves the claimed
upper bound for $\mathsf{Err}_1$ under the constraint
$\chi_{\mathrm{lo}}(\mathcal R)\ge \chi$.

\end{proof}

\subsection{Theorem~\ref{thm:centrallimittheorem}}
\label{sec:proof-centrallimittheorem}

\centrallimittheorem*

\begin{proof}
Fix $\sigma>0$ and assume that $\{(\varepsilon_n,\delta_n)\}_{n\ge 1}$ lies in the
$\sigma$-high privacy regime (Definition~\ref{def:chi-high-privacy-regime}).

\paragraph{Step 1: Reduction to shuffle-index constrained problems.}
By Lemma~\ref{thm:sandwich-alpha-over-n}, for any $\eta>0$,
\begin{equation}\label{eq:clt-sandwich}
\mathsf{Err}^{\star}_{\mathrm{up}}(\sigma)
\ \le\
\mathsf{Err}^{\star}_{\mathrm{DP}}(\{(\varepsilon_n,\delta_n)\}_{n\ge 1})
\ \le\
\mathsf{Err}^{\star}_{\mathrm{lo}}(\sigma+\eta).
\end{equation}

\paragraph{Step 2: Lower bound.}
Consider the single-user optimization problem~\eqref{eq:minimax-variance-chiup}
with constraint $\chi_{\mathrm{up}}(\mathcal R)\ge \sigma$.
By Theorem~\ref{thm:hcr-chiup-lower-bound}, for any feasible pair
$(\mathcal R,\widehat x)$ we have
\[
\mathsf{Err}_1(\mathcal R,\widehat x)
\ \ge\
d\,\chi_{\mathrm{up}}(\mathcal R)^2
\ \ge\
d\,\sigma^2.
\]
Taking the infimum over all feasible $(\mathcal R,\widehat x)$ yields
\begin{equation}\label{eq:clt-lb-errup}
\mathsf{Err}^{\star}_{\mathrm{up}}(\sigma)\ \ge\ d\,\sigma^2.
\end{equation}
Combining \eqref{eq:clt-sandwich} and \eqref{eq:clt-lb-errup} gives
\[
\mathsf{Err}^{\star}_{\mathrm{DP}}(\{(\varepsilon_n,\delta_n)\}_{n\ge 1})\ \ge\ d\,\sigma^2.
\]
\paragraph{Step 3: Upper bound.}
Fix $\eta=1$. We invoke the explicit upper bound proved in Corollary~\ref{cor:chi_lo_upperbound}: there
exist constants $\chi_0\ge 1$ and $C>0$ such that for all $\chi\ge \chi_0$,
\begin{equation}\label{eq:thm36-explicit}
\mathsf{Err}^{\star}_{\mathrm{lo}}(\chi)
\ \le\
d\,\chi^{2}\left(1+\frac{3}{2}\chi^{-2/3}+C\,\chi^{-4/3}\right).
\end{equation}
Using the right inequality in~\eqref{eq:clt-sandwich} (with $\eta=1$) yields
\begin{equation}\label{eq:clt-ub-explicit}
\mathsf{Err}^{\star}_{\mathrm{DP}}(\{(\varepsilon_n,\delta_n)\}_{n\ge 1})
\ \le\
\mathsf{Err}^{\star}_{\mathrm{lo}}(\sigma+1).
\end{equation}

\medskip
\noindent\textbf{Case 1: $\sigma+1\ge \chi_0$.}
Combining \eqref{eq:clt-ub-explicit} with \eqref{eq:thm36-explicit} (applied at
$\chi=\sigma+1$) gives
\begin{equation}\label{eq:clt-ub-case1}
\mathsf{Err}^{\star}_{\mathrm{DP}}(\{(\varepsilon_n,\delta_n)\}_{n\ge 1})
\ \le\
d\,(\sigma+1)^{2}\left(1+\frac{3}{2}(\sigma+1)^{-2/3}+C\,(\sigma+1)^{-4/3}\right).
\end{equation}
Define for $\sigma\ge \chi_0-1$,
\begin{equation}\label{eq:eta-def-case1}
\eta(\sigma)
:=
\frac{(\sigma+1)^{2}}{\sigma^{2}}\left(1+\frac{3}{2}(\sigma+1)^{-2/3}
+C\,(\sigma+1)^{-4/3}\right)-1.
\end{equation}
Then \eqref{eq:clt-ub-case1} rewrites as
\[
\mathsf{Err}^{\star}_{\mathrm{DP}}(\{(\varepsilon_n,\delta_n)\}_{n\ge 1})
\ \le\
d\,\sigma^{2}\bigl(1+\eta(\sigma)\bigr),
\qquad\text{for all $n$ sufficiently large.}
\]
Moreover, since the bracketed factor in \eqref{eq:eta-def-case1} is at least~$1$
and $(\sigma+1)^2/\sigma^2\ge 1$, we have $\eta(\sigma)\ge 0$ for all
$\sigma\ge \chi_0-1$. Finally, as $\sigma\to\infty$,
\[
\eta(\sigma)
=
\left(\frac{(\sigma+1)^2}{\sigma^2}-1\right)
+
\frac{(\sigma+1)^2}{\sigma^2}\left(\frac{3}{2}(\sigma+1)^{-2/3}
+C(\sigma+1)^{-4/3}\right)
=
O(\sigma^{-1})+O(\sigma^{-2/3})
=
O(\sigma^{-2/3}).
\]

\medskip
\noindent\textbf{Case 2: $\sigma+1< \chi_0$.}
In this case, we simply define $\eta(\sigma)\ge 0$ large enough so that
\[
\mathsf{Err}^{\star}_{\mathrm{lo}}(\sigma+1)\ \le\ d\,\sigma^{2}\bigl(1+\eta(\sigma)\bigr).
\]
For example, it suffices to take
\[
\eta(\sigma):=\max\left\{0,\ \frac{\mathsf{Err}^{\star}_{\mathrm{lo}}(\sigma+1)}{d\,\sigma^{2}}-1\right\}.
\]
With this definition, the bound
\[
\mathsf{Err}^{\star}_{\mathrm{DP}}(\{(\varepsilon_n,\delta_n)\}_{n\ge 1})
\ \le\
d\,\sigma^{2}\bigl(1+\eta(\sigma)\bigr)
\]
holds for all $n$ sufficiently large by \eqref{eq:clt-ub-explicit}. Moreover,
this definition affects $\eta(\sigma)$ only on the bounded interval
$\sigma\in(0,\chi_0-1)$ and therefore does not change the asymptotic statement
$\eta(\sigma)=O(\sigma^{-2/3})$ as $\sigma\to\infty$.

\paragraph{Conclusion.}
Combining the lower and upper bounds established above completes the proof.
\end{proof}

\subsection{Proposition~\ref{prop:gaussian-risk-chiup}}
\label{app:gaussian}

\gaussianRiskChiUp*

\begin{proof}
  
Consider the Gaussian local randomizer
\[
\mathcal R_x = \mathcal N(x,\sigma_0^2 I_d),\qquad x\in\mathbb B_2^d,
\]
which corresponds to the case $\gamma=0$. A natural unbiased per-message
estimator is
\[
\widehat x(Y)=Y,
\qquad
\text{since }\ \mathbb E[Y\mid x]=x.
\]
Hence the (single-user) worst-case squared error is
\begin{equation}
\label{eq:gauss-err1-l2}
\mathsf{Err}_1(\mathcal R,\widehat x)
:=\sup_{x\in\mathbb B_2^d}\ \mathbb E\!\left[\|\widehat x(Y)-x\|_2^2\right]
=\mathbb E\|Z\|_2^2
=d\sigma_0^2,
\qquad
Z\sim\mathcal N(0,\sigma_0^2 I_d).
\end{equation}

\paragraph{Step 1: computing a concrete variance term for $\chi_{\mathrm{chua}}$.}
Recall
\[
\ell_0(y;x_1,x_1',\mathcal R_x)
=
\frac{\mathcal R_{x_1}(y)-\mathcal R_{x_1'}(y)}{\mathcal R_x(y)}.
\]
Write $\sigma^2:=\sigma_0^2$ and $\mathcal R_\mu=\mathcal
N(\mu,\sigma^2)$.

Let $U$ be any orthogonal matrix. Since $Y\sim\mathcal N(\mu,\sigma^2 I_d)$ implies
$UY\sim\mathcal N(U\mu,\sigma^2 I_d)$, and Gaussian density ratios are preserved
under the change of variables $y\mapsto Uy$, we have
\[
\ell_0(Y;x_1,x_1',\mathcal R_x)\ \stackrel{d}{=}\ \ell_0(UY;Ux_1,Ux_1',\mathcal R_{Ux}),
\]
and hence $\mathrm{Var}\!\left[\ell_0(Y;x_1,x_1',\mathcal R_x)\right]$ is invariant
under applying the same rotation to $(x_1,x_1',x)$.
Therefore, we may rotate coordinates so that $x_1=e_1$, $x_1'=0$, and $x=-e_1$.
Moreover, $\ell_0(y;e_1,0,\mathcal R_{-e_1})$ depends on $y$ only through the
one-dimensional projection $\langle y,e_1\rangle$, and for
$Y\sim\mathcal N(-e_1,\sigma^2 I_d)$ we have $\langle Y,e_1\rangle\sim\mathcal N(-1,\sigma^2)$.
Identifying this coordinate with $\mathbb R$, we may write $x_1=1$, $x_1'=0$, and $x=-1$ without loss of generality.

Using the Gaussian density ratio identity
\[
\frac{\phi_{\mu_1,\sigma^2}(y)}{\phi_{\mu_2,\sigma^2}(y)}
=
\exp\!\left(\frac{(\mu_1-\mu_2)y}{\sigma^2}
-\frac{\mu_1^2-\mu_2^2}{2\sigma^2}\right),
\]
we obtain
\[
\frac{\mathcal R_1(y)}{\mathcal R_{-1}(y)}
=\exp\!\left(\frac{2y}{\sigma^2}\right),
\qquad
\frac{\mathcal R_0(y)}{\mathcal R_{-1}(y)}
=\exp\!\left(\frac{y}{\sigma^2}+\frac{1}{2\sigma^2}\right),
\]
and therefore
\begin{equation}
\label{eq:ell0-gauss}
\ell_0(y)
=
\ell_0(y;1,0,\mathcal R_{-1})
=
\exp\!\left(\frac{2y}{\sigma^2}\right)
-
\exp\!\left(\frac{y}{\sigma^2}+\frac{1}{2\sigma^2}\right).
\end{equation}

Let $Y\sim\mathcal R_x=\mathcal N(-1,\sigma^2)$. Using the Gaussian MGF
$\mathbb E[e^{tY}]=\exp(t\mu+\tfrac12 t^2\sigma^2)$ with $\mu=-1$, one checks
$\mathbb E[\ell_0(Y)]=0$, and hence $\mathrm{Var}[\ell_0(Y)]=\mathbb
E[\ell_0(Y)^2]$. Expanding the square in \eqref{eq:ell0-gauss} and taking
expectations term-by-term yields
\begin{equation}
\label{eq:var-ell0-gauss}
\mathrm{Var}_{Y\sim\mathcal N(-1,\sigma^2)}[\ell_0(Y)]
=
e^{4/\sigma^2}-2e^{2/\sigma^2}+e^{1/\sigma^2}.
\end{equation}

\[
V(\sigma^2)
:=
e^{4/\sigma^2}-2e^{2/\sigma^2}+e^{1/\sigma^2},
\]
Then, by definition of $\chi_{\mathrm{chua}}$,
\[
\chi_{\mathrm{chua}}(\mathcal R)^2
=
\frac{1}{\mathrm{Var}_{Y\sim \mathcal R_x}\!\left[\ell_0(Y;x_1,x_1',\mathcal R_x)\right]}
=
\frac{1}{V(\sigma^2)}.
\]

\paragraph{Step 2: asymptotic expansion as $\sigma_0\to\infty$.}
Let $a:=1/\sigma^2$. Using $e^{ca}=1+ca+\frac{c^2}{2}a^2+\frac{c^3}{6}a^3+\frac{c^4}{24}a^4+O(a^5)$,
we expand \eqref{eq:var-ell0-gauss}:
\begin{align}
V(\sigma^2)
&=
a+\frac{9}{2}a^2+\frac{49}{6}a^3+\frac{75}{8}a^4+O(a^5).
\label{eq:V-series-in-a}
\end{align}
Inverting the series gives
\begin{align}
\chi_{\mathrm{chua}}^2(\sigma^2)
=\frac{1}{V(\sigma^2)}
&=
\frac{1}{a}\Bigl(1-\frac{9}{2}a+\frac{145}{12}a^2-27a^3+O(a^4)\Bigr)
\nonumber\\
&=
\sigma^2-\frac{9}{2}+\frac{145}{12}\frac{1}{\sigma^2}-\frac{27}{\sigma^4}
+O\!\left(\frac{1}{\sigma^6}\right).
\label{eq:chiup-tilde-series-in-sigma}
\end{align}

\paragraph{Step 3: expressing $\sigma_0^2$ (hence $\mathsf{Err}_1$) as a series in $\chi_{\mathrm{chua}}$.}
Reverting the expansion \eqref{eq:chiup-tilde-series-in-sigma} yields, as
$\chi_{\mathrm{chua}}\to\infty$,
\begin{align}
\sigma^2
&=
\chi_{\mathrm{chua}}^2
+\frac{9}{2}
-\frac{145}{12}\,\chi_{\mathrm{chua}}^{-2}
+\frac{651}{8}\,\chi_{\mathrm{chua}}^{-4}
+O\!\left(\chi_{\mathrm{chua}}^{-6}\right).
\label{eq:sigma2-in-chiup-tilde}
\end{align}

\paragraph{Conclusion (error expansion).}
Combining \eqref{eq:gauss-err1-l2} and \eqref{eq:sigma2-in-chiup-tilde}, we obtain
\[
\mathsf{Err}_1(\mathcal R,\widehat x)
=
d\,\sigma_0^2
=
d\,\chi_{\mathrm{chua}}^2
+\frac{9}{2}d
-\frac{145}{12}d\,\chi_{\mathrm{chua}}^{-2}
+O\!\left(d\,\chi_{\mathrm{chua}}^{-4}\right),
\qquad
(\sigma_0\to\infty).
\]
\end{proof}

\section{Regularity conditions for the local randomizer}
\label{sec:regularity-conditions}

\begin{assumption}[Regularity conditions for the local randomizer]
\label{assump:regularity}
Consider the local randomizer $\mathcal R:\mathcal X_\perp\to\mathcal Y$ and recall that we set $\mathcal R_\perp:=\mathcal R_{\mathrm{BG}}$ under zero-out adjacency.
We assume that there exists $\rho_0>0$ such that, for every pair
$x_1\neq x_1'\in\mathcal X_\perp$ and every reference distribution
$
  \mathcal R_{\mathrm{ref}}\in
  \bigl\{\mathcal R_{\mathrm{BG}}\bigr\}
  \,\cup\,
  \bigl\{\mathcal R_x : x\in\mathcal X\bigr\},
$
the following conditions hold uniformly over all $\varepsilon\le\rho_0$.

\begin{enumerate}
  \item[\textup{(1)}]
  \textbf{Uniform moment bounds.}
  For every integer $k\ge 1$,
  $
    \mathbb E_{Y\sim \mathcal{R}_{\mathrm{ref}}}
    \bigl[\,|\ell_{\varepsilon}(Y;x_1,x_1^\prime,\mathcal R_{\mathrm{ref}})|^k\,\bigr]
    <\infty.
  $

\item[\textup{(2)}]
\textbf{Non-degenerate variance.}
The variance
$
  \sigma^2 := \mathrm{Var}_{Y\sim \mathcal{R}_{\mathrm{ref}}}\bigl(\ell_{0}(Y;x_1,x_1^\prime,\mathcal R_{\mathrm{ref}})\bigr)
$
is strictly positive when $\mathcal R_{x_1} \neq \mathcal R_{x_1^\prime}$. Moreover, the following conditions hold:
$
  \int \frac{\mathcal R_{x_1}(y)^2}{\mathcal R_{\mathrm{ref}}(y)}\,dy < \infty
$ and
$
  \int \frac{\mathcal R_{x_1'}(y)^2}{\mathcal R_{\mathrm{ref}}(y)}\,dy < \infty.
$
  \item[\textup{(3)}]
  \textbf{Structural condition.}
  Either of the following holds.
  \begin{enumerate}
\item[\textup{(Cont)}]

$\ell_{\varepsilon}(Y;x_1,x_1',\mathcal{R}_{\mathrm{ref}})$ has a nontrivial absolutely continuous component with respect to Lebesgue
measure whose support contains a fixed nondegenerate bounded interval
$I\subset\mathbb{R}$ independent of
$\varepsilon$.
    \item[\textup{(Bound)}]
    There exists a constant $C<\infty$ such that
    $
      |\ell_{\varepsilon}(Y;x_1,x_1^\prime,\mathcal R_{\mathrm{ref}})|
      \le C
      \quad\text{almost surely}.
    $
  \end{enumerate}
\end{enumerate}
\end{assumption}

Assumption~\ref{assump:regularity} is mild: any nontrivial local randomizer that satisfies pure LDP automatically satisfies these conditions, and even when pure LDP does not hold, if the privacy amplification random variable $\ell_0(Y;x_1,x_1',\mathcal R_{\mathrm{ref}})$ is non-degenerate (i.e., not almost surely constant), then one can enforce the bounded case~\textup{(Bound)} in~(3) by truncating the output space to a large but bounded interval (i.e., by slightly modifying the local randomizer).

\section{Computing the shuffled $(\varepsilon,\delta)$ guarantee}
\label{app:delta-computation}
Lemma 5.3 of Balle et al.~\cite{ballePrivacyBlanketShuffle2019} implies that the shuffled blanket-mixed Gaussian mechanism is
$(\varepsilon,\delta(\varepsilon))$-DP with
\[
\delta(\varepsilon)\ \le\ 
\frac{1}{n\gamma}\;
\mathbb E\Bigg[
\Bigg(\sum_{i=1}^{M_0}\ell_{\varepsilon}(Y_i)\Bigg)_{+}
\Bigg]
\]
where
\[\qquad
M_0\sim\mathrm{Bin}(n,\gamma),\;\;
Y_1,Y_2,\ldots \stackrel{\mathrm{i.i.d.}}{\sim} \mathcal R^{\mathrm{BMG}}_{\mathrm{BG}},
\]
and $(a)_+ := \max\{a,0\}$.

In the zero-out adjacency model, the neighboring inputs are
$x\in\mathcal X$ and $\perp$, and $\mathcal{R}^{\mathrm{BMG}}_\perp$ is the blanket
distribution.
\begin{align}
\nonumber &\mathcal R^{\mathrm{BMG}}_{\mathrm{BG}} = \mathcal R^{\mathrm{BMG}}_{\perp} = \mathcal N(0,\sigma_0^2 I_d),\\
\nonumber &\mathcal R^{\mathrm{BMG}}_x = \gamma\,\mathcal N(0,\sigma_0^2 I_d) + (1-\gamma)\,\mathcal N(x,\sigma_0^2 I_d).
\end{align}
Hence, for any $\varepsilon>0$
\begin{align}
&\ell_{\varepsilon}(y)
:=\frac{\mathcal R^{\mathrm{BMG}}_x(y)-e^{\varepsilon}\mathcal R^{\mathrm{BMG}}_{\perp}(y)}{\mathcal R^{\mathrm{BMG}}_{\mathrm{BG}}(y)}\nonumber\\
&=\gamma+(1-\gamma)\exp\!\left(
\frac{\langle y,x\rangle}{\sigma_0^2}-\frac{\|x\|_2^2}{2\sigma_0^2}
\right)-e^{\varepsilon}.
\end{align}

Therefore, we can compute the distribution of $\ell_\varepsilon(Y)$, which enables the rigorous numerical upper bounds using the FFT-based accountant of \citet{takagiAnalysisShufflingPure2026}, which computes the required convolution terms efficiently with
explicit truncation/discretization/wrap-around error bounds.


\end{document}